\documentclass{article}

\PassOptionsToPackage{numbers, compress}{natbib}



\usepackage[final]{neurips_2023}


\usepackage[utf8]{inputenc} 
\usepackage[T1]{fontenc}    
\usepackage{hyperref}       
\usepackage{url}            
\usepackage{booktabs}       
\usepackage{amsfonts}       
\usepackage{nicefrac}       
\usepackage{microtype}      
\usepackage{xcolor}         
\usepackage{graphicx} 
\usepackage{amsthm}
\usepackage{amsmath}
\usepackage{amssymb}
\usepackage{mathtools}
\usepackage{multirow}

\usepackage[capitalize,noabbrev]{cleveref}

\newtheorem{theorem}{Theorem}[section]

\newtheorem{lemma}[theorem]{Lemma}
\newtheorem{corollary}[theorem]{Corollary}

\newcommand{\cai}[1]{\textcolor{blue}{CZ: #1}}
\newcommand{\mset}[1]{\{\!\{#1\}\!\}}


\usepackage{amsmath,amsfonts,bm}









\def\eqref#1{equation~\ref{#1}}









\def\1{\bm{1}}








\def\vv{{\bm{v}}}



\DeclareMathAlphabet{\mathsfit}{\encodingdefault}{\sfdefault}{m}{sl}
\SetMathAlphabet{\mathsfit}{bold}{\encodingdefault}{\sfdefault}{bx}{n}













\title{Facilitating Graph Neural Networks with Random Walk on Simplicial Complexes}

%

\author{
  Cai Zhou\\
  Tsinghua University\\
  \texttt{zhouc20@mails.tsinghua.edu.cn}\\
  \And
  Xiyuan Wang\\
  Peking University\\
  \texttt{wangxiyuan@pku.edu.cn} 
  \And
  Muhan Zhang\\
  Peking University\\
  \texttt{muhan@pku.edu.cn}
  %
}

\begin{document}

\maketitle

\begin{abstract}

Node-level random walk has been widely used to improve Graph Neural Networks. However, there is limited attention to random walk on edge and, more generally, on $k$-simplices. This paper systematically analyzes how random walk on different orders of simplicial complexes (SC) facilitates GNNs in their theoretical expressivity. First, on $0$-simplices or node level, we establish a connection between existing positional encoding (PE) and structure encoding (SE) methods through the bridge of random walk. Second, on $1$-simplices or edge level, we bridge edge-level random walk and Hodge $1$-Laplacians and design corresponding edge PE respectively. In the spatial domain, we directly make use of edge level random walk to construct EdgeRWSE. Based on the spectral analysis of Hodge $1$-Laplcians, we propose Hodge1Lap, a permutation equivariant and expressive edge-level positional encoding. Third, we generalize our theory to random walk on higher-order simplices and propose the general principle to design PE on simplices based on random walk and Hodge Laplacians. Inter-level random walk is also introduced to unify a wide range of simplicial networks. Extensive experiments verify the effectiveness of our random walk-based methods.
\end{abstract}

\section{Introduction}

Graph neural networks (GNNs) have recently achieved great success in tasks with graph-structured data, benefiting many theoretical application areas, including combinatorial optimization, bio-informatics, social-network analysis, etc.~\citep{BronsteinBLSV16, DBLP:journals/corr/abs-2003-03123, DBLPCombine}. Two important aspects to evaluate GNN models are their theoretical expressivity in distinguishing non-isomorphic graphs, and their performance on real-world tasks. Positional encoding (PE) and structure encoding (SE) are widely adopted methods to enhance both theoretical expressivity and real-world performance of GNNs. Generally, PE encodes the information of the nodes' local or global positions, while SE provides information about local or global structures in the graph. For example, \citet{RethinkingGTLap} uses eigenvectors of the graph Laplacian, \citet{RWSELearnable} proposes to use diagonal elements of the $t$-step random walk matrix, and \citet{GSN} manually count some predefined structures. There are also some methods based on pair-wise node distances, such as the shortest path distance \citep{DistanceEncoding}, the heat kernel \citep{WLInfiniteheatkernel}, and the graph geodesic \citep{GraphiTEGrandomwalkkernel}. Although some work theoretically analyzes some of these methods \citep{RethinkingBiconnectivityResitance}, there are still some left-out methods, and people lack a unified perspective to view all these PE and SE designs. Moreover, most existing methods focus only on node data, while PE and SE on edge data as well as some higher-order topological structures are waited to be studied.

In addition to PE and SE, geometric deep learning has recently become a central topic. Researchers are inspired by concepts of differential geometry and algebraic topology, which resulted in many works on simplices and simplicial complexes \citep{MPSimplicialN, CWNetworks, ConvolutionalLearningOnSimplicial}. Despite their capability to deal with higher-order structures, these simplicial networks should follow orientation symmetry, which brings difficulties in their applications in undirected graphs. This work connects these two separate areas via a central concept: random walk on simplicial complexes. On the one hand, by introducing concepts of higher-order simplicial complexes, we can design more PE and SE methods that are both theoretically and practically powerful. On the other hand, PE and SE greatly facilitate simplicial data and benefit graph learning. 

In summary, we first connect a number of existing PE and SE methods through the bridge of node-level random walk on $0$-simplices. Then, for $1$-simplices or edges, we design two novel sign and basis invariant edge-level PE and SE, namely EdgeRWSE and Hodge1Lap. EdgeRWSE uses an edge-level random walk directly to capture structure information, while Hodge1Lap is based on spectral analysis of Hodge $1$ Laplacian, which is closely related to random walk on edges. We further generalize our theory to random walk on higher-order and inter-order simplices to facilitate graph and simplicial learning. Our methods achieve State-Of-The-Art or highly competitive performance on several datasets and benchmarks. Code is available at \href{https://github.com/zhouc20/HodgeRandomWalk}{https://github.com/zhouc20/HodgeRandomWalk}.

\section{Related work}\label{sectionrelatedwork}

\paragraph{Theoretical expressivity and Weisfeiler-Lehman test.} Weisfeiler-Lehman tests are a classical family of algorithms to distinguish non-isomorphic graphs. Previous work has built connections between the expressivity of GNNs and the WL hierarchy. Some classical conclusions include that for $k\geq2$, $k+1$-dimensional WL is more powerful than $k$-WL. \citep{HowPowerfulGNN} proves that traditional message-passing neural networks (MPNN) are not more powerful than $1$-WL. There is another variation of the WL test called the Folklore Weisfeiler-Lehman (FWL) test, and $k$-FWL is equivalent to $k$-WL in expressivity for $k\geq 1$.

\paragraph{Symmetry in graph and simplicial learning.} Symmetry is a central topic in graph and simplicial learning. In graph learning, node features and edge features need to be permutation (i.e., relabeling of nodes or edges) equivariant, while the graph features should be permutation invariant. In simplicial learning, one needs to further orientation symmetry \citep{ConvolutionalLearningOnSimplicial} in an oriented simplicial complex (SC). The incidence relations and the simplicial adjacencies in an oriented SC are altered when the orientations are reversed. The $k$-form remains invariant to this transformation, while the features of $k$-simplices are equivariant in terms of the basis. \cite{SignandBasis} also state the standard that the graph-level functions (and in the context of SC, $k$-forms) should be invariant to both sign and basis (either of orientation or of space), which is a basic rule for our PE and SE designs.



\section{Preliminary}\label{sectionpreliminary}

\paragraph{Graphs.}

We denote a graph as $G(V, E, A)$, where $V, E$ is the set of nodes and the set of edges, respectively, and $A$ is the adjacency matrix for the nodes. For convenience, we use $n=|V|$ and $m=|E|$ to represent the number of nodes and edges in the graph $G(V,E,A)$. In an undirected graph, for any $u,v\in V$, we have $(u,v)\in E\Leftrightarrow (v,u)\in E$. Let $\mathcal N(v, G)=\{u\in V|(u,v)\in E\}$ denote the set of neighbors of node $v$ in graph $G$. Let diagonal matrix $D=diag(d_1,...,d_n)$, where $d_i$ is the degree of node $v_i$. 

The transition matrix of a typical random walk at node level is $P=D^{-1}A$, which indicates that in each step the walk moves from the current node $v$ to one of its neighboring nodes $u\in \mathcal N(v, G)$ with equal probabilities. Consequently, a $t$ step of the aforementioned random walk corresponds to a transition matrix $P^t$. 

\paragraph{Discrete Hodge Laplacian of abstract simplicial complex.}

An abstract simplicial complex $\mathcal K$ on a finite set $V$ is a collection of subsets of $V$ that is closed under inclusion. In our paper, $V$ will be a vertex set $[n]=\{1,2,...,n\}$ if without special statement. An element of cardinality $k+1$ is called a $k$-face or $k$-simplex of $\mathcal K$. For instance, $0$-faces are usually called vertices, $1$-faces are directed edges, and $2$-faces are 3-cliques (triangles) with an orientation. We denote the collection of all $k$-faces of $\mathcal K$ as $S_k(\mathcal K)$. The dimension of a $k$-face is $k$, and the dimension of a complex $\mathcal K$ is defined as the maximum dimension of the faces in $\mathcal K$. 

The definition of neighbors of simplices is crucial in this paper. Two $k+1$-simplices sharing a collective $k$-face are called $k$-down neighbors, and two $k$-simplices sharing a collective $k+1$-simplex are called $k+1$-up neighbors. Generally, a face $F$ is chosen as an ordering on its vertices and is said to be oriented, denoted by $[F]$. For any permutation element $\sigma \in \mathcal G_{k+1}$ where $\mathcal G_{k+1}$ is the symmetric group of permutations on $\{0, ..., k\}$, two orders of vertices transformed by $\sigma$ are said to determine the same orientation if $\sigma$ is an even permutation and opposite if $\sigma$ is odd.

In the Hilbert space, the matrix representations of boundary and coboundary operators are adjacency matrices of order $k$ and $k+1$ simplices. In order to keep coordinate with most existing literature, we write the adjacent matrix of $k$-th and $k+1$-th simplices as $\mathbf B_{k+1}\in\mathbb R^{|S_{k}|\times |S_{k+1}|}$. $\mathbf B_{k+1}[i,j]=1$ if the $i$-th $k$-simplex and $j$-th $k+1$-simplex are adjacent and share the same direction, $\mathbf B_{k+1}[i,j]=-1$ if adjacent with opposite directions, and $0$ if they are not adjacent. For example, $\mathbf B_1$ is the node-to-edge incidence matrix. 

In discrete Hodge-deRham theory, the $k$-th order Hodge Laplacian is defined as
\begin{equation}
    \mathbf L_k=\mathbf B_k^*\mathbf B_k + \mathbf B_{k+1} \mathbf B_{k+1}^*
\end{equation}
where $\mathbf B_k^*=\mathbf B_k^T$ is the adjoint of $\mathbf B_k$ and is equivalent to the transpose of $\mathbf B_k$ in Hilbert space. A special case is that when $k=0$, $\mathbf B_0$ is not defined and $\mathbf L_0=\mathbf B_1 \mathbf B_1^*=\mathbf D-\mathbf A$ is exactly the graph Laplacian. We refer readers to \cref{subsubsecproofHodge1Lap} for an illustrative calculation example of Hodge Laplacians. In our following texts, we will make use of higher-order Hodge Laplacians such as $\mathbf L_1$ rather than previously used $\mathbf L_0$ alone. 








The kernel space of $\mathbf L_k$ is called the $k$-th cohomology group: $\tilde{\mathcal{H}}^k(\mathcal K,\mathbb R):= {\rm ker} (\mathbf B_{k+1}^*) / {\rm im}(\mathbf B_{k}^*) \cong {\rm ker} (\mathbf B_{k+1}^*) \cap {\rm ker} (\mathbf B_{k})= {\rm ker} (\mathbf L_k)$. We will write $\tilde{\mathcal{H}}^k(\mathcal K,\mathbb R)$ simply as $\tilde{\mathcal{H}}^k$ without causing confusion. The kernel spaces of Hodge Laplacians are closely associated with harmonic functions and will play an important role in our following analysis. Particularly, the multiplicity of zero eigenvalues of $\mathbf L_k$, or the dimension of null space of Hodge $k$-Laplacian ${\rm ker}(\mathbf L_k)$, is called the $k$-th Betti number $\beta_k$~\citep{ComputingBettiNumbers}. This is exactly the number of cycles composed of $k$-simplicials that are not induced by a $k$-boundary, or intuitively, $k$-dimensional "holes" in the simplicial complex $\mathcal K$. For example, zero eigenvalues and their eigenvectors of $\mathbf L_0$ are associated with the $0$-th cohomology group of the graph, corresponding to the connected components of the graph. The zero eigenvalues and eigenvectors of $\mathbf L_1$ are associated with cycles (in the usual sense), and those of $\mathbf L_2$ correspond to cavities. We refer readers to \cref{subsubsecproofHodge1Lap} for detailed explanations and illustrative examples of cohomology groups.

\section{Random walk on 0-simplices}\label{section0simplices}

Random walk on $0$-simplices or at node level has been studied systematically. Previous work has established comprehensive analysis on the theoretical properties of node-level random walk, which provide theoretical insights into the design of random walk-based methods. However, there is still limited research on the theoretical expressivity of random walk-based positional encoding (PE) and structure encoding (SE) methods. In this section, we establish connections between several PE and SE with node-level random walk, and provide theoretical expressive power bounds for them. 






\paragraph{RWSE.} \citep{RetGKRWSE} and \citet{RWSELearnable} propose a structure encoding method based on node-level random walk, which we denote as RWSE. Concretely, RWSE considers $K$ steps of random walk at the node level of the graph, obtaining $\mathbf P, \mathbf P^2,...,\mathbf P^K$. Then the method only takes into account each node's return probabilities to itself, i.e. the diagonal elements of $\mathbf P^k, k=1, 2,...,K$. For each node $v_i$, the RWSE feature is $h_i^{RWSE}=[\mathbf P_{ii}, \mathbf P_{ii}^2, ..., \mathbf P_{ii}^K]$. Compared with encoding methods based on graph Laplacian eigenvalues and eigenvectors, this method is sign and basis invariant. It internally captures some structure information within $K$-hops and achieves impressive results in experiments \citep{GPS}. However, there are limited investigations on the theoretical expressivity of RWSE and its extensions. Here, we provide a theoretical bound of positional and structure encoding methods based on random walk transition matrix $\mathbf P$.


\begin{theorem}\label{theoremRWSE<2FWL}
    RWSE is strictly less powerful than $2$-FWL, i.e. RWSE $\prec$ $2$-FWL.
\end{theorem}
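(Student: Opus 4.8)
The statement $\mathrm{RWSE}\prec 2\text{-FWL}$ unpacks into two claims: an \emph{upper bound} $\mathrm{RWSE}\preceq 2\text{-FWL}$, i.e. any two graphs separated by the multiset of RWSE node features $\mset{h_i^{RWSE}:i\in V}$ are also separated by the stable $2$-FWL coloring; and \emph{strictness}, i.e. some pair of non-isomorphic graphs is separated by $2$-FWL but receives identical RWSE multisets. The plan is to prove the upper bound by showing that $2$-FWL can reconstruct every entry $\mathbf P^k_{ij}$, and to prove strictness by exhibiting a ``walk-indistinguishable'' pair that $2$-FWL still tells apart.

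For the upper bound I would argue that the stable $2$-FWL color of an ordered pair $(i,j)$ determines $\mathbf P^k_{ij}$ for all $k$, by induction on $k$. The base data are immediate: the initial color of $(i,j)$ encodes $\mathbf{1}[i=j]$ and the adjacency entry $A_{ij}$, and after one refinement round the color of $(i,j)$ additionally determines the degree $d_i$ of its first endpoint, since the first-coordinate marginal of the aggregated multiset $\mset{(c(i,l),c(l,j)):l\in V}$ counts how many $l$ are neighbours of $i$. For the inductive step I use the recurrence
\[
\mathbf P^{k+1}_{ij}=\sum_{l\in V}\mathbf P^{k}_{il}\,\frac{A_{lj}}{d_l},
\]
and observe that the $2$-FWL aggregation at $(i,j)$ runs over exactly the multiset $\mset{(c(i,l),c(l,j)):l\in V}$: by the induction hypothesis $c(i,l)$ determines $\mathbf P^{k}_{il}$, while $c(l,j)$ determines $A_{lj}$ and $d_l$, so the whole sum is a function of the refined color. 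Since $k\le K$ requires only finitely many rounds and the stable coloring refines every finite-round coloring, the multiset of diagonal colors $\mset{c(i,i):i\in V}$ determines $\mset{h_i^{RWSE}:i\in V}$, which gives $\mathrm{RWSE}\preceq 2\text{-FWL}$.

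For strictness I would produce two non-isomorphic graphs with a common RWSE multiset that $2$-FWL separates. The cleanest supply comes from cospectral, walk-regular graphs of a common degree $d$ (for instance vertex-transitive graphs): there $\mathbf P^k_{ii}=(A^k)_{ii}/d^k$ is independent of $i$ and equals $\tfrac1n\sum_j(\lambda_j/d)^k$, a function of the adjacency spectrum alone, so any two such cospectral graphs carry identical RWSE features at every node. It then remains to pick such a pair that is non-isomorphic yet distinguished by $2$-FWL; classical ``walk-equivalent'' pairs from the graph-kernel literature (such as decalin versus bicyclopentyl, which share the same node-level walk statistics) illustrate the same phenomenon in the non-regular case.

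The main obstacle is precisely this separating example, not the upper bound. The difficulty is a genuine tension: the most symmetric cospectral families, the strongly regular graphs, are exactly those $2$-FWL \emph{cannot} distinguish, so one must find a pair symmetric enough to equalize all per-vertex return probabilities yet asymmetric enough for the pairwise refinement to detect a difference. Discharging strictness therefore reduces to verifying, for a concrete chosen pair, that some $2$-FWL-recoverable invariant --- for instance a short-cycle count read off from the stable pair-coloring --- differs between the two graphs; this verification is the delicate part of the argument.
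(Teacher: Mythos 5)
Your upper-bound half is correct and is essentially the paper's own argument: the paper likewise proves by induction that the $2$-FWL color of an ordered pair $(u,v)$ after $t$ rounds determines $(D^{-1}A)^k_{uv}$ for all $k\le t$, using exactly the fact that the aggregated multiset $\mset{(c(u,w),c(w,v))\mid w\in V}$ recovers degrees and lets one perform one matrix multiplication per round. No issues there.

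The strictness half, however, has a genuine gap: you never actually produce a verified separating pair, and the two routes you sketch both fail as stated. First, the concrete pair you name, decalin versus bicyclopentyl, is \emph{not} RWSE-equivalent: decalin's carbon skeleton (two fused hexagons) is bipartite, so every odd-step return probability vanishes, whereas bicyclopentyl contains $5$-cycles and hence has nodes with $\mathbf P^5_{ii}>0$; RWSE separates them already at $K=5$. Second, the cospectral walk-regular route runs head-on into the tension you yourself identify --- the canonical cospectral highly symmetric families (strongly regular graphs such as Rook's vs.\ Shrikhande) are precisely the ones $2$-FWL \emph{cannot} separate --- and you leave the existence of a cospectral, walk-regular, non-isomorphic, $2$-FWL-distinguishable pair as an unverified ``delicate part.'' Since you correctly observe that this example is the whole difficulty of the theorem, leaving it undischarged means the proof is incomplete.

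For comparison, the paper resolves this with a bespoke non-regular construction: a $4$-node square backbone in which each backbone vertex is joined by a biclique $K_{1,6}$ to all six vertices of a gadget that is either $X$ (two disjoint triangles) or $Y$ (a six-cycle); the two graphs differ only in which backbone vertices are adjacent relative to which gadget they attach to. The proof then shows by a first-return-time analysis that the return-probability vector of every backbone vertex is the same in both graphs (the geometric return law through a $K_{1,6}$ is identical whether the gadget is $X$ or $Y$), and similarly that all $X$-vertices share one RWSE vector and all $Y$-vertices another, so the two graphs have identical RWSE multisets $(4,12,12)$; distinguishability by $2$-FWL is then checked via the $3$-pebble bijective game. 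If you want to salvage your spectral route instead, you would need to exhibit and verify a specific cospectral walk-regular pair that $2$-FWL separates --- which is a nontrivial search, not a routine verification.
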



The above expressivity bound holds because $2$-FWL can simulate the multiplication and injective transformations of a matrix, including the adjacency matrix $\mathbf A$. Therefore, $2$-FWL is capable of obtaining $\mathbf P^k,k\in \mathbb N$. Specifically, a block of PPGN \citep{PPGN} can simulate one time of matrix multiplication. Moreover, RWSE is strictly less expressive than $2$-FWL, since it loses much structure information when taking the diagonal elements of $\mathbf P^k$ only. In other words, RWSE is a summary of full random walk transition probabilities (on spatial domain), which accelerates calculation at the cost of losing expressivity.

\paragraph{Resistance distance and random walk.} 


In addition to RWSE, there are a number of positional encoding methods closely related to the node-level random walk. \citet{resistancecommute, RethinkingBiconnectivityResitance} connect commute time in random walks with resistance in electrical networks, which can be used as a PE method called resistance distance (RD). \citet{RethinkingBiconnectivityResitance} prove that RD and shortest path distance (SPD) \citep{DistanceEncoding} are both upper-bounded by 2-FWL in expressive power.

\paragraph{Positive definite kernels based on graph Laplacian spectrum.}

Graph Laplacian, or Hodge $0$-Laplacian as we refer to later, is closely connected with random walk on graph. The definition of graph Laplacian is $\mathbf L_0=\mathbf D-\mathbf A=\delta_0^*\delta_0=\Delta_0$. Through the spectrum of $\mathbf L_0$, we are able to define a family of positive definite kernels on graphs \citep{KernelsAR} by applying a regularization function $r$ to the spectrum of $\mathcal L_0$: $K_r=\sum_{i=1}^m r(\lambda_i) \mathbf u_i \mathbf u_i^T$, where $\mathbf L_0=\sum_i\lambda_i \mathbf u_i\mathbf u_i^T$ is the eigenvalue decomposition. For example, the heat kernel or the diffusion kernel \citep{WLInfiniteheatkernel} can be incorporated if $r(\lambda_i)=e^{-\beta \lambda_i}$. Other methods directly use eigenvectors as PE \citep{RethinkingGTLap}. These results imply that spectral analysis of graph Laplacians can also inspire more powerful PE and SE, and we will generalize graph Laplacian $\mathbf L_0$ to arbitrary order of Hodge $k$ Laplacians in the following section to facilitate graph learning. 




\section{Random walk on 1-simplices}\label{section1simplices}

While node-level random walk has been widely studied, edge-level random walk is still limited. In this section, we will first introduce Hodge $1$ Laplacian $\mathbf L_1$, as well as its connection with random walk on $1$-simplices (in the lifted space) and thus edges of undirected graph. Analogous to node-level RWSE, we introduce EdgeRWSE, a more theoretically powerful PE for edges. Furthermore, we systematically analyze the spectra of $\mathbf L_1$ and propose a novel Hodge1Lap PE, the first sign and basis invariant edge-level positional encoding that make use of the spectra of $\mathbf L_1$ instead of the previously adopted $\mathbf L_0$ only.

\subsection{Normalized Hodge-1 Laplacian and edge-level random walk}

\paragraph{Theoretical analysis of edge-level random walk.} The standard Hodge $k$-Laplacian is $\mathbf L_k=\mathbf B_k^*\mathbf B_k+\mathbf B_{k+1}\mathbf B_{k+1}^*$, and there are a number of normalized Hodge Laplacian because the normalization is rather flexible. \citet{NormalizedHodge1RW} propose a normalized form for Hodge $1$-Laplacian $\mathbf L_1$ with a clear interpretation of a random walk in the lifted edge space. Concretely,
\begin{equation}\label{EquationNormalizedHodge1Lap}
    \mathbf{\tilde L_1}=\mathbf D_2\mathbf B_1^*\mathbf D_1^{-1}\mathbf B_1+\mathbf B_2\mathbf D_3\mathbf B_2^*\mathbf D_2^{-1}
\end{equation}
where $\mathbf D_2$ is the diagonal matrix with adjusted degrees of each edge $\mathbf D_2=\max (diag(|\mathbf B_2|\mathbf 1),I)$, $\mathbf D_1$ is the diagonal matrix of weighted degree of nodes $\mathbf D_1=2\cdot diag(|\mathbf B_1|\mathbf D_2\mathbf 1)$, and $\mathbf D_3=\frac{1}{3}\mathbf I$.

To interpret this normalized Hodge $1$-Laplacian $\mathbf {\tilde L_1}$, \citet{NormalizedHodge1RW} introduce a lifted space of edges, where the original $m=|S_1|$ directed edges are lifted to $2m$ directed edges. For example, if $(i,j)\in S_1$, then we add $(j,i)$ to the lifted space. Consequently, the edge flow $\mathbf f\in\mathcal C^1$ expands to a larger space $\mathcal D^1$ where there are two orientations for each edge, $|\mathcal D^1|=2|\mathcal C^1|$. The matrix representation for this lifting procedure is $\mathbf V=\begin{bmatrix}+\mathbf I_{m}&-\mathbf I_m\end{bmatrix}^T\in\mathbb R^{2m\times m}$. Then the probability transition matrix for this lifted random walk corresponding to $\tilde L_1$ is $\hat{\mathbf P}$:$-\frac{1}{2}\mathbf{\tilde L_1} \mathbf V^T=\mathbf V^T \hat{\mathbf P}$. In practice, we also perform a simpler row-wise normalization over $\mathbf L_1$ to obtain another form of probability transition matrix.




Using $\hat{\mathbf P}$, we can construct an edge-level random walk-based PE method to enrich edge data by encoding structure information, analogous to node-level RWSE. We will also discuss some variations and simplified versions of the aforementioned random walk on $1$-simplices and theoretically analyze their expressivity.

\paragraph{EdgeRWSE.}\label{sectionEdgeRWSE} Similar to node-level random walk, a well-defined edge-level random walk contains some structure information and can be used to facilitate edge data, namely edge-level positional encoding. While node-level positional encodings have been widely studied, the edge-level positional encoding is a nearly blank field.

Inspired by (node-level) RWSE, EdgeRWSE is based on edge-level random walk. A full version of EdgeRWSE is based on the full edge-level random walk as we have stated above and in \citep{NormalizedHodge1RW}. For undirected graphs, two edges with opposite directions $(i,j)$ and $(j,i)$ are again merged by summing the two probabilities, that is, the lifted space $\mathcal D^1$ is mapped back to $\mathcal C^1$. Generally speaking, PE can be based on any injection functions $\psi$ in $\hat{\mathbf{P}}$ and its powers. 
\begin{equation}
    {\rm EdgeRWSE}(\hat{\mathbf{P}})_i =\psi([\hat{\mathbf{P}}^k]),k=1,2,...K
\end{equation}
where $K$ is the maximum steps we consider. One possible example is to encode the return probability of each edge, which is written ${\rm EdgeRWSE_{ret}}(\hat{\mathbf{P}})_i =\psi([\hat{\mathbf{P}}^k_{ii}]),k=1,2,...K$.  If $\psi$ is well defined, the theoretical expressivity of the full EdgeRWSE above is able to break the $2$-FWL bottleneck of node-level RWSE. In practice, we can apply neural networks like MLP or Transformer to encode $\hat{\mathbf{P}}^k$ and concatenate them with the original edge features. Then any standard GNN is applicable for downstream tasks. If the GNN is at least as powerful as $1$-FWL, then the GNN with EdgeRWSE is strictly more powerful than $1$-FWL and can distinguish some non-isomorphic graph pairs in which $2$-FWL fails.

In addition to the edge-level random walk in the lifted space of $1$-simplicials in \citep{NormalizedHodge1RW}, we further define two simplified versions of the edge-level random walk only through lower adjacency. We neglect the $2$-simplices or the triangles in our simplified version random walk, i.e. we only consider the $1$-down neighbors that share a $0$-simplices (node). In this way, $\hat{\mathbf P}$ becomes $\mathbf P_{down}$. This simplification will lead to a theoretically weaker expressivity than using full $\hat{\mathbf P}$, which will be bounded by $2$-FWL. However, this simplification is appropriate and beneficial for real-world data that contain a small number of triangles. We illustrate these two variations temporarily on undirected connected graphs without multiple edges and self-loops for simplicity.

The two variations of edge-level random walk via down-neighbors differ in whether two lower adjacent nodes of the edge have the same status. Concretely, the first type of edge-level random walk based on $\mathbf P_{down}$, which we define as \textit{directed $1$-down random walk} follows a two-stage procedure at every step. The walk first selects one of the two lower-adjacent nodes with equal probability $0.5$ each, then moves towards the neighboring edges connected with the selected node with equal probabilities. If there are no other edges connected to the selected node, the walk returns to the original edge. On the other hand, the second type, which we denote as \textit{undirected $1$-down random walk}, chooses the two nodes $u,v$ with probabilities proportional to their degrees minus one (since we want to exclude the case of returning to $e$ itself). Consequently, the walk transits to all $1$-down neighbors of the source edge with equal probabilities.

In a similar way as the full EdgeRWSE, we propose two simplified versions of EdgeRWSE based on directed $1$-down and undirected $1$-down random walk, both can be implemented in a rather flexible way. As a special case, the return probabilities of each edge after $k=1,\dots, K$ steps are encoded, but notice again that it is not the only implementation choice.



We conclude by summarizing the expressivity of EdgeRWSE.
\begin{theorem}
    Full EdgeRWSE can distinguish some non-isomorphic graphs that are indistinguish by $2$-FWL. EdgeRWSE based on directed and undirected $1$-down random walk are not more powerful than $2$-FWL.
\end{theorem}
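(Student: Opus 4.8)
The statement splits into two independent halves, and the dividing line between them is precisely whether the underlying transition operator involves the second boundary map $\mathbf B_2$. The plan is to treat the $1$-down variants (which use only lower adjacency) as an upper bound argument mirroring the RWSE bound, and to treat the full walk (which uses $\mathbf B_2$, hence $\tilde{\mathbf L}_1$ in \cref{EquationNormalizedHodge1Lap}) as a separation argument.

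For the second half, I would show EdgeRWSE$(\mathbf P_{down})\preceq 2\text{-FWL}$ for both the directed and undirected variants. First I would write $\mathbf P_{down}$ explicitly in terms of the unsigned incidence $|\mathbf B_1|$ and diagonal node-degree matrices, so that every transition probability (the $\tfrac12$ endpoint selection and the $1/(\deg-1)$ or degree-proportional edge selection) is a function of node degrees and adjacency alone, with \emph{no} coupling between non-consecutive nodes and no triangle condition. The key step is the intertwining identity: because the lower part of the edge-space operator factors through $\mathbf B_1$, its powers can be rewritten through the node-space operator $|\mathbf B_1|\,|\mathbf B_1|^{*}$ sandwiched by incidence maps, so that the per-edge return probability $[\mathbf P_{down}^{k}]_{ee}$ is a function of node-pair quantities of the form $[\text{(node operator)}^{j}]_{uv}$. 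These are exactly the objects a $2$-FWL / PPGN block computes by simulating matrix multiplication, just as in the argument establishing the RWSE upper bound. Since the EdgeRWSE feature of edge $e=\{u,v\}$ is then a function of the $2$-FWL color of the pair $(u,v)$, any injective $\psi$ applied to these entries cannot separate edges (and hence graphs) that $2$-FWL already identifies, giving the claimed bound.

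For the first half I would exhibit a witnessing pair of non-isomorphic graphs $G_1,G_2$ that $2$-FWL cannot distinguish but full EdgeRWSE can. The essential point is that $\hat{\mathbf P}$ is built from $\tilde{\mathbf L}_1$, whose upper term $\mathbf B_2\mathbf D_3\mathbf B_2^{*}\mathbf D_2^{-1}$ contributes spectral content that intertwines with the \emph{triangle-space} operator $\mathbf B_2^{*}\mathbf B_2$ and with the harmonic kernel $\tilde{\mathcal H}^{1}=\mathrm{ker}(\tilde{\mathbf L}_1)$, i.e.\ with the first Betti number $\beta_1$. Unlike the lower part, this information does not reduce to node-space operators and so escapes the node-mediated aggregations available to $2$-FWL. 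Since the full EdgeRWSE features determine $\Tr(\hat{\mathbf P}^{k})=\sum_e[\hat{\mathbf P}^{k}]_{ee}$ for all $k$, they encode the entire spectrum of $\tilde{\mathbf L}_1$; I would therefore pick $G_1,G_2$ to be $2$-FWL-equivalent yet have differing $\tilde{\mathbf L}_1$ spectra (equivalently, differing arrangements of $2$-cells or differing $\beta_1$) and certify the separation by directly computing the return-probability feature multisets. Because the claim is existential, a single verified witness suffices.

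I expect the main obstacle in the second half to be the rigor of the intertwining reduction: one must check that the two-stage directed normalization and the degree-proportional undirected normalization, together with the orientation signs inherited from $\mathbf B_1$, really do collapse to node-space matrix products with no residual non-consecutive coupling, so that nothing leaves the $2$-FWL tensor algebra. The main obstacle in the first half is constructing and certifying the explicit witness: since $2$-FWL already captures local triangle counts, the separation cannot come from local information and must be forced by the global, genuinely two-dimensional cohomological content that $\mathbf B_2$ injects into $\hat{\mathbf P}$; pinning down a small, provably $2$-FWL-indistinguishable pair whose triangle structure yields distinct Hodge-$1$ spectra is the delicate part.
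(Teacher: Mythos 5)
Your second half follows the paper's own route: the paper likewise observes that the $1$-down transition operator factors through the incidence matrix, writing $P_1=B_1^TD_1^{-1}B_1$ against the node-level $P_0=D_1^{-1}B_1B_1^T$, so that powers of the edge operator are node-space operators sandwiched by incidence maps and every per-edge return probability is a function of $2$-FWL-computable node-pair quantities at the two endpoints. Your intertwining formulation is in fact spelled out more carefully than the paper's own write-up, which only records the trace identity $\Tr(P_1^{k})=\Tr(P_0^{k})$ and then asserts that the multiset of diagonal entries is determined; making the sandwich explicit, as you propose, is the right way to close that step rigorously for both the directed and the undirected normalizations.

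The first half, however, has a genuine gap: the claim is existential, and you never produce the witness --- you describe the properties it must have and then defer its construction as ``the delicate part,'' which is precisely the content of the proof. The paper's witness is the pair of strongly regular graphs with parameters $(16,6,2,2)$: the $4\times4$ Rook's graph and the Shrikhande graph. Being co-parametric strongly regular graphs they are $2$-FWL-indistinguishable (a classical fact). Every edge in either graph lies in exactly two triangles, so local triangle counts cannot separate them --- consistent with your remark --- but in the Rook's graph those two triangles always sit inside a common $4$-clique, whereas the Shrikhande graph contains no $4$-clique at all. Consequently the $1$-up component of the full walk started at any edge of the Rook's graph is confined to the $6$ edges of that $4$-clique, while in the Shrikhande graph it eventually reaches all $48$ edges; the return probabilities already differ at step $3$ ($1/8$ versus $1/16$), and the stationary return probabilities are $1/6$ versus $1/48$. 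Note also that your proposed certificate ``differing $\tilde{\mathbf L}_1$ spectra, equivalently differing $\beta_1$'' conflates two different conditions (isospectrality is strictly stronger than equality of kernel dimensions), and while both happen to differ for this pair, the direct computation of the return-probability features is what actually certifies the separation; without an explicit pair in hand, neither condition has been verified.
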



\subsection{Sign and basis invariant edge-level positional encoding}


\paragraph{Theoretical analysis of Hodge 1-Laplacian spectrum.}\label{sectionHodge1spectratheory} Recall that the unnormalized Hodge 1-Laplacian is $\mathbf L_1=\mathbf B_1^T \mathbf B_1 + \mathbf B_2 \mathbf B_2^T=\mathbf L_{1,down}+\mathbf L_{up}$. Here, we analyze the theoretical properties of Hodge 1-Laplacian including its spectrum, which provides solid insights into our following designs. 

Note that previous simplicial networks \citep{Simplicial2Conv, ConvolutionalLearningOnSimplicial, MPSimplicialN, CWNetworks} are orientation equivariant and permutation equivariant; thus, they can only be applied to simplicial complexes where all edges are directed. This is frustrating if we want to boost general learning on graphs rather than simplicial complexes alone. However, the spectral analysis of Hodge $1$-Laplacian is applicable to undirected graphs. An important property of Hodge Laplacians is that their eigenvalues are invariant to permutation and orientation (if the simplices are oriented), thus they could be directly applied to analyze undirected graphs. Hence in this section, we temporarily omit discussion on permutation and orientation invariance since they naturally hold. Instead, we care more about the sign and basis invariance in the field of spectral analysis~\citep{SignandBasis}. 


We can show that the nonzero eigenvalues of $\mathbf L_{1,down}$ are the same as $\mathbf L_{0,up}$ and hence $\mathbf L_0$. This implies that if there are no $2$-simplicials (triangles), Hodge $1$-Laplacian has the same nonzero eigenvalues as Hodge $0$-Laplacian. However, the corresponding eigenvectors still provide different information about the nodes and edges, respectively. 

\begin{theorem}
    The number of non-zero eigenvalues of Hodge $1$-Laplacian $L_1$ is not less than the number of non-zero eigenvalues of Hodge $0$-Laplacian $L_0$.
\end{theorem}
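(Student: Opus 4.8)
The plan is to reduce the claim to a rank comparison and then exploit the fact that $\mathbf L_1$ decomposes as a sum of two positive semidefinite pieces. Since both $\mathbf L_0$ and $\mathbf L_1$ are symmetric and positive semidefinite, their number of nonzero eigenvalues (counted with multiplicity) equals their rank, so it suffices to prove $\operatorname{rank}(\mathbf L_1)\geq\operatorname{rank}(\mathbf L_0)$.

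First I would record the standard singular-value fact that for any matrix $M$ the products $MM^*$ and $M^*M$ share the same nonzero eigenvalues with identical multiplicities, and in particular $\operatorname{rank}(MM^*)=\operatorname{rank}(M^*M)=\operatorname{rank}(M)$. Applying this with $M=\mathbf B_1$ gives $\operatorname{rank}(\mathbf L_0)=\operatorname{rank}(\mathbf B_1\mathbf B_1^*)=\operatorname{rank}(\mathbf B_1)=\operatorname{rank}(\mathbf B_1^*\mathbf B_1)=\operatorname{rank}(\mathbf L_{1,down})$, which is precisely the remark preceding the theorem that $\mathbf L_{1,down}$ and $\mathbf L_0$ have the same nonzero spectrum. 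This step also cleanly handles the fact that $\mathbf L_0$ acts on the $n$-dimensional node space while $\mathbf L_{1,down}$ acts on the $m$-dimensional edge space: only the common rank of $\mathbf B_1$ enters the comparison.

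Next I would write $\mathbf L_1=\mathbf L_{1,down}+\mathbf L_{1,up}$ with $\mathbf L_{1,down}=\mathbf B_1^*\mathbf B_1\succeq 0$ and $\mathbf L_{1,up}=\mathbf B_2\mathbf B_2^*\succeq 0$, both acting on the same edge space $\mathbb R^m$. The key lemma is that for positive semidefinite $P,Q$ one has ${\rm ker}(P+Q)={\rm ker}\,P\cap{\rm ker}\,Q$: if $x^*(P+Q)x=0$ then $x^*Px=x^*Qx=0$, and positive semidefiniteness forces $Px=Qx=0$. Hence ${\rm ker}(\mathbf L_1)={\rm ker}(\mathbf L_{1,down})\cap{\rm ker}(\mathbf L_{1,up})\subseteq{\rm ker}(\mathbf L_{1,down})$, so $\dim{\rm ker}(\mathbf L_1)\leq\dim{\rm ker}(\mathbf L_{1,down})$. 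Applying rank--nullity in the common $m$-dimensional edge space yields $\operatorname{rank}(\mathbf L_1)\geq\operatorname{rank}(\mathbf L_{1,down})=\operatorname{rank}(\mathbf L_0)$, completing the argument.

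I do not expect a serious obstacle here; the whole statement follows from the singular-value correspondence together with the kernel-intersection property for sums of positive semidefinite matrices. The only point requiring a little care is conceptual rather than technical: $\mathbf L_0$ and $\mathbf L_1$ live on spaces of different dimension ($n$ versus $m$), so one must phrase everything through ranks of $\mathbf B_1$ rather than attempting to match eigenvalue multiplicities between the two ambient spaces directly. Once the comparison is routed through $\mathbf L_{1,down}$, which shares the edge space with $\mathbf L_1$, monotonicity of the rank under adding the positive semidefinite term $\mathbf L_{1,up}$ is immediate.
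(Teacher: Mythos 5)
Your proof is correct, but the key step is handled differently from the paper. Both arguments route the comparison through $\mathbf L_{1,down}=\mathbf B_1^*\mathbf B_1$, using the standard fact that $\mathbf B_1\mathbf B_1^*$ and $\mathbf B_1^*\mathbf B_1$ share their nonzero spectrum, so that $\mathbf L_{1,down}$ and $\mathbf L_0$ have the same number of nonzero eigenvalues. Where you diverge is in passing from $\mathbf L_{1,down}$ to $\mathbf L_1=\mathbf L_{1,down}+\mathbf L_{1,up}$: the paper invokes the chain-complex identity $\delta_1\delta_0=0$ to get ${\rm im}(\mathbf L_{1,up})\subseteq{\rm ker}(\mathbf L_{1,down})$ and ${\rm im}(\mathbf L_{1,down})\subseteq{\rm ker}(\mathbf L_{1,up})$, which yields the stronger multiset identity $\mathbf s(\mathbf L_1)\doteq \mathbf s(\mathbf L_{1,up})\cup\mathbf s(\mathbf L_{1,down})$ --- the nonzero eigenvalues of $\mathbf L_0$ literally reappear, with multiplicity, among those of $\mathbf L_1$. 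You instead use the kernel-intersection lemma ${\rm ker}(P+Q)={\rm ker}(P)\cap{\rm ker}(Q)$ for positive semidefinite $P,Q$, plus rank--nullity on the common edge space $\mathbb R^m$. Your lemma is more elementary and more general (it never uses $\mathbf B_1\mathbf B_2=0$, so it would survive with an arbitrary PSD perturbation in place of $\mathbf L_{1,up}$), but it only delivers the count $\operatorname{rank}(\mathbf L_1)\geq\operatorname{rank}(\mathbf L_0)$, which is exactly what the theorem asks; the paper's orthogonality argument buys the finer spectral containment that it then reuses elsewhere (e.g., for the claim that Hodge $1$-isospectrality subsumes Hodge $0$-isospectrality). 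Both proofs are complete and correct as written.
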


One direct conclusion is that graph isomorphism based on Hodge 1-Laplacian isospectral is strictly more powerful than Hodge 0-Laplacian. Here we draw a conclusion on the theoretical expressivity of the $L_1$ isospectra:

\begin{theorem}
    $L_1$ isospectral is incomparable with $1$-FWL and $2$-FWL.
\end{theorem}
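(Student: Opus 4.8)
The word ``incomparable'' here means that, for each of $k=1,2$, neither does $L_1$-isospectrality refine $k$-FWL nor does $k$-FWL refine $L_1$-isospectrality; naively this asks for four separating graph pairs. My plan is to collapse these to two pairs using the fact that $2$-FWL is at least as powerful as $1$-FWL. It suffices to produce (i) a pair that is $L_1$-isospectral but separated by $1$-FWL, and (ii) a pair that is $2$-FWL-equivalent but separated by the $\mathbf L_1$ spectrum. Pair (i) is then also separated by $2$-FWL (anything $1$-FWL splits, $2$-FWL splits), witnessing that $L_1$-isospectrality refines neither test; pair (ii) is a fortiori $1$-FWL-equivalent, witnessing that neither test is refined by $L_1$-isospectrality. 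The two pairs together yield incomparability with both.

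For (i) I would compare any graph $G$ with no isolated vertex against $G\sqcup K_1$. Appending an isolated node only adds a zero row to $\mathbf B_1$ and leaves $\mathbf B_2$ untouched, so $\mathbf B_1^*\mathbf B_1$ and $\mathbf B_2\mathbf B_2^*$, and therefore $\mathbf L_1$ itself, are identical for the two graphs, which are thus trivially $L_1$-isospectral. They are non-isomorphic, and $1$-FWL (color refinement) separates them immediately through the new degree-$0$ color class. An equal-order alternative uses a triangle-free Laplacian-cospectral pair with distinct degree sequences; the isolated-vertex construction is simply the most transparent fully explicit witness.

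Pair (ii) is the substantive one. A preliminary observation narrows the search: for a triangle-free graph $\mathbf L_1=\mathbf B_1^*\mathbf B_1$, whose spectrum is determined by the nonzero spectrum of $\mathbf L_0$ together with the zero-multiplicity $\beta_1=m-n+c$. Since a $2$-FWL-equivalent pair already shares $n$, $m$, the number of components $c$, and the $\mathbf L_0$ spectrum, any triangle-free such pair is automatically $L_1$-isospectral. Consequently the desired separation must originate in the up-Laplacian $\mathbf L_{up}=\mathbf B_2\mathbf B_2^*$, i.e. in the arrangement of $2$-simplices. I would therefore instantiate (ii) by two non-isomorphic strongly regular graphs with identical parameters, concretely the $4\times4$ rook's graph and the Shrikhande graph, both $\mathrm{SRG}(16,6,2,2)$. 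These are $2$-FWL-indistinguishable under the $k$-FWL $\equiv k$-WL convention adopted here, but their clique complexes glue triangles differently (vertex neighborhoods $2K_3$ versus $C_6$), so that $\mathbf B_2^*\mathbf B_2$, and hence $\mathbf L_1$, should have different spectra.

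The crux, and the step I expect to be hardest, is verifying this last spectral difference. Every invariant available to $2$-FWL (the degree sequence, edge and triangle counts, and the spectra of $\mathbf A$ and $\mathbf L_0$) coincides for the two graphs by construction, so no soft counting argument can separate them; only the finite computation of the eigenvalues of the $32\times32$ triangle-triangle matrix $\mathbf B_2^*\mathbf B_2$ (equivalently the nonzero part of $\mathbf L_{up}$) on each complex can. I would carry this out directly from the two incidence matrices $\mathbf B_2$. A related subtlety to state carefully is the FWL indexing convention: the argument only needs the chosen pair to be genuinely indistinguishable by the intended $2$-FWL, so the strongly regular graphs must be selected consistently with the paper's definition, and under a stronger reading one would instead replace this pair by a CFI-style construction calibrated to that level while preserving the triangle-driven $L_1$ separation.
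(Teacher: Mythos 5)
Your proof is correct and its overall architecture matches the paper's: two witness pairs, one showing $L_1$-isospectrality does not refine $1$-FWL (hence neither test), and one showing the reverse, with the substantive second direction instantiated by exactly the same pair the paper uses, the $4\times4$ Rook's graph versus the Shrikhande graph, separated through their triangle structure. The differences are in the details. For the first direction the paper exhibits a pair of graphs with equal numbers of nodes and edges whose Hodge $k$-Laplacians are isospectral for \emph{all} $k$ yet which $1$-FWL separates in two rounds (via a degree-$3$ node with two degree-$1$ neighbors present in only one graph); your $G$ versus $G\sqcup K_1$ witness is more elementary and makes $\mathbf L_1$ literally identical rather than merely isospectral, at the cost of using graphs of different orders --- a degeneracy the paper itself flags in its discussion of Theorem 5.2, so it is worth stating explicitly that the theorem as phrased does not exclude it (your triangle-free Laplacian-cospectral alternative would patch this but is left non-explicit). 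For the second direction, your reduction showing that any triangle-free $2$-FWL-equivalent pair is automatically $L_1$-isospectral is a nice structural observation the paper does not make, and it correctly forces the separation into $\mathbf B_2\mathbf B_2^*$; however, you stop at ``should have different spectra,'' whereas this finite check is the entire content of that direction. The paper completes it: the characteristic polynomials are $(\lambda-8)^9(\lambda-4)^{30}\lambda^9$ for the Rook's graph and $(\lambda-8)^9(\lambda-6)(\lambda-3-\sqrt 5)^6(\lambda-4)^{15}(\lambda-2)^9(\lambda-3+\sqrt 5)^6\lambda^2$ for Shrikhande's graph, so the multisets indeed differ and your chosen pair works. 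Your caution about the FWL indexing convention is resolved by the paper's explicit appendix definition of $2$-FWL on $2$-tuples with folklore updates, under which strongly regular graphs with identical parameters are indistinguishable, as required.
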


\citet{WLandGraphSpectra} show that the $L_0$ isospectra is strictly bounded by $2$-FWL. The $L_1$ isospectra, through the introduction of $2$-simplices (triangles), can distinguish some non-isomorphic graph pairs that are indistinguishable by $2$-FWL. See \cref{SectionProofAppendix} for detailed examples. 



The zero eigenvalues of $L_1$ have some more important properties. Its multiplicity is the $1$-th Betti number $\beta_1$, which is exactly the number of cycles (except triangles) in the graph. We further consider the eigenvectors of $L_1$, each eigenvector $\mathbf u_i$ of the eigenvalues $\lambda_i$ has a length $m$, and each element $\mathbf u_{ij}$ in it reflects the weight of the corresponding edge $e_j$ at this frequency $\lambda_i$. 
The absolute values of elements corresponding to the edges in cycles are non-zero, while the edges not in cycles have zero weights in the eigenvectors. In other words, the eigenvectors of zero eigenvalues can efficiently mark the edges that are in a cycle. More intuitive illustration and theoretical proof are given in \cref{subsubsecproofHodge1Lap}. 

\paragraph{Hodge1Lap: sign and basis invariant edge PE.} \label{sectionHodge1Lap} In this section, we propose Hodge1Lap, a novel edge-level positional encoding method based on the spectral analysis of Hodge 1-Laplacian. To the best of our knowledge, this is the first sign and basis invariant edge-level PE based on Hodge $1$-Laplacian $L_1$. 

Recall the geometric meaning of the Hodge $1$-Laplacian spectra in \cref{sectionHodge1spectratheory}. Zero eigenvalues and eigenvectors reflect the cycles in the graph. These insights of Hodge $1$-Laplacian spectra shed light on our design for edge-level positional encoding. Denote the eigenvalues $\lambda_i$ with multiplicity $m(i)$ as $\lambda_{i(1)}, \lambda_{i(2)}, \dots,\lambda_{i(m_i)}$, respectively. The corresponding eigenvectors are $\mathbf u_{i(1)},\dots, \mathbf u_{i(m_i)}$, but note that these eigenvectors are: (i) not sign invariant, since if $L_1 \mathbf u_{i(j)}=0,j=1,...,m_i$, then $L_1 (-\mathbf u_{i(j)})=0$; (ii) not basis invariant if $m_i>1$, since any $m_i$ linearly independent basis of the kernel space are also eigenvectors, and the subspace they span is identical to the kernel space. This is analogous to the $L_0$ eigenvectors: they are not sign and basis invariant, which makes it difficult for us to design sign and basis invariant positional encodings. 
Therefore, we propose a novel projection-based method to build Hodge1Lap, a sign and basis invariant edge-level positional encoding.

Formally, Hodge1Lap processes the eigenvalues $\lambda_i$ with multiplicity $m_i$ and relevant eigenvectors as follows. Recall the projection matrix 
\begin{equation}
    P_{proj,i}=\mathbf U\mathbf U^T=\sum_{j=1}^{m_i}\mathbf u_{i(j)}\mathbf u_{i(j)}^T
\end{equation}
where the subscript $_{proj}$ is used to distinguish the projection matrix from probability transition matrix $P$, and $\mathbf U=[\mathbf u_{i(1)},\dots,\mathbf u_{i(m_i)}]$. For any vector $\mathbf v\in \mathbb R^m$, $P_{proj,i}\mathbf v$ projects it into the subspace spanned by the eigenvectors $u_{i(j)},j=1,\dots,m_i$. It is straightforward to verify that the projection in the subspace is independent of the choice of basis $u_{i(j)}$ as long as they are linearly independent and hence is both sign and basis invariant. As long as the preimage $\mathbf v$ is well defined (e.g., permutation equivariant to edge index), the projection can satisfy permutation equivariance as well as sign and basis invariance. In Hodge1Lap, we propose to use two different forms of preimages: a unit vector $\mathbf e \in \mathbb R^m$ with each element $\mathbf e_j=\frac{1}{\sqrt m}$, and the original edge feature $\mathbf X(E)\in \mathbb R^{m\times d}$. The first variant considers puer structure information, while the second variant jointly encodes structure and feature information. Taking the first variant as an example, Hodge1Lap implemented by projection can be formulated as
\begin{equation}
    {\rm Hodge1Lap_{proj}}(E)=\sum_i \phi_i(P_{proj,i}\mathbf e)
\end{equation}
where $\phi_i$ are injective functions and can be replaced by MLP layers, and the summation is performed over the interested eigen-subspaces. 

In addition to the projection-based implementation of Hodge1Lap, we also implement other variants (analogously to the implementation of LapPE~\citep{RethinkingGTLap}): (i) We use a shared MLP $\phi$ to directly embed the $n_{eigen}$ eigenvectors corresponding to the smallest $n_{eigen}$ eigenvalues, where $n_{eigen}$ is a hyper-parameter shared for all graphs. We refer this implementation as ${\rm Hodge1Lap_{sim}}(E)=\sum_{i=1}^{n_{eigen}}\phi (\mathbf u_i)$. (ii) We take the absolute value of each element in eigenvectors before passing them to the MLP, which we denote as ${\rm Hodge1Lap_{abs}}(E)=\sum_{i=1}^{n_{eigen}}\phi (|\mathbf u_i|)$, where $|\cdot|$ means taking element-wise absolute value. It is remarkable that, while ${\rm Hodge1Lap_{proj}}$ is sign-invariant and basis-invariant, ${\rm Hodge1Lap_{sim}}$ is not invariant to both sign and basis, and ${\rm Hodge1Lap_{abs}}$ is sign-invariant yet not basis-invariant. We also allow combination of the above implementations; see \cref{sectionappendixexperiments} for more implementation details.

Our Hodge1Lap has elegant geometric meanings thanks to the spectral properties of $L_1$. For example, the kernel space of $L_1$ related to the zero eigenvalues is fully capable of \textbf{detecting cycles and rings} in graphs~\citep{ComputingBettiNumbers}, which can play a significant role in many domains. In molecular graphs, for example, cycle structures such as benzene rings have crucial effects on molecular properties. Hodge1Lap is able to extract such rings in a natural way rather than manually listing them, and ${\rm Hodge1Lap_{abs}}$ is able to differentiate edges from distinct cycles. Intuitively, according to the Hodge decomposition theorem, any vector field defined on edges $\mathcal C^1$ can be decomposed into three orthogonal components: a solenoidal component, a gradient component and a harmonic (both divergence-free and curl-free) component; see \cref{sectionappendixHodgetheory}. ${\rm ker}(\mathbf L_1)$ is the harmonic component, and since divergence-free and curl-free edge flows can only appear on cycles, the eigenvectors corresponding to ${\rm ker}(\mathbf L_1)$ therefore mark out the cycles in the graph; see \cref{subsubsecproofHodge1Lap} for more technical details and illustrative examples. Moreover, taking into account more subspaces other than the kernel space of $L_1$, Hodge1Lap contains other structure information since the eigenvectors are real and continuous vectors. Ideally, one can apply any sign and basis invariant functions to obtain a universal approximator \citep{SignandBasis} for functions on $1$-faces besides projections, see \cref{sectionhighersimplices} for general conclusions. 






\section{Random walk on higher-order and inter-order simplices}\label{sectionhighersimplices}

In \cref{section0simplices} and \cref{section1simplices}, we systematically analyze the random walk and Hodge Laplacian-based PE and SE on $0$-simplices (node level) and $1$-simplices (edge level), respectively. As we have shown, introducing higher-order simplices into random walk benefits their theoretical expressivity. In this section, we formally introduce random walks on higher-order simplices and analyze their expressivity. We will also investigate the spectral analysis of Hodge $k$ Laplacians, whose normalization forms are closely related to random walks on $k$-simplices. Besides random walk within same-order simplices, we define a novel inter-order random walk that is able to transmit within different orders of simplices. This random walk scheme incorporates and unifies a wide range of simplicial networks \citep{Simplicial2Conv, MPSimplicialN, BScNetsBlockSimplicialRW}. 

\subsection{Higher-order Hodge Laplacians and random walk}

The $k$-th order Hodge Laplacian is defined as $\mathbf L_k=\mathbf B_k^*\mathbf B_k + \mathbf B_{k+1} \mathbf B_{k+1}^*=\mathbf L_{k,down} + \mathbf L_{k,up}$. Analogous to $\mathbf L_1$, a properly normalized Hodge $k$ Laplacian $\mathbf{\tilde L_k}$ corresponds to a $k$-th order random walk on $k$-simplices in the lifted space. The matrix representation for the lifting is $\mathbf V_k=\begin{bmatrix}+\mathbf I_{n_k}&-\mathbf I_{n_k}\end{bmatrix}^T\in\mathbb R^{2n_k\times n_k}$, where $n_k=|S_k|$ is the number of $k$-simplices in the simplicial complex $\mathcal K$. For undirected graphs, one only needs to sum over different orientations to get the cochain group $\mathcal C^k$ from $\mathcal D^k$, where $|\mathcal D^k|=2|\mathcal C^k|$ is the cochain group in the lifted space. The transition matrix $\hat{\mathbf P_k}$ for $k$-th order random walk is defined through $-\frac{1}{2}\mathbf{\tilde L_k} \mathbf V_k^T=\mathbf V_k^T \hat{\mathbf P_k}$.

Similarly to the edge-level random walk in the lifted space, the transition matrix $\hat{\mathbf P_k}$ describes that each step of $k$-th order random walk move towards either $k$-down neighbors or $k$-up neighbors. When going through the upper adjacent $k+1$ faces, the walk uniformly transits to an upper adjacent $k$-simplex with different orientation relative to the shared $k+1$ face, unless it has no upper adjacent faces. If the step is taken towards a lower-adjacent $k-1$ face, the walk transits along or against the original direction to one of its $k$-down neighbors.

Based on $\hat{\mathbf P_k}$, we can design $k$-th order RWSE for $k$-simplicial data according to the $k$-th order random walk, $k-{\rm RWSE}=\psi_k(\hat{\mathbf P_k})$, where $\psi_k$ is an injective function that acts on either $\hat{\mathbf P_k}$ or its polynomials. If we maintain all $k$-RWSE for $k=0,1,\dots,K$ in a simplicial complex $\mathcal K$ with dimension larger than $K$, then we can get a more powerful algorithm by adding $K+1$-RWSE to the $K+1$-simplices in $\mathcal K$.

In addition to directly making use of the random walk on the $k$-simplices, spectral analysis of $\mathbf L_k$ also sheds light on PE designs for higher-order simplicial data. Based on the eigenvalues and eigenvectors of $\mathbf L_k$, we can build permutation equivariant and basis invariant functions defined on $\mathcal K_{k+1}$ that can simulate arbitrary $k$-cochain or $k$-form. Concretely, if we use the normalized version of $k$-th Hodge Laplacian $\Delta_k$ as in \citep{SpectraOfCombinatorialLaplace}, the eigenvalues of $\Delta_k$ will be compact $0\leq \lambda \leq k+2$. Then applying a permutation equivariant and basis-invariant function such as \textit{Unconstrained BasisNet} \citep{SignandBasis} on the eigenvalues and eigenvectors, we are able to approximate any $k$-form which is basis-invariant. We refer interested readers to \cref{subsecproofhighorder} for more details.


\subsection{Inter-order random walk}

The concept of random walk can be even generalized to a more universal version, which we denote as inter-order random walk. In each step, the inter-order random walk at a $k$-simplex can transit not only to the $k$-down neighbors and $k$-up neighbors (they are all $k$-simplices as well), but also to lower adjacent $k-1$-simplices and upper adjacent $k+1$-simplices. Here we denote the (unnormalized) adjacent matrix for the inter-order random walk on a $K$-order simplicial complex $\mathcal K$ as $\mathcal A_K(\mathcal K)$, which is defined as
\begin{small}
\begin{equation}
    \mathcal A_K(\mathcal K)=\begin{bmatrix}\mathbf L_0 & \mathbf B_1 \\ \mathbf B_1^T & \mathbf L_1 & \mathbf B_2 \\ & ... & ... & ...\\ & & ... & ... & ...\\ & & & \mathbf B_{K-1}^T & \mathbf L_{K-1} & \mathbf B_K \\ & & & &  \mathbf B_K^T & \mathbf L_K \end{bmatrix}
\end{equation}
\end{small}
which is a block matrix with $\mathbf L_k$ in the $k$-th diagonal block, $\mathbf B_{k}^T$ and $\mathbf B_{k+1}$ in the offset $\pm 1$ diagonal blocks, while all other blocks are zeros. Although \citet{BScNetsBlockSimplicialRW} also mentioned a similar block matrix, they do not pose a concrete form of the off-diagonal blocks. The inter-order adjacent matrix we define has a clear physical interpretation that one can only transform to simplices with different orders that are boundaries and co-boundaries of current simplex. A properly normalized version $\tilde{\mathcal A_K}$ can describe the inter-order random walk with a certain rule. Here, we give a property of the power of $\mathcal A_K$ which still holds in normalized versions.
\begin{small}
\begin{equation}
    \mathcal A_K^r=\begin{bmatrix}p_r(\mathbf L_0) & q_{r-1}(\mathbf L_{0,up})\mathbf B_1 \\ q_{r-1}(\mathbf L_{1,down})\mathbf B_1^T & p_r(\mathbf L_1) & q_{r-1}(\mathbf L_{1,up})\mathbf B_2 \\ & ... & ... & ...\\ & & ... & ... & ...\\  & & &  q_{r-1}(\mathbf L_{K,down})\mathbf B_K^T & p_r(\mathbf L_K) \end{bmatrix}
\end{equation}
\end{small}
where $p_r(\cdot)$ and $q_r(\cdot)$ are polynomials with maximum order $r$. The above equation states that simplices with differences of order larger than one cannot directly exchange information even after infinite rounds, but they can affect each other through the coefficients in $p_r$ and $q_{r-1}$ in the blocks on the offset $\pm 1$-diagonal blocks.

Several previous works such as \citep{MPSimplicialN} can be unified by $\mathcal A_K$. Additionally, we can make use of $\mathcal A_K^r$ to build random walk-based positional encoding for all simplices in the $K$-dimensional simplicial complex that contains rich information. 



\section{Experiments}\label{sectionexperiment}

\begin{table*}[t]
\caption{Ablation on Zinc-12k dataset \citep{Zinc} (MAE $\downarrow$). Highlighted are the \textcolor{red}{first}, \textcolor{blue}{second} results.}
\label{zincablation}
\vskip 0.15in
\begin{center}
\begin{small}
\resizebox{0.9\columnwidth}{!}{
\begin{tabular}{l|cccc|c}
\toprule
model & Node PE/SE & EdgeRWSE & Hodge1Lap & RWMP & Test MAE\\
\midrule
GIN~\citep{HowPowerfulGNN} & - & - & - & - & $0.526\pm0.051$\\
GSN~\citep{GSN} & - & - & - & - & $0.101\pm0.010$\\
Graphormer~\citep{Graphormer} & - & - & - & - & $0.122\pm 0.006$ \\
SAN~\citep{RethinkingGTLap} & - & - & - & - & $0.139\pm 0.006$\\
GIN-AK+~\citep{GNNAK} & - & - & - & - & $0.080\pm 0.001$\\
CIN~\citep{CWNetworks} & - & - & - & - & $0.079\pm 0.006$\\
Specformer~\citep{Specformer} & - & - & - & - & $0.066\pm 0.003$\\
\midrule
GINE\citep{PretrainGINE} &      -    &     -      &     -    &    -    &  $0.133\pm0.002$\\
GINE &    -      &  directed   &     -     &    -    &  $0.110\pm0.003$ \\
GINE &    -      & undirected  &     -     &    -    &  $0.104\pm0.008$ \\
GINE &    -      &     -       &    abs    &    -    &  $0.102\pm0.004$ \\
GINE &    -      &     -       &   project &    -    &  $0.091\pm0.004$  \\
GINE &   LapPE   &     -       &     -     &    -    &  $0.120\pm0.005$  \\
GINE &   RWSE    &     -       &     -     &    -    &  $0.074\pm 0.003$ \\
GINE &   RWSE    &  directed   &     -     &    -    &  $0.070\pm 0.003$  \\
GINE &   RWSE    & undirected  &     -     &    -    &   $0.069\pm 0.002$   \\
GINE &   RWSE    &     -       &    abs    &    -    &  $0.068\pm 0.003$\\
GINE &   RWSE    &     -       &   project &    -    & $0.068\pm 0.004$ \\
GINE &   RWSE    &     -       &     -     &  True   &  $0.068\pm 0.003$ \\
GINE &   RWSE    &     -       &  project  &  True   &   $0.066\pm 0.003$ \\
\midrule
GINE &   RWSE    &  Full-EdgeRWSE   &     -       &  -    & $0.069\pm 0.003$  \\
GINE & Inter-RWSE    & Inter-RWSE  &     -       &  -    &  $0.083\pm 0.006$   \\
GINE & RWSE    & Cellular  &     -        & -    &   $0.068\pm 0.003$   \\
\midrule
GAT~\citep{GAT} & - & - & -  & -    & $0.384\pm 0.007$\\
GAT & - & undirected & -  & - & $0.163\pm 0.008$ \\
GAT & - & - & project  & - & $0.130\pm 0.005$\\
\midrule
PNA~\citep{PNA} & - & - & -  & - & $0.188\pm 0.004$\\
PNA & - & undirected & -  & - & $0.104\pm 0.004$ \\
PNA & - & - & project  & - & $0.074\pm 0.005$\\
\midrule
SSWL+~\citep{CompleteHierarchySubgraphGNN}  & - & - & - & - & $0.070\pm 0.005$ \\
SSWL+ & - & undirected & -  & - & $0.067\pm 0.005$ \\
SSWL+ & - & - & project  & - & $0.066\pm 0.003$ \\
\midrule
GPS~\citep{GPS} & - & - & - & - & $0.113\pm 0.005$ \\
GPS & RWSE & - & - & - & $0.070\pm 0.004$\\
GPS & RWSE & undirected & -  & - & $0.068\pm 0.004$ \\
GPS & RWSE & - & project  & - & $0.064\pm 0.003$\\
\midrule
GRIT~\citep{GRIT} & - & - & -  & - & $0.149\pm 0.008$\\
GRIT & RWSE & - & -  & - & $0.081\pm 0.010$\\
GRIT & SPDPE & - & -  & - &$0.067\pm 0.002$\\
GRIT & RDPE & - & -  & - & $0.059\pm 0.003$\\
GRIT & RRWP & - & -  & - & $0.059\pm 0.002$\\
GRIT & - & undirected & -  & - & $0.103\pm 0.006$\\
GRIT & - & - & project & - &$0.086\pm 0.005$\\
GRIT & RRWP & undirected & - & - &\textcolor{blue}{$0.058\pm 0.002$}\\
GRIT & RRWP & - & project & - &\textcolor{red}{$0.057\pm 0.003$}\\

\bottomrule
\end{tabular}
}
\end{small}
\end{center}
\vskip -0.1in
\end{table*}

\begin{table*}[t]
\caption{Experiments on graph-level OGB benchmarks \citep{OGB}. Highlighted are the \textcolor{red}{first}, \textcolor{blue}{second}, \textbf{third} test results.}
\label{ogb}
\vskip 0.15in
\begin{center}
\begin{small}
\begin{tabular}{l|cc}
\toprule
model & ogbg-molhiv (AUROC $\uparrow$)  & ogbg-molpcba (Avg. Precision $\uparrow$) \\
\midrule
GIN+virtual node & $0.7707\pm 0.0149$ & $0.2703\pm0.0023$\\
GSN (directional) & \textcolor{blue}{$0.8039\pm 0.0090$} & - \\
PNA & $0.7905\pm 0.0132$ & $0.2838\pm0.0035$ \\
SAN & $0.7785\pm 0.2470$ & $0.2765\pm0.0042$\\
GIN-AK+ & $0.7961\pm 0.0110$ & $0.2930\pm0.0044$\\
CIN & \textcolor{red}{$0.8094\pm 0.0057$} & - \\
GPS & $0.7880\pm 0.0101$ & $0.2907\pm0.0028$\\
Specformer & $0.7889 \pm 0.0124$ & \textcolor{red}{$0.2972\pm0.0023$}\\
\midrule
GPS+EdgeRWSE & $0.7891 \pm 0.0118$ & \textbf{0.2934 $\pm$ 0.0025}\\
GPS+Hodge1Lap & \textbf{0.8021 $\pm$ 0.0154} & \textcolor{blue}{$0.2937\pm 0.0023$}\\
\bottomrule
\end{tabular}
\end{small}
\end{center}
\vskip -0.1in
\end{table*}

In this section, we present a comprehensive ablation study on Zinc-12k to investigate the effectiveness of our proposed methods. We also verify the performance on graph-level OGB benchmarks. Due to the limited space, experiments on synthetic datasets and more real-world datasets as well as experimental details are presented in \cref{sectionappendixexperiments}. 

\paragraph{Ablation study on Zink-12k.} Zinc-12k \citep{Zinc} is a popular real-world dataset containing 12k molecules. The task is the graph-level molecular property (constrained solubility) regression. In our ablation study, we use GINE~\citep{PretrainGINE}, GAT~\citep{GAT}, PNA~\citep{PNA}, SSWL+~\citep{CompleteHierarchySubgraphGNN}, GPS~\citep{GPS} and GRIT~\citep{GRIT} as our base models, where the first three are message-passing based GNNs, SSWL+ is an instance of subgraph GNN, while GPS and GRIT are recent SOTA graph transformers. Four different factors are studied: (1) the node-level PE or SE, where RWSE refers to \citep{RWSELearnable}, LapPE refers to \citep{RethinkingGTLap} and "-" suggests no node-level PE/SE; (2) EdgeRWSE, the edge-level SE based on spatial domain of $1$-down random walk, where "directed" and "undirected" are used to distinguish the two types of simplified version of $1$-down random walk; (3) Hodge1Lap, the edge-level PE based on spectra of $\mathbf L_1$, where "abs" refers to the sign-invariant method (summing over absolute values of eigenvectors, or ${\rm Hodge1Lap_{abs}}$), and "project" refers to the sign and basis invariant method (project the unit vector into interested subspace, or ${\rm Hodge1Lap_{proj}}$); (4) RWMP, a novel Random Walk Message Passing scheme we propose, which performs message passing based on probability calculated by a distance metric; see \cref{sectionappendixRWMP} for details of RWMP. 

The full results of performance on the Zinc dataset are reported in \cref{zincablation}. Note that all our base models are improved when augmented with our EdgeRWSE or Hodge1Lap: both GAT and PNA reduce by over $50\%$ MAE. In particular, a simple GINE without using any transformer or subgraph GNN variations is able to surpass GPS with our PE/SE, verifying the impressive effectiveness of our proposed methods. Applying EdgeRWSE and Hodge1Lap to GRIT results in new \textbf{State-of-the-Art} performance. Regarding ablation, all variants of our EdgeRWSE and Hodge1Lap can improve performance of base models, see \cref{sectionappendixexperiments} for more implementation details of these variants. One may observe that RWSE is significantly beneficial in this task, and combining node-level RWSE and our edge-level PE/SE methods would lead to a further performance gain. In general, Hodge1Lap shows better performance than EdgeRWSE, indicating the effectiveness of embedding structures such as rings through spectral analysis. The effect of whether EdgeRWSE is directed or the implementation method in Hodge1Lap is rather small. We also observe that Full-EdgeRWSE, Inter-RWSE, and CellularRWSE are beneficial, see \cref{sectionappendixexperiments} for more details. Additionally, the RWMP mechanism is also capable of improving performance, which we will analyze in \cref{sectionappendixRWMP}.

\paragraph{Experiments on OGB benchmarks.}
We also verify the performance of EdgeRWSE and Hodge1Lap on graph-level OBG benchmarks, including the ogbg-molhiv and ogbg-molpcba datasets. The results are shown in \cref{ogb}. We apply our Hodge1Lap and EdgeRWSE to both GatedGCN and GPS(consists of GatedGCN and Transformer) and show that our methods can improve both architectures. In general, both two edge-level PE/SE are able to achieve comparable performance as the SOTA models, though EdgeRWSE suffers from overfitting on ogbg-molhiv. It should be noted that SOTA results on ogbg-molhiv typically involve manually crafted structures, including GSN~\citep{GSN} and CIN~\citep{CWNetworks}. Natural methods and complex models usually suffer from overfitting and cannot generalize well in the test set.

\section{Conclusions}\label{sectionconclusion}

In this paper, we propose to facilitate graph neural networks through the lens of random walk on simplicial complexes. The random walk on $k$-th order simplices is closely related to Hodge $k$ Laplacian $\mathbf L_k$, and we emphasize that both spatial analysis of random walk and spectra of $\mathbf L_k$ can improve the theoretical expressive power and performance of GNNs. For $0$-simplices, we connect a number of exsiting PE and SE methods (such as RWSE) via node-level random walk, and further provide a theoretical expressivity bound. For $1$-simplices, we propose two novel edge-level PE and SE methods, namely EdgeRWSE and Hodge1Lap. EdgeRWSE directly encodes information based on edge-level random walk, while Hodge1Lap is the first sign and basis invariant edge-level PE based on Hodge-$1$ Laplacian spectra. We also generalize our theory to arbitrary-order simplices, showing how $k$-order and inter-order random walk as well as spectral analysis of Hodge Laplacians can facilitate graph and simplicial learning. Besides analyzing theoretical expressive power and physical meanings of these random walk-based methods, we also verify the effectiveness of our methods, which achieve SOTA or highly competitive performance on several datasets.

\acksection

Muhan Zhang is partially supported by the National Natural Science Foundation of China (62276003) and Alibaba Innovative Research Program.

\bibliography{ref}
\bibliographystyle{plainnat}

\newpage
\appendix

\section{Discrete Hodge-deRham theory of abstract simplicial complex}\label{sectionappendixHodgetheory}

Inspired by differential geometry and algebraic topology, this work investigates how random walk on simplicial complexes can facilitate graph and simplicial learning. In our main text, we merely introduce the Hodge Laplacian (in Hilbert space) due to limited space. In this section, we give a complete background on discrete Hodge-deRham theory of abstract simplicial complexes to help readers better understand relevant concepts.

An abstract simplicial complex $\mathcal K$ in a finite set $V$ is a collection of subsets of $V$ that is closed under inclusion. In our paper, $V$ will be a vertex set $[n]=\{1,2,...,n\}$ if without special statement. An element of cardinality $k+1$ is called a $k$-face or $k$-simplex of $\mathcal K$. For instance, $0$-faces are usually called vertices, $1$-faces are directed edges and $2$-faces are 3-cliques (triangles) with an orientation. We denote the collection of all $k$-faces of $\mathcal K$ as $S_k(\mathcal K)$. The dimension of a $k$-face is $k$ and the dimension of a complex $\mathcal K$ is defined as the maximum dimension of faces in $\mathcal K$. 

The definition of neighbors of simplices is crucial in this paper. Two $k+1$-simplices sharing a collective $k$-face are called $k$-down neighbors, and two $k$-simplices sharing a collective $k+1$-simplex are called $k+1$-up neighbors. Generally, a face $F$ is chosen an ordering on its vertices and is said to be oriented, denoted by $[F]$. For any permutation element $\sigma \in \mathcal G_{k+1}$ where $\mathcal G_{k+1}$ is the symmetric group of permutations on $\{0, ..., k\}$, two orderings of vertices transformed by $\sigma$ are said to determine the same orientation if $\sigma$ is an even permutation and opposite if $\sigma$ is odd. In addition, a $k$-cochain or $k$-form is a function defined on $\mathcal K_{k+1}$, $f:V\times \dots \times V\rightarrow \mathbb R$ that satisfies the following.

\begin{equation}
    f(i_{\sigma(0)},\dots,i_{\sigma(k)})={\rm sgn}(\sigma)f(i_0,\dots,i_k)
\end{equation}

for all $\{i_0, \dots, i_k\}\in \mathcal K_{k+1}$ and all $\sigma \in \mathcal G_{k+1}$. Specifically, $f(i_0,\dots,i_k)=0$ if $\{i_0,\dots,i_k\}\notin \mathcal K_{k+1}$. Although they have the structure of vector spaces, the vector spaces are usually called cochain groups $\mathcal C^k(\mathcal K,\mathbb R)$. Chain groups $\mathcal C_k(\mathcal K,\mathbb R)$ are defined as duals of co-chain groups. In addition, we define the simplicial coboundary maps $\delta_k: \mathcal C^{k}(\mathcal K,\mathbb R)\rightarrow \mathcal C^{k+1}(\mathcal K,\mathbb R) $:

\begin{equation}
    (\delta_k f)([v_0,\dots,v_{i+1}])=\sum_{j=0}^{k+1}(-1)^j f([v_0, \dots, \hat v_j,\dots,v_{k+1}])
\end{equation}

where $\hat v_j$ suggests that the vertex $v_j$ is omitted. One can view $\delta_k$ as the dual of the boundary map $\partial_{k+1}$, which connects the cochain complex of $\mathcal K$ with coefficients in $\mathbb R$. Further, we can define the adjoint of coboundary operator: $\delta_k^*: \mathcal C^{k+1}(\mathcal K,\mathbb R)\rightarrow \mathcal C^{k}(\mathcal K,\mathbb R)$. Therefore, we have the following connection.

\begin{equation}
    \mathcal C^{k+1}(\mathcal K,\mathbb R)\mathop{\leftrightarrows}^{\delta_k}_{\delta_k^*}\mathcal C^{k}(\mathcal K,\mathbb R)\mathop{\leftrightarrows}^{\delta_{k-1}}_{\delta_{k-1}^*}\mathcal C^{k-1}(\mathcal K,\mathbb R)
\end{equation}

After determining the inner products $<,>$, we can define the adjoint of the coboundary operator $\delta_k^*: \mathcal C^{k+1}(\mathcal K,\mathbb R)\rightarrow \mathcal C^{k}(\mathcal K,\mathbb R)$

\begin{equation}
    <\delta_k \mathbf f_1,\mathbf f_2>_{\mathcal C^{k+1}} = <\mathbf f_1, \delta_k^* \mathbf f_2>_{\mathcal C^k}
\end{equation}

where $\mathbf f_1\in \mathcal C^{k}(\mathcal K,\mathbb R), \mathbf f_2\in \mathcal C^{k+1}(\mathcal K,\mathbb R)$ are arbitrary. Specifically, in the Hilbert space $L^2$, the matrix representations of $\delta_k$ and $\delta_k^*$ are adjoint matrix $\mathbf B_{k+1}^*$ and $\mathbf B_{k+1}$ as described in the main texts.

Define the Hodge $k$-Laplacian operator (also called combinatorial Laplace operator):

\begin{equation}
    \mathbf L_k=\mathbf L_{k,down}+\mathbf L_{k,up}=\delta_{k-1}\delta_{k-1}^*+\delta_k^*\delta_k
\end{equation}

where we omit the notation $\mathcal K$ for simplicity. It is easy to verify by definition that all three operators $\mathbf L_k,\mathbf L_{k,up},\mathbf L_{k,down}$ are self-adjoint, nonnegative and compact. 

In the Hilbert space, the matrix representations for boundary and co-boundary operators are adjacent matrices of the $k$ and $k+1$ order simplices. In order to keep coordinate with most existing literature, we write the matrix representation for $\delta_k^*$ as $\mathbf B_{k+1}\in\mathbb R^{|S_{k}|\times |S_{k+1}|}$ (one can view it as the adjacent matrix of $k$-th and $k+1$-th simplices); therefore, we have the definition in our main text:

\begin{equation}
    \mathbf L_k=\mathbf B_k^*\mathbf B_k + \mathbf B_{k+1} \mathbf B_{k+1}^*
\end{equation}

where $\mathbf B_k^*=\mathbf B_k^T$ is the adjoint of $\mathbf B_k$ and is equivalent to the transpose of $\mathbf B_k$ in the Hilbert space. It is remarkable that when $k=0$, $\mathbf L_0$ is exactly the graph Laplacian $\mathbf L_0=\mathbf D-\mathbf A$. In our paper, we make use of higher-order Hodge Laplacians such as $\mathbf L_1$ rather than previously used $\mathbf L_0$ alone.

Note that the following equation always holds:

\begin{equation}
    \delta_k\delta_{k-1}=0
\end{equation}

This result is sometimes called the \textit{fundamental theorem of topology}, which can be intuitively interpreted as "the coboundary of a coboundary is zero".

The kernel space of $\mathbf L_k$ is called the $k$-th cohomology group:
\begin{equation}
    \tilde{\mathcal{H}}^k(\mathcal K,\mathbb R):= {\rm ker} (\delta_k) / {\rm im}(\delta_{k-1}) \cong {\rm ker} (\delta_k) \cap {\rm ker} (\delta_{k-1}^*)= {\rm ker} (\mathbf L_k)
\end{equation}

We will write $\tilde{\mathcal{H}}^k(\mathcal K,\mathbb R)$ simply as $\tilde{\mathcal{H}}^k$ without causing confusion. The kernel space is closely associated with harmonic functions and will play an important role in our following analysis.

\begin{proof}
    We already have $\delta_k\delta_{k-1}=0$ and thus $\delta_{k-1}^*\delta_{k}^*=0$. Then,
    
    \begin{equation}\label{relation_im_ker_up_down}
        {\rm im(\mathbf L_{k,down}) \subset {\rm ker}(\mathbf L_{k,up})}\\
        {\rm im(\mathbf L_{k,up}) \subset {\rm ker}(\mathbf L_{k,down})}
    \end{equation}

    Therefore, 

    \begin{align}
        {\rm ker}(\mathbf L_k)&={\rm ker}(\delta_k^*\delta_k) \cap {\rm ker} (\delta_{k-1}\delta_{k-1}^*)\\&={\rm ker}(\delta_k) \cap {\rm ker} (\delta_{k-1}^*)\\&={\rm ker} (\delta_k) \cap \Big({\rm im} (\delta_{k-1})\Big)^{\perp}\\&\cong \tilde{\mathcal{H}}^k
    \end{align}

\end{proof}

\cref{relation_im_ker_up_down} reveals an important conclusion: $\lambda>0$ is a nonzero eigenvalue of $\mathbf L_k$ if and only if it is a nonzero eigenvalue of $\mathbf L_{k,up}$ and $\mathbf L_{k,down}$. Let the sorted eigenvalues of operator $\mathbf A$ be $\mathbf s(\mathbf A)=(\lambda_0,\dots,\lambda_m)$, where the eigenvalues are in a weakly increasing rearrangement. Denote $\mathbf s(\mathbf A)\doteq \mathbf s(\mathbf B)$ if the multisets of $\mathbf s(\mathbf A)$ and $\mathbf s(\mathbf B)$ have exactly the same nonzero eigenvalues (or they only differ in the multiplicities of zero). Then we have

\begin{equation}\label{eigen_nonzero_up_plus_down}
    \mathbf s(\mathbf L_k)\doteq \mathbf s(\mathbf L_{k,up}) \cup \mathbf s(\mathbf L_{k,down})
\end{equation}

Further using $\mathbf s(AB)\doteq \mathbf s(BA)$, we have an important property that bridges the up and down Hodge Laplacians of the adjacent dimension:

\begin{equation}\label{updown_nonzero_equal}
    \mathbf s(\mathbf L_{k,up})\doteq \mathbf s(\mathbf L_{k+1,down})
\end{equation}

Now we turn our attention to the zero eigenvalues and eigenvectors. The multiplicity of zero eigenvalues of $\mathbf L_k$, or the dimension of null space of Hodge $k$-Laplacian ${\rm ker}(\mathbf L_k)$, is called the $k$-th Betti number $\beta_k$. This is exactly the number of cycles composed of $k$-simplicials that are not induced by a $k$-boundary, or intuitively, $k$-dimensional "holes" in the simplicial complex $\mathcal K$. For example, zero eigenvalues and their eigenvectors of $\mathbf L_0$ are associated with $0$-th homology group of the graph, corresponding to the connected components in the graph. The zero eigenvalues and eigenvectors of $\mathbf L_1$ are associated with cycles (in the usual sense), and those of $\mathbf L_2$ correspond to cavities.

Also note that the following relation always holds.

\begin{equation}
    {\rm im}(\mathbf L_k)={\rm im}(\delta_k^*) \oplus {\rm im}(\delta_{k-1})
\end{equation}

Finally, we present the central theorem of Hodge theory, known as Hodge decomposition. It states that $\mathcal C^{k}$ can be decomposed into three orthogonal subspaces: 

\begin{equation}
    \mathcal C^{k}(\mathcal K,\mathbb R) = \rlap{$\overbrace{\phantom{{\rm im}(\delta_k^*) \oplus {\rm ker}(\mathbf L_k)}}^{{\rm ker}(\delta_{k-1}^*)}$} {\rm im}(\delta_k^*) \oplus \underbrace{{\rm ker}(\mathbf L_k) \oplus {\rm im}(\delta_{k-1})}_{{\rm ker}(\delta_k)}
\end{equation}

For example, for a vector field $\mathcal C^1$, the above equation can be interpreted as follows. Any edge flow can be decomposed into three orthogonal components: a solenoidal (divergence-free) component, a harmonic (both divergence-free and curl-free) component, and a gradient (curl-free) component. In a discrete graph, a divergence of a vertex ($0$-simplex) is defined as the excess of outgoing over incoming weights, while the curl in a directed triangle ($2$-simplex) is defined as the edge sum around the triangle along the positive direction. 

The Hodge decomposition theorem provides insight into our methods. For instance, the kernel space of $\mathbf L_1$ (which plays an important role in our Hodge1Lap) is both divergence-free and curl-free, thus only occurring along the cycles in the graph. Moreover, the Hodge decomposition holds for any order of cochains $\mathcal C^k$, enriching the physical insights of our PE and SE based on Hodge $k$-Laplacians.



\section{Classical results of node-level random walk}

Here are some classical conclusions about the random walk at node level.

Let $X_k$ be the node we are at step $k$, and $S_k(x)$ be the number of times that the random walk visits $x$ during first $t$ steps, then we have

\begin{theorem}
    $\lim_{k\rightarrow \infty}\mathbb E[\frac{S_k(x)}{k}]=\frac{d(x)}{2m}$. $\frac{S_k(x)}{k}$ tends to stationary probability $\mathbf{\pi}$ in probability as $k\rightarrow \infty$.
\end{theorem}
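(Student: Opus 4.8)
The plan is to recognize the target value $d(x)/(2m)$ as the stationary distribution $\pi$ of the walk, and then to treat the two assertions separately: a Cesàro-mean statement for the expectation and a time-average (ergodic) statement for the convergence in probability. First I would verify that $\pi(x)=d(x)/(2m)$ is stationary for $P=D^{-1}A$. Since $G$ is undirected, $A$ is symmetric and $P(x,y)=A(x,y)/d(x)$, so
$(\pi P)(y)=\sum_x \frac{d(x)}{2m}\cdot\frac{A(x,y)}{d(x)}=\frac{1}{2m}\sum_x A(x,y)=\frac{d(y)}{2m}=\pi(y)$,
giving $\pi P=\pi$; normalization follows from $\sum_x d(x)=2m$. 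I would also record that the walk is irreducible whenever $G$ is connected.

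For the expectation, I would write $S_k(x)=\sum_{j=1}^k \mathbf{1}[X_j=x]$, so that $\mathbb{E}[S_k(x)/k]=\frac{1}{k}\sum_{j=1}^k \mathbb{P}(X_j=x)=\frac{1}{k}\sum_{j=1}^k (\mu_0 P^j)(x)$ for the initial law $\mu_0$. The key input is the mean-ergodic convergence $\frac{1}{k}\sum_{j=1}^k P^j(y,x)\to\pi(x)$, valid for finite irreducible chains without any aperiodicity assumption; combining this with $\sum_y\mu_0(y)=1$ and uniform boundedness of the summands yields $\mathbb{E}[S_k(x)/k]\to\pi(x)=d(x)/(2m)$.

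For convergence in probability, the cleanest route is the ergodic theorem for finite irreducible Markov chains: with $f=\mathbf{1}_x$ it gives $\frac{1}{k}\sum_{j=1}^k f(X_j)=S_k(x)/k\to \mathbb{E}_\pi[f]=\pi(x)$ almost surely, hence in probability. If a more self-contained argument is preferred, I would instead control the second moment, writing $\Var(S_k(x)/k)=\frac{1}{k^2}\sum_{i,j}\Cov(\mathbf{1}[X_i=x],\mathbf{1}[X_j=x])$, expressing the joint law through $\mathbb{P}(X_i=x,X_j=x)=\mathbb{P}(X_i=x)\,P^{|j-i|}(x,x)$, and showing the double sum is $o(k^2)$ via the Cesàro convergence of $P^n(x,x)$; Chebyshev's inequality then delivers the claim.

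The main obstacle is the in-probability statement rather than the expectation. The ergodic theorem underneath it rests on the regenerative decomposition of the trajectory into i.i.d. excursions between successive visits to a fixed reference vertex, together with the mean-return-time identity $\mathbb{E}_x[\tau_x^+]=1/\pi(x)$; making this rigorous via the strong Markov property, the classical SLLN for the excursion lengths, and a renewal inversion from number-of-returns to elapsed time is the technical heart. A secondary subtlety is periodicity: for bipartite $G$ the walk is periodic, so $P^j(y,x)$ need not converge pointwise, and I must keep every argument at the level of Cesàro/ergodic averages rather than appeal to equilibration of a single transition kernel.
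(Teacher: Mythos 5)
Your proof is correct. Note, however, that the paper does not actually prove this statement: it appears in the appendix section on classical results of node-level random walk, which the authors explicitly decline to re-derive ("since they have been widely studied, we will not restate them"), so there is no in-paper argument to compare against. Your write-up is the standard one: verifying that $\pi(x)=d(x)/2m$ is stationary for $P=D^{-1}A$ via symmetry of $A$, reducing the expectation claim to Ces\`aro convergence of $\frac{1}{k}\sum_{j\le k}P^j$, and obtaining the in-probability claim either from the ergodic theorem for finite irreducible chains or from a Chebyshev bound on $\mathrm{Var}(S_k(x)/k)$. The two points worth highlighting are ones you already flag: the result implicitly assumes $G$ is connected (irreducibility), consistent with the neighboring theorems in that appendix, and for bipartite $G$ the chain is periodic, so every step must stay at the level of Ces\`aro averages rather than pointwise convergence of $P^j$ --- your proposal handles both correctly.
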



Denote the mean hitting time of $y$ from $x$ by $H_(x,y)$, then clearly

\begin{equation}
    H(x,y)=\sum_{k=1}^\infty k\mathbb P(X_k=y, X_k\neq y,1\leq i<k\Big | X_0=x)
\end{equation}

Generally, $H(x,y)$ is not symmetric. Specifically, $H(x,x)$ is the mean return time from $x$ to itself. We have

\begin{equation}
    H(x,x)=1+\sum_{z\in V}P_{xz}H(z,x)=1+\frac{1}{d(x)}\sum_{z\in \mathcal N(x)}H(z,x)
\end{equation}

In terms of connected graph, we have the following conclusion.

\begin{theorem}
    If the graph $G$ is connected, then $H(x,x)=\frac{2m}{d(x)}$
\end{theorem}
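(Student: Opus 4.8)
The plan is to identify the mean return time $H(x,x)$ as the reciprocal of the stationary probability $\pi(x)$, exploiting the already-established fact that $\pi(x)=d(x)/(2m)$ for the simple random walk on a connected graph. Since $G$ is finite and connected, the walk is an irreducible Markov chain on a finite state space, hence positive recurrent; in particular $H(x,x)<\infty$, so every quantity below is well-defined.

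First I would invoke the strong Markov property to decompose the trajectory into excursions away from $x$. Letting $T_x^{(1)}, T_x^{(2)}, \dots$ denote the successive inter-return times to $x$ (the gaps between consecutive visits), the strong Markov property makes these i.i.d.\ with common mean $\mathbb E[T_x]=H(x,x)$. The visit count $S_k(x)$ in the first $k$ steps is then the counting process of this renewal sequence, so the elementary renewal theorem (equivalently, the strong law of large numbers applied to the partial sums of the $T_x^{(i)}$) yields
\begin{equation}
    \frac{S_k(x)}{k}\longrightarrow \frac{1}{\mathbb E[T_x]}=\frac{1}{H(x,x)}
\end{equation}
almost surely, and hence in probability, as $k\to\infty$.

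Next I would combine this with the ergodic statement already proved, namely $S_k(x)/k\to \pi(x)=d(x)/(2m)$ in probability. Because a sequence can have only one limit in probability, the two limits must coincide:
\begin{equation}
    \frac{1}{H(x,x)}=\frac{d(x)}{2m},
\end{equation}
which rearranges to the claim $H(x,x)=2m/d(x)$.

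The main obstacle is the justification that the inter-return times are genuinely i.i.d.\ with finite mean, i.e.\ that the excursion decomposition and positive recurrence hold. This is exactly where finiteness and connectedness of $G$ are essential: irreducibility on a finite state space guarantees positive recurrence (so $H(x,x)<\infty$) and the strong Markov property guarantees the i.i.d.\ excursion structure. An alternative, more self-contained route avoids renewal theory altogether by directly verifying that $\pi(x)=d(x)/(2m)$ solves the balance equations $\sum_y \pi(y)P_{yx}=\pi(x)$ and then appealing to Kac's lemma, $\pi(x)\,H(x,x)=1$; I would keep this as a backup in case a citation to the renewal theorem is judged too heavy.
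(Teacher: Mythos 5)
Your argument is correct: identifying $H(x,x)$ as the reciprocal of the stationary mass $\pi(x)=d(x)/(2m)$ via the renewal/excursion decomposition (or, in your backup route, via detailed balance plus Kac's lemma) is the standard and complete proof of this classical fact, and you correctly flag finiteness and irreducibility as the points where connectedness is used. The paper itself offers no proof to compare against --- it lists this as a known classical result and explicitly declines to restate the arguments --- so there is nothing further to reconcile; your derivation stands on its own.
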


A slight variation of the conclusion states,

\begin{theorem}
    $\frac{\mathbb E S_k(yx)}{k}=\frac{\mathbb E S_k(y)}{k d(y)}\rightarrow \frac{1}{2m}$
\end{theorem}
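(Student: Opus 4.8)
The plan is to interpret $S_k(yx)$ as the number of traversals of the directed edge $y \to x$ among the first $k$ steps, and to reduce the statement to the already-established visit-frequency theorem $\lim_{k\to\infty}\mathbb{E}[S_k(x)/k] = d(x)/(2m)$. The key observation is that a traversal $y\to x$ occurs exactly when the walk sits at $y$ and then selects $x$ as its next move, an event whose conditional probability is the fixed transition weight $P_{yx}=1/d(y)$ for $x \in \mathcal{N}(y)$. This decouples the edge count from the vertex visit count and yields the first equality; the claimed limit then follows by substituting in the vertex visit frequency.

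First I would write the edge-traversal count as a sum of indicators,
\begin{equation}
    S_k(yx) = \sum_{i=0}^{k-1}\mathbf{1}[X_i = y,\; X_{i+1}=x].
\end{equation}
Taking expectations and conditioning each term on the position at step $i$ gives $\mathbb{P}(X_i=y,\, X_{i+1}=x) = \mathbb{P}(X_i=y)\,P_{yx}$, and since the node-level walk moves to each neighbour of $y$ with equal probability $P_{yx}=1/d(y)$, I obtain
\begin{equation}
    \mathbb{E}\,S_k(yx) = \frac{1}{d(y)}\sum_{i=0}^{k-1}\mathbb{P}(X_i=y) = \frac{1}{d(y)}\,\mathbb{E}\,S_k(y),
\end{equation}
where the last equality uses $S_k(y)=\sum_{i=0}^{k-1}\mathbf{1}[X_i=y]$. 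Dividing by $k$ establishes the first equality $\mathbb{E}S_k(yx)/k = \mathbb{E}S_k(y)/(k\,d(y))$.

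For the limit, I would invoke the earlier theorem $\mathbb{E}[S_k(y)/k]\to d(y)/(2m)$, so that $\mathbb{E}S_k(y)/(k\,d(y)) \to d(y)/(2m\,d(y)) = 1/(2m)$. The only points that require care — and the mildest obstacle here — are bookkeeping ones: ensuring the indexing conventions for $S_k(y)$ and $S_k(yx)$ agree up to a single boundary term, which is $O(1)$ and hence vanishes after dividing by $k$, and noting that the identity presumes $x\in\mathcal{N}(y)$ so that $P_{yx}=1/d(y)$ rather than $0$ (connectivity of $G$, already assumed for the visit-frequency theorem, guarantees the stationary frequencies are the ones used). No genuine probabilistic subtlety arises beyond the one-step conditioning, since the convergence itself is handed to us by the stationary-distribution result.
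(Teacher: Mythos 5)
Your proof is correct. The paper itself offers no proof of this statement --- it lists it among ``classical conclusions'' of node-level random walk and explicitly declines to restate the arguments --- so there is nothing to diverge from; your derivation (writing $S_k(yx)$ as a sum of indicators, factoring out the uniform transition probability $P_{yx}=1/d(y)$ via one-step conditioning, and then invoking the visit-frequency limit $\mathbb{E}[S_k(y)/k]\to d(y)/(2m)$) is exactly the standard argument, and your remarks about the $O(1)$ boundary-indexing discrepancy and the requirement $x\in\mathcal{N}(y)$ are the right caveats.
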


\begin{theorem}
    $G$ is a connected graph, then a simple random walk satisfies $\frac{1}{2m}\sum_{x\in V(G)}\sum_{y\in \mathcal N(x)} H(x,y)=n-1$.
\end{theorem}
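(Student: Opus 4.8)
The plan is to derive the claim directly from the two facts already established for the simple random walk on a connected graph: the one-step recursion for the mean return time, $H(x,x) = 1 + \frac{1}{d(x)}\sum_{z\in\mathcal N(x)} H(z,x)$, together with the closed form $H(x,x) = \frac{2m}{d(x)}$. First I would solve the recursion for the inner sum. Multiplying the recursion through by $d(x)$ and substituting the closed form gives
\[
\sum_{z\in\mathcal N(x)} H(z,x) = d(x)\bigl(H(x,x)-1\bigr) = d(x)\Bigl(\tfrac{2m}{d(x)}-1\Bigr) = 2m - d(x),
\]
an identity valid at every vertex $x$.

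Next I would sum this identity over all vertices $x\in V$. Using the handshake identity $\sum_{x\in V} d(x) = 2m$, the right-hand side collapses to
\[
\sum_{x\in V}\sum_{z\in\mathcal N(x)} H(z,x) = \sum_{x\in V}\bigl(2m - d(x)\bigr) = 2mn - 2m = 2m(n-1),
\]
so that dividing by $2m$ already produces $n-1$ for the double sum of $H(z,x)$ taken over adjacent ordered pairs.

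The one remaining point is to reconcile this with the sum appearing in the statement, which is written with $H(x,y)$ rather than $H(z,x)$. Here I would invoke the symmetry of the undirected adjacency relation: the set of ordered adjacent pairs $\{(x,z): z\in\mathcal N(x)\}$ is invariant under interchanging the two coordinates, so the relabeling $(x,z)\mapsto(z,x)$ turns $\sum_{x}\sum_{z\in\mathcal N(x)} H(z,x)$ into $\sum_{x}\sum_{y\in\mathcal N(x)} H(x,y)$ term by term. The two double sums therefore coincide, and dividing by $2m$ yields $\frac{1}{2m}\sum_{x}\sum_{y\in\mathcal N(x)} H(x,y) = n-1$, as claimed.

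I expect the arithmetic to be routine; the only step demanding care is the last one. Hitting times are \emph{not} symmetric in general, so one must be explicit that it is the \emph{unordered} symmetry of the edge set — not any symmetry of $H$ itself — that makes the two double sums equal, and that the connectedness hypothesis enters solely through the closed form $H(x,x)=2m/d(x)$.
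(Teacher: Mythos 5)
Your proof is correct. The paper does not actually supply a proof of this statement---it lists it among classical facts about node-level random walks and explicitly declines to restate the arguments---but your derivation is exactly the natural one suggested by the two results stated immediately before it: combining the one-step recursion $H(x,x)=1+\frac{1}{d(x)}\sum_{z\in\mathcal N(x)}H(z,x)$ with the return-time formula $H(x,x)=\frac{2m}{d(x)}$ to get $\sum_{z\in\mathcal N(x)}H(z,x)=2m-d(x)$, summing over $x$ via the handshake lemma, and then using the symmetry of the \emph{ordered adjacent pair} index set (not of $H$ itself, as you rightly caution) to identify the two double sums.
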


We have a reformulation for the above result if we define the mean commute time as $C(x,y)=H(x,y)+H(y,x)$, then

\begin{equation}
    \frac{1}{2m}\sum_{(x,y)\in E(G)} C(x,y)=n-1
\end{equation}

There is an interesting and well-known connection between mean commute time and resistance between nodes $r_{xy}$, see \citep{resistancecommute, RethinkingBiconnectivityResitance}.

\begin{theorem}
    $C(x,y)=2m r_{xy}$
\end{theorem}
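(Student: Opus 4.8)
The plan is to prove the identity through the standard electrical-network interpretation of the simple random walk, viewing the connected graph $G$ as a resistor network in which every edge carries unit conductance. In this picture the effective resistance $r_{xy}$ is the potential difference between $x$ and $y$ produced by passing a unit current from $x$ to $y$, so by linearity passing a current of size $2m$ scales that difference to $2m\,r_{xy}$. The whole argument then reduces to identifying hitting times with electrical potentials and combining two such potentials by superposition.

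First I would fix the target $y$ and consider the function $\phi(z)=H(z,y)$, with boundary value $\phi(y)=0$. A first-transition analysis, exactly of the type used for the return-time identity stated above, gives for every $z\neq y$ the relation $d(z)\phi(z)-\sum_{w\in\mathcal N(z)}\phi(w)=d(z)$. Reading $\phi$ as a voltage and each term $\phi(z)-\phi(w)$ as the current across the edge $\{z,w\}$ (unit conductance), this says precisely that the net current forced out of each node $z\neq y$ equals $d(z)$. Hence $\phi=H(\cdot,y)$ is the unique potential — uniqueness holding because $G$ is connected — produced by injecting current $d(z)$ at every node $z$ and extracting the total mass $\sum_z d(z)=2m$ at the grounded sink $y$.

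Next I would write the analogous potential $\psi(z)=H(z,x)$ for the configuration that injects $d(z)$ at every node and extracts $2m$ at $x$, with $\psi(x)=0$. By the superposition principle for linear resistor networks, the difference $\phi-\psi$ is the potential of the \emph{difference} of the two current configurations: the common injections $d(z)$ cancel, leaving a net injection of $2m$ at $x$ and a net extraction of $2m$ at $y$. Evaluating this difference at the two terminals and using $H(x,x)=H(y,y)=0$ gives
\begin{equation}
    (\phi-\psi)(x)-(\phi-\psi)(y)=\big(H(x,y)-H(x,x)\big)-\big(H(y,y)-H(y,x)\big)=H(x,y)+H(y,x)=C(x,y).
\end{equation}
On the other hand, this same potential difference equals, by Ohm's law and the definition of effective resistance, the injected current times the effective resistance, namely $2m\,r_{xy}$. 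Equating the two expressions yields $C(x,y)=2m\,r_{xy}$.

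I expect the main obstacle to be the careful identification in the first step: establishing that $H(\cdot,y)$ is genuinely the electrical potential of the prescribed current injection. This rests on translating the first-step recurrence into Kirchhoff's current law with the correct signs and boundary value at the sink, and on invoking uniqueness of the harmonic extension, which is valid exactly because $G$ is connected (the hypothesis of the theorem). Once that correspondence is in place, the superposition step and the final application of Ohm's law are routine bookkeeping.
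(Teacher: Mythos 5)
Your proof is correct. There is nothing in the paper to compare it against: the theorem appears in the appendix section of classical facts about node-level random walk, and the authors explicitly state that these results ``have been widely studied'' and that they ``will not restate them,'' deferring to the cited references on commute times and effective resistance. Your argument is exactly the standard one from that literature: the first-transition recurrence $d(z)H(z,y)-\sum_{w\in\mathcal N(z)}H(w,y)=d(z)$ identifies $H(\cdot,y)$ as the potential of the configuration injecting $d(z)$ at every node and extracting $2m$ at the grounded sink $y$ (uniqueness of the solution to this discrete Poisson problem with one pinned value is where connectivity is used), and superposing this with the analogous configuration for $H(\cdot,x)$ leaves a net current of $2m$ from $x$ to $y$, whose terminal potential difference is $2m\,r_{xy}$ by Ohm's law and equals $H(x,y)+H(y,x)=C(x,y)$ by direct evaluation. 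The one point worth cleaning up is your use of ``$H(x,x)=H(y,y)=0$'': the paper defines $H(x,x)$ as the mean \emph{return} time, which equals $2m/d(x)$ on a connected graph, not zero. What you actually need is only the grounding convention of your potentials, $\phi(y)=0$ and $\psi(x)=0$ (the first \emph{hitting} time of a vertex starting from itself is zero in the recursion that defines the potential), so the argument is sound, but you should state it in terms of $\phi$ and $\psi$ rather than overloading the symbol $H(x,x)$, which the surrounding text has already reserved for the return time.
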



In addition, the cover time is defined as $C:=\max \{C_v:v\in V\}$, where $C_v$ is the expected number of steps taken by a random walk that starts from $v$ to hit every vertex in the graph. A bound is obtained by Aleliunas et al. (1979),

\begin{corollary}
    The cover time $C$ is at most $2m(n-1)$
\end{corollary}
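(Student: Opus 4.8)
The plan is to bound $C_v$ for every starting vertex $v$ by the expected length of a single fixed closed walk that is guaranteed to visit all $n$ vertices, and then take the maximum. First I would fix a spanning tree $T$ of the connected graph $G$; such a tree exists and has exactly $n-1$ edges. Doubling every tree edge yields a connected multigraph in which every vertex has even degree, so it admits an Eulerian circuit. Reading off the vertices of this circuit gives a closed walk $v_0, v_1, \dots, v_{2(n-1)} = v_0$ in $G$ that traverses each tree edge once in each direction and, in particular, passes through every vertex of $G$; all consecutive pairs are adjacent in $G$, so it is a genuine walk.

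Second, I would bound the cover time by the expected time to traverse this tour. By the strong Markov property and linearity of expectation, the expected number of steps for the walk to hit the prescribed sequence $v_{i+1}, v_{i+2}, \dots$ in order, started from $v_i$, is the sum of the successive hitting times; since the tour already visits every vertex, this dominates $C_{v_i}$. Because the walk is closed and every vertex occurs in it, starting at an arbitrary $v$ and going once around still covers $V$ with the same total cost, so
\[
    C_{v} \le \sum_{i=0}^{2(n-1)-1} H(v_i, v_{i+1})
\]
for each starting vertex $v$.

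Third, I would evaluate the right-hand side using the edge structure of the tour. Each tree edge $\{x,y\}$ is traversed once as $(x,y)$ and once as $(y,x)$, so the directed steps partition into pairs contributing $H(x,y)+H(y,x)=C(x,y)$ for each $\{x,y\}\in T$. Invoking the theorem $C(x,y)=2m\,r_{xy}$ together with $r_{xy}\le 1$ for adjacent vertices (the effective resistance between the endpoints of an edge is at most the unit resistance of that direct edge, by Rayleigh monotonicity), every term is at most $2m$. Summing over the $n-1$ tree edges gives
\[
    \sum_{\{x,y\}\in T} C(x,y) \le (n-1)\cdot 2m = 2m(n-1),
\]
and taking the maximum over starting vertices yields $C \le 2m(n-1)$.

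The main obstacle is the reduction in the second step: one must argue carefully that the expected cover time from an arbitrary start is dominated by the expected time to traverse the deterministic tour, which relies on the strong Markov property to concatenate the successive hitting-time segments and on the fact that the tour is a closed walk, so every rooting covers $V$ with the same total bound. The resistance estimate $r_{xy}\le 1$ is the other point requiring justification, but it is immediate from monotonicity of effective resistance under addition of the direct edge.
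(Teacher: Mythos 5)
Your proof is correct. The paper itself gives no proof of this corollary---it is stated as a classical result attributed to Aleliunas et al.\ (1979), immediately after the commute-time identity $C(x,y)=2m\,r_{xy}$---so there is no in-paper argument to compare against. What you give is the standard spanning-tree/Euler-tour proof, and all the steps hold up: doubling the $n-1$ tree edges yields a closed walk of length $2(n-1)$ through every vertex; the strong Markov property and linearity of expectation bound the cover time from any start by the cyclic sum $\sum_i H(v_i,v_{i+1})$ over the tour; the directed steps pair into commute times $C(x,y)$ over tree edges; and $C(x,y)=2m\,r_{xy}\le 2m$ for adjacent $x,y$ since the effective resistance across an edge is at most $1$ by Rayleigh monotonicity. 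The only stylistic remark is that the original Aleliunas et al.\ argument proves $H(x,y)+H(y,x)\le 2m$ for adjacent vertices directly from the stationary distribution of the walk on directed edges, without invoking electrical networks; your detour through $C(x,y)=2m\,r_{xy}$ is equally rigorous and has the advantage of reusing the theorem the paper states immediately before the corollary, at the small cost of needing the extra (easy) fact $r_{xy}\le 1$.
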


These classical conclusions in node-level random walk provide insights into the design of PE/SE methods such as resistance distance (RD). However, since they have been widely studied, we will not restate them.

\section{Theoretical analysis and proof}\label{SectionProofAppendix}

In this section, we provide proofs, details and examples for \cref{section0simplices}, \cref{section1simplices} and \cref{sectionhighersimplices} in the main text. Note that all the following results are novel and are of great significance in theoretically understanding our methods.

\subsection{Random walk on 0-simplices}\label{subsectiontheoryproofnode}

\subsubsection{Theoretical analysis of 0-RWSE} 

We start with proving the theoretical expressive power of PE/SE based on spatical domain of random walk on $0$-simplices. In order to distinguish RWSE for different orders of simplices, we denote the normal node-level RWSE as $0$-RWSE, our proposed EdgeRWSE as $1$-RWSE, and RWSE based on random walk on $k$-simplices as $k$-RWSE.

We first introduce $2$-FWL, a powerful graph isomorphism test. It assigns colors to all $2$-tuples of nodes and iteratively updates them. The initial color $c_2^{0}(\vv, G)$ of tuple $\vv\in V(G)^2$ is determined by the isomorphism type of tuple $\vv$~\citep{PPGN}. At the $t$-th iteration, the color updating scheme is
\begin{multline}
    c_2^{t}(\vv, G)=\text{Hash}\Big(c_2^{t-1}(\vv, G), \mset{\\
    \big(c_2^{t-1}(\psi_i(\vv, u), G)|i\in [k]\big)|u\in V(G)}\Big), 
\end{multline}
where $\psi_i(\vv, u)$ means replacing the $i$-th element in $\vv$ with $u$. The color of $\vv$ is updated by its original color and the color of its high-order neighbors $\psi_i(\vv, u)$. The color of the whole graph is the multiset of all tuple colors,
\begin{equation}
    c^{t}_2(G)=\text{Hash}(\mset{c^{t}_2(\vv, G)|\vv\in V(G)^2}).
\end{equation}

Given two function $f, g$, $f$ can be expressed by $g$ means that there exists a function $\phi$ $\phi\circ g=f$, which is equivalent to given arbitrary input $H, G$, $f(H)=f(G)\Rightarrow g(H)=g(G)$. We use $f\to g$ to denote that $f$ can be expressed with $g$. If both $f\to g$ and $g\to f$, there exists a bijective mapping between the output of $f$ to the output of $g$, denoted as $f\leftrightarrow g$.

\begin{figure}[t]
\begin{center}
\centerline{\includegraphics[width=0.7\columnwidth]{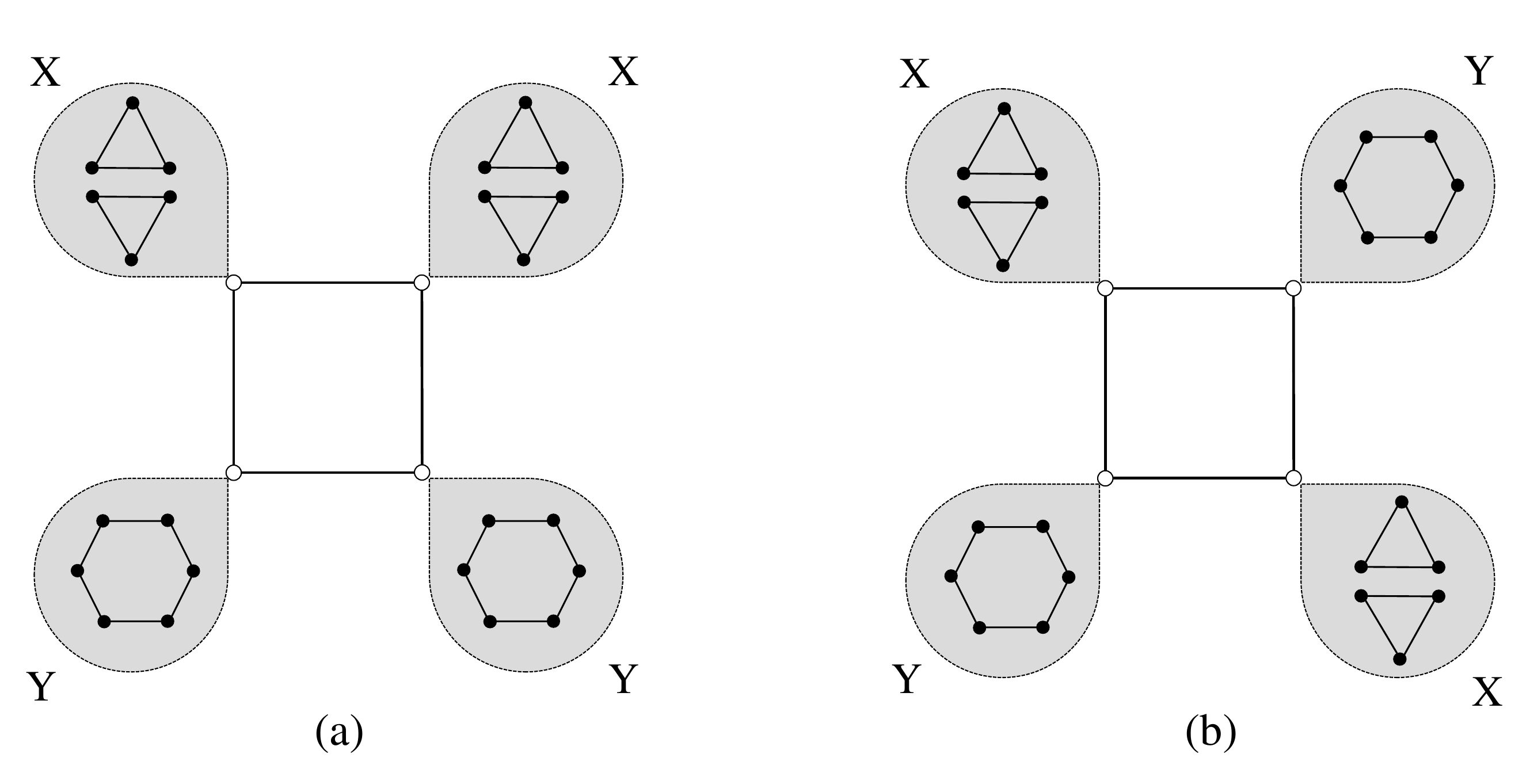}}
\caption{A pair of non-isomorphic graphs that are distinguishable by $2$-FWL but indistinguishable by $0$-RWSE in \cref{theoremappendixRWSE<2FWL}. The shaded sectors denote bicliques $K_{1,6}$, i.e. all six solid nodes in the shade are connected with the hollow node in the square trunk.}
\label{figure_RWSE_less_2FWL}
\end{center}
\vskip -0.2in
\end{figure}

\begin{theorem}\label{theoremappendixRWSE<2FWL}
    {\rm (Theorem 4.1. in main text.)} $0$-RWSE is strictly less powerful than $2$-FWL.
\end{theorem}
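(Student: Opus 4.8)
The plan is to establish the two directions separately: that $0$-RWSE is no more powerful than $2$-FWL (the ``$\preceq$'' direction), and that it is strictly weaker by exhibiting a concrete non-isomorphic pair that $2$-FWL separates but $0$-RWSE does not. For the first direction, I would argue that $2$-FWL can simulate the computation producing the RWSE feature of every node. The key observation, already noted in the excerpt, is that a single $2$-FWL iteration can simulate one matrix multiplication in the spirit of a PPGN block: the color $c_2^t(\vv,G)$ of a pair $(i,j)$ can encode the entries of matrix powers such as $\mathbf A^k$ or the transition matrix $\mathbf P^k = (\mathbf D^{-1}\mathbf A)^k$, because summing over the intermediate index $u$ in the update rule realizes the sum $\sum_u M_{iu}N_{uj}$. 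The degrees $d_i$ needed to form $\mathbf P = \mathbf D^{-1}\mathbf A$ are recoverable from the initial isomorphism type of the diagonal pairs $(i,i)$, so after $O(K)$ rounds the color of the diagonal pair $(i,i)$ determines the full RWSE vector $[\mathbf P_{ii},\dots,\mathbf P_{ii}^K]$. Hence any two graphs whose $0$-RWSE multisets differ are already separated by $2$-FWL, giving $0\text{-RWSE} \to 2\text{-FWL}$.

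For the strictness, I would produce a pair of non-isomorphic graphs on which $0$-RWSE assigns identical node-level feature multisets while $2$-FWL distinguishes them; the figure referenced as \cref{figure_RWSE_less_2FWL} (built from bicliques $K_{1,6}$ attached to a trunk) is evidently the intended witness. The argument here is to compute, or argue by the local-structure symmetry of the construction, that the diagonal return probabilities $\mathbf P_{ii}^k$ coincide across the two graphs for every $k$ and every corresponding node, so the RWSE features are indistinguishable, while $2$-FWL---being strictly stronger than $1$-WL and able to count certain subgraph patterns---separates them. Because return probabilities depend only on closed-walk counts weighted by inverse degrees, two graphs that share the same closed-walk profile at each node but differ in more global connectivity will fool RWSE; the biclique gadget is designed precisely so that local walk statistics agree.

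I expect the main obstacle to be the strictness half rather than the upper bound: one must verify that the two graphs genuinely have matching return-probability sequences for all $k$, not merely for small $k$. This requires either an exact computation of the relevant powers of $\mathbf P$ (feasible given the high symmetry of the biclique construction, where nodes fall into a few orbit classes with identical local neighborhoods) or a structural argument that a degree-preserving local isomorphism of rooted neighborhoods forces equality of all closed-walk counts. The cleanest route is to show that for each node, its rooted walk tree (equivalently, its $1$-WL color refinement combined with degree information) is identical across the two graphs, since the return probability $\mathbf P_{ii}^k$ is determined by the distribution of closed walks, which $1$-WL-equivalent rooted neighborhoods reproduce. One must then separately confirm that $2$-FWL does distinguish the pair, typically because it detects a global feature (such as a difference in triangle incidence or in the multiset of pairwise distances) that the purely diagonal, degree-normalized walk statistics discard. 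Combining $0\text{-RWSE}\to 2\text{-FWL}$ with the existence of this separating pair yields $0\text{-RWSE} \prec 2\text{-FWL}$.
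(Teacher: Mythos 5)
Your upper-bound half is essentially the paper's argument: an induction showing that the $2$-FWL color $c_2^{(t)}(uv,G)$ expresses $[(\mathbf D^{-1}\mathbf A)^k_{uv}\mid k\le t]$, with the degrees extracted in the first round and each subsequent round simulating one matrix product over the intermediate index. No issues there.

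The strictness half has a genuine gap. You correctly point to the biclique construction of \cref{figure_RWSE_less_2FWL} as the witness, but the route you call ``cleanest'' --- showing that corresponding nodes have identical rooted walk trees / $1$-WL colors and concluding that their return probabilities $\mathbf P^k_{ii}$ therefore agree --- rests on a false lemma. $1$-WL-equivalent rooted neighborhoods determine the number of walks of each length \emph{leaving} a node, but not the number of \emph{closed} walks: the universal cover cannot tell which leaves are lifts of the root. The paper's own \cref{figure_6ring_two3ring} ($C_6$ versus two disjoint $C_3$'s) is the counterexample --- every node in both graphs has the same stable $1$-WL color, yet the return probability at step $3$ is $1/4$ in a triangle and $0$ in the hexagon, which is precisely why the paper notes that $0$-RWSE \emph{can} separate some $1$-WL-equivalent pairs. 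So an argument via $1$-WL equivalence cannot establish that RWSE fails on the witness pair. What the paper actually does is a bespoke computation exploiting the gadget structure: every internal node of both $X$ (two triangles) and $Y$ (a six-cycle) has degree $3$, so the first-return time from a gadget to its backbone node is the \emph{same} geometric distribution regardless of gadget type; by induction the backbone nodes all receive one common return-probability sequence $p_B$, the $X$-nodes one common sequence (a function of $p_{XX}$ and $p_B$), and the $Y$-nodes another, and since both graphs contain the same counts of each class the feature multisets coincide. Your alternative fallback (exact computation using the orbit structure) is viable and is in the spirit of what the paper does, but as written the proposal leans on the broken $1$-WL lemma, and the claim that $2$-FWL separates the pair also needs an actual argument (the paper invokes the $3$-pebble bijective game) rather than an appeal to triangle counting in general.
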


\begin{proof}

First, we will prove that $2$-FWL can express $0$-RWSE. Specifically, we will prove that $\forall u\in V, c_2^{(t)}(uv, G)\to [(D^{-1}A)^k_{uv}|k=1,2,...,t]\forall t=1,2,...,$. We prove this by induction on $t$.

\begin{enumerate}
    \item 
    First, 2-FWL can capture degree information in one iteration.
\begin{align}
    c^{(1)}_{2}(uv, G) &\to c^{(0)}_2(uv, G),\mset{(c^{(0)}(uw, G), c^{(0)}(wv, G))|w\in V}\\
    &\to c^{(0)}_2(uv, G),\mset{c^{(0)}(uw, G)|w\in G}
    &\to A_{uv},D_u\\
    &\to (D^{-1}A)_{uv}
\end{align}
Therefore, $c^{(1)}_{2}(uv, G)\to (D^{-1}A)_{uv}$.

\item If $c_2^{(t)}(uv, G)\to [(D^{-1}A)^k_{uv}|k=1,2,...,t]$, at $t+1$-th $2$-FWL iteration.
\begin{align}
c_2^{(t+1)}(uv, G) &\to  c^{(t)}_2(uv, G),\mset{(c^{(t)}(uw, G), c^{(t)}(wv, G))|w\in V}\\
    &\to [(D^{-1}A)^k_{uv}|k=1,2,...,t], \mset{(D^{-1}A)^k_{uw}(D^{-1}A)^k_{wv}|w\in V}\\
    &\to [(D^{-1}A)^k_{uv}|k=1,2,...,t], \sum_w(D^{-1}A)^k_{uw}(D^{-1}A)^k_{wv}\\
    &\to [(D^{-1}A)^k_{uv}|k=1,2,...,t+1]
\end{align}
\end{enumerate}
    Next we prove the strictness, i.e. there exists some non-isomorphic graph pairs distinguishable by $2$-FWL but not by RWSE. We will show that the graph pair in \cref{figure_RWSE_less_2FWL} satisfies this requirement. Both graphs consist of a square backbone with 4 nodes (hollow in the figure), and each backbone node connects with every node in either a copy of $X$ or $Y$. The graph $X$ consists of two triangles, while the graph $Y$ consists of a six-ring. The difference of two graphs is that two backbone nodes connected with the same graph $X$ or $Y$ are adjacent in (a), but are not adjacent in (b). These two graphs are non-isomorphic and can be distinguished by $2$-FWL, but not by RWSE.
    
    The conclusion of $2$-FWL can distinguish graphs (a) and (b) in \cref{figure_RWSE_less_2FWL} can be proved by the 3-pebble bijective game \citep{CFIgraph} on these two graphs, see \citep{WLandGraphSpectra} for details. 

    Then we show that $0$-RWSE fails to distinguish these two graphs step by step.

    1. For the eight hollow nodes in the square backbones in (a) and (b), regardless of whether it is connected to a copy of $X$ or $Y$, once we are at the other six nodes in the biclique $K_{1,6}$ connected with it, the probability of returning to the backbone node for the first time within $k$ step is $P_{k}=\frac{1}{3}^k$. This can be easily verified since every solid node in $X$ and $Y$ has degree 3, therefore the steps it takes to return to the backbone is a geometric distribution parameterized by $\frac{1}{3}$.

    2. Consider random walks only within the 6 solid nodes in biclique $K_{1,6}$ and not returning to the backbone. The probability distributions (or PMFs) for all nodes in all copies of $X$ are the same (which we denote as $p_{XX}$), and PMFs for all nodes in all copies of $Y$ are the same (which we denote as $p_{YY}$). The PMFs for nodes in $X$ and $Y$ are different ($p_{XX}\neq p_{YY}$). This is also straightforward to verify, since nodes in $X$ can only get to the other two solid nodes if they do not return to the backbone, while nodes in $Y$ can get to all nodes. Therefore, RWSE can distinguish subgraphs $X$ and $Y$ (not including the connection between these 6 nodes with backbone). 

    3. For any backbone node in two graphs, it will have probability $\frac{3}{4}$ to walk toward the connected $X$ or $Y$ (but the walk cannot distinguish whether it is connected with $X$ or $Y$ according to 1.), and probability $\frac{1}{4}$ to walk towards another backbone connected with $X$ or $Y$ (again, either case is the same since the returning PMF to backbone cannot distinguish $X$ and $Y$). Combining 1. and 3. and using induction, we can conclude that the returns through biclique or backbone are the same in two graphs, and thus the return probabilities of all eight backbone nodes are the same, which we denote as $p_B$.

    4. For all nodes in any copies of $X$, in each step $t$, the return PMF to $x\in X$ will have probability $\frac{2}{3}$ depending on $p_{XX}$, and probability $\frac{1}{3}$ depending on $p_B$. Once the connected backbone node is reached, the walk returns to the backbone node according to $p_B$, then returns to $X$ with probability $\frac{3}{4}$. Therefore, the final return probabilities of $x\in X$ are the same, depending on $p_{XX}$ and $p_B$. Similarly, the return probabilities of $y \in Y$ are the same, depending on $p_{YY}$ and $p_B$. 

    5. Finally, for both two graphs, all backbone nodes are assigned the same $0$-RWSE feature as $\mathbf b$, all $x\in X$ are assigned the same $\mathbf x$, and all $y\in Y$ are assigned $\mathbf y$. Therefore, the two graphs have 4 $\mathbf b$, 12 $\mathbf x$ and 12 $\mathbf y$, indicating that $0$-RWSE fails to distinguish these two graphs.
\end{proof}

\begin{corollary}
$1$-FWL with $0$-RWSE initialization is strictly more powerful than $0$-RWSE, but is not more powerful than $2$-FWL.
\end{corollary}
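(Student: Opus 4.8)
The plan is to prove the two halves of the statement separately: first that $1$-FWL seeded with $0$-RWSE colors dominates plain $0$-RWSE (with a witnessing pair for strictness), and then that it is in turn dominated by $2$-FWL. For the easy direction $\succeq$, I would observe that initializing the node colors with the $0$-RWSE vectors makes the coloring at iteration $0$ equal to $h^{RWSE}$, so the iteration-$0$ graph readout — the multiset of these colors — already reproduces the $0$-RWSE graph descriptor exactly; since WL refinement only splits colors further, the final descriptor refines that of $0$-RWSE, giving $1$-FWL$+0$-RWSE $\succeq 0$-RWSE.

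For strictness I would reuse the pair $(a),(b)$ of \cref{figure_RWSE_less_2FWL}. By \cref{theoremappendixRWSE<2FWL}, plain $0$-RWSE assigns the same three features $\mathbf b,\mathbf x,\mathbf y$ (backbone, $X$-solid, $Y$-solid) in both graphs with $\mathbf x\neq\mathbf y$, so the two graphs share an identical $0$-RWSE multiset and are $0$-RWSE-indistinguishable. Once these colors seed $1$-FWL, however, a single refinement round splits the backbone nodes by the colors of their six biclique neighbors: a backbone node attached to a copy of $X$ sees six $\mathbf x$'s while one attached to $Y$ sees six $\mathbf y$'s, yielding two backbone types $B_X,B_Y$. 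The sole difference between $(a)$ and $(b)$ — whether the two backbone nodes attached to the same copy are adjacent — is precisely the difference between the cyclic backbone patterns $B_XB_XB_YB_Y$ and $B_XB_YB_XB_Y$, which a further WL round detects (in $(a)$ each $B_X$ has a same-type backbone neighbor, in $(b)$ it does not). Hence $1$-FWL$+0$-RWSE separates $(a)$ and $(b)$, establishing strictness. The crucial and most delicate point, which I expect to be \emph{the main obstacle}, is that plain $1$-FWL alone fails on this pair — the $X$- and $Y$-solid nodes all have degree $3$ with degree-$3$ neighbors and are thus locally indistinguishable to degree-based refinement — so it is exactly the $0$-RWSE seeding that supplies the decisive initial split; I would need to verify carefully that this split propagates to the backbone and that the resulting refined colorings are genuinely stable and distinct across $(a)$ and $(b)$.

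For the upper bound I would invoke the forward direction of \cref{theoremappendixRWSE<2FWL}: $2$-FWL expresses $0$-RWSE, since $c_2^{(t)}(uu,G)\to[(D^{-1}A)^k_{uu}\mid k\le t]$ recovers the return probabilities on diagonal tuples. Because $2$-FWL refinement also dominates ordinary color refinement — its diagonal colors are updated at least as finely as those of $1$-FWL — a single $2$-FWL run simultaneously computes the $0$-RWSE seed and carries out the neighbor aggregation of $1$-FWL. Thus any coloring produced by $1$-FWL$+0$-RWSE is expressible by $2$-FWL, yielding $1$-FWL$+0$-RWSE $\preceq 2$-FWL and completing the corollary. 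Note that this only asserts $\preceq$, so no separating example on the $2$-FWL side is required; indeed on the pair above the two methods coincide.
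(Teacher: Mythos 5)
Your proposal is correct and follows essentially the same route as the paper: strictness via the graph pair of \cref{figure_RWSE_less_2FWL} (where $0$-RWSE yields identical color multisets but the seeded $1$-FWL separates the $B_XB_XB_YB_Y$ and $B_XB_YB_XB_Y$ backbone patterns), and the upper bound by noting that $2$-FWL first recovers the RWSE features on diagonal tuples, as in \cref{theoremappendixRWSE<2FWL}, and then simulates $1$-FWL on the RWSE-augmented graph. You in fact spell out the refinement steps for the strictness half in more detail than the paper, which simply asserts that the seeded $1$-FWL distinguishes the pair; the remark about plain $1$-FWL failing on this pair is true but not needed for the corollary.
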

\begin{proof}
\cref{figure_RWSE_less_2FWL} provides a pair of graphs that $1$-FWL with $0$-RWSE can differentiate, while $0$-RWSE cannot. Therefore, $1$-FWL with $0$-RWSE initialization is strictly more powerful than $0$-RWSE. However, as shown in \cref{theoremappendixRWSE<2FWL}, $\exists t>0, c^{(t)}(uu, G)\to RWSE_{u}$. Therefore,
\begin{align}
    c^{(t+1)}(uv, G)&\to c^{(0)}(uv, G), \{(c^{(t)}(uw, G), c^{(t)}(wv, G))|w\in V\}\\
    &\to c^{(0)}(uv, G), \{c^{(t)}(uw, G)|w\in V\}, \{c^{(t)}(wv, G)|w\in V\}\\
    &\to c^{(0)}(uv, G), c^{(t)}(uu, G), c^{(t)}(vv, G)\\
    &\to c^{(0)}(uv, G), RWSE_u, RWSE_v\\
    &\to c^{(0)}(uv, G^{RWSE}).
\end{align}
where $G^{RWSE}$ is the graph with the 0-RWSE feature. In other words, $2$-FWL can first capture RWSE with a few iterations and then simulate $1$-FWL on the graph with RWSE.
\end{proof}

Despite its upper bound of $2$-FWL, $0$-RWSE is capable of distinguishing some cases where $1$-FWL fails. For example, the non-isomorphic graph pair shown in \cref{figure_6ring_two3ring} is a well-known case that $1$-FWL does not distinguish. However, $0$-RWSE can easily distinguish them, since a $0$-random walk on graph (b) can only visit three nodes within the same triangle, thus having a larger return probability in the third step (and some subsequent steps) than on graph (a). $0$-RWSE can also greatly enhance the performance of GNNs in real-world tasks, as shown in experiments.

For the same reason, we can prove that the newly proposed RRWP~\citep{GRIT} SE is also strictly less powerful than $2$-FWL. RRWP is a natural extension of RWSE, which not only uses return probabilities $\mathbf P_{ii}^{t}$, but also uses non-diagonal elements $\mathbf P_{ij}^{t}$ as augmented edge features. However, it is still based on node-level random walk and can be completely simulated by $2$-FWL. Combining our new results and the results from \citep{RethinkingBiconnectivityResitance}, we provide the following summary for the expressive power of node-level PE/SEs:
\begin{itemize}
    \item RWSE is strictly less powerful than $2$-FWL.
    \item SPDPE (shortest path distance) is strictly less powerful than $2$-FWL.
    \item RDPE (resistance distance) is strictly less powerful than $2$-FWL, but more powerful than SPDPE in distinguishing non-isomorphic distance-regular graphs.
    \item RRWP is strictly less powerful than $2$-FWL, but it is strictly more powerful than SPDPE.
    \item HK (heat kernel) is strictly less powerful than $2$-FWL.
\end{itemize}

\begin{figure}[t]
\begin{center}
\centerline{\includegraphics[width=0.7\columnwidth]{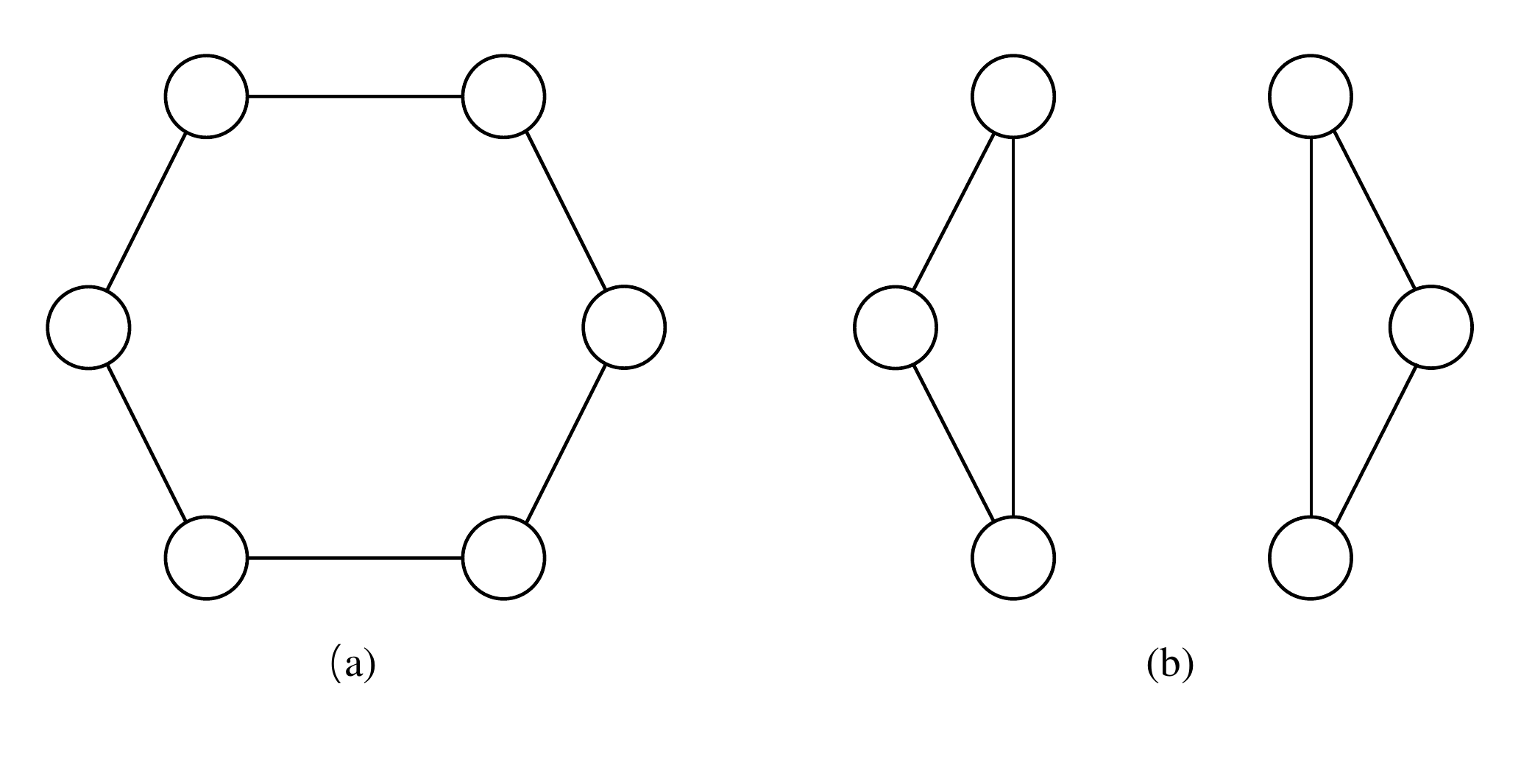}}
\caption{A pair of non-isomorphic graphs that are indistinguishable by $1$-FWL. Graph (a) is a six-cycle, while graph (b) consists of two 3-cycles. Both $0$-RWSE and Hodge-$0$ isospectra are able to distinguish these two graphs.}
\label{figure_6ring_two3ring}
\end{center}
\vskip -0.2in
\end{figure}

\begin{figure}[t]
\begin{center}
\centerline{\includegraphics[width=0.7\columnwidth]{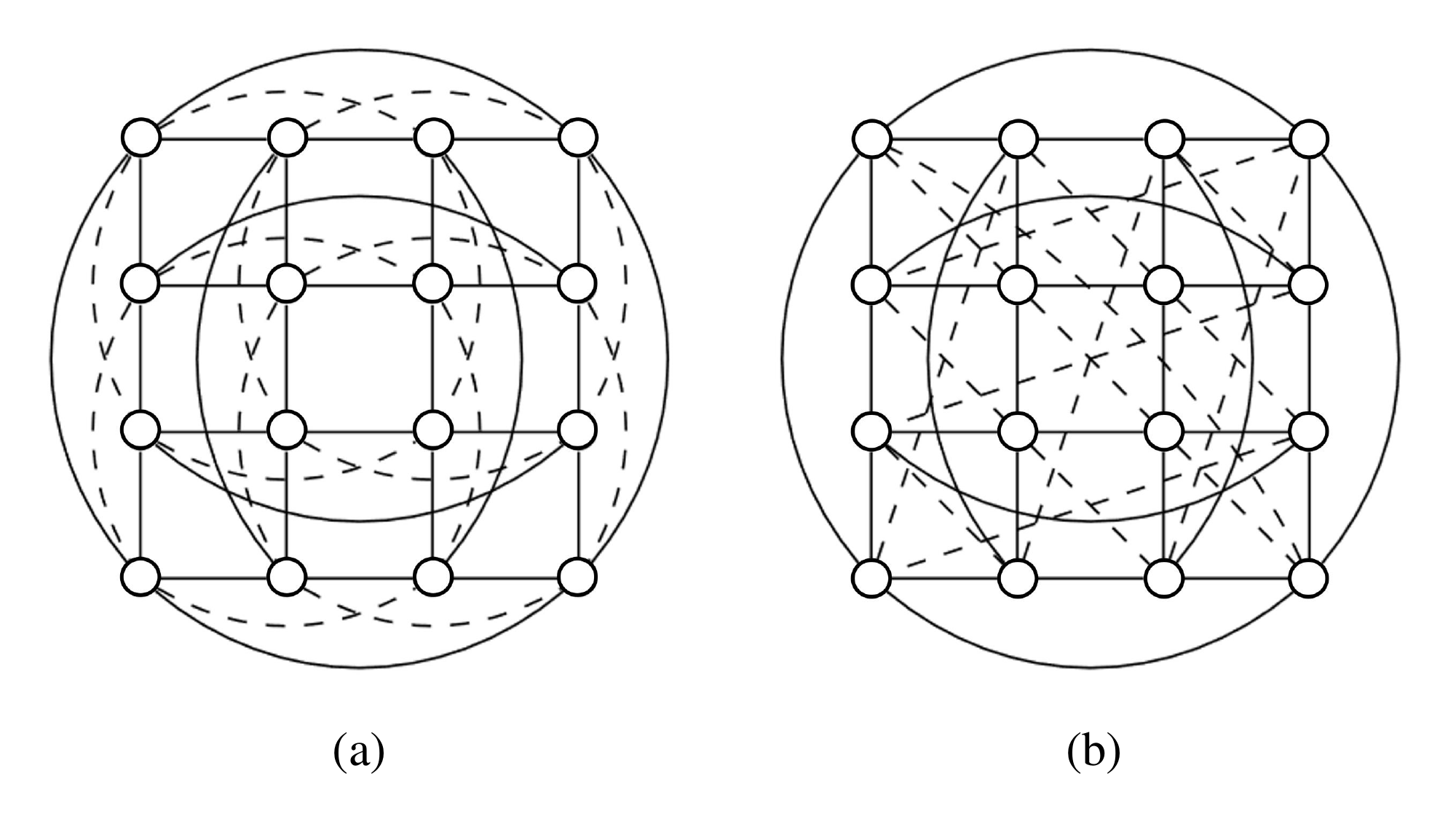}}
\caption{A pair of non-isomorphic graphs that are indistinguishable by $2$-FWL. Graph (a) is the $4\times 4$ Rook's graph, graph (b) is the Shrikhande's graph, both of them are strongly regular graphs parameterized by (16,6,2,2). Both $0$-RWSE and Hodge-$0$ isospectra fail to distinguish these them, while full EdgeRWSE ($1$-RWSE) and Hodge-$1$ isospectra can distinguish them.}
\label{figure_Rook_Shrikhande}
\end{center}
\vskip -0.2in
\end{figure}

\begin{figure}[t]
\begin{center}
\centerline{\includegraphics[width=0.75\columnwidth]{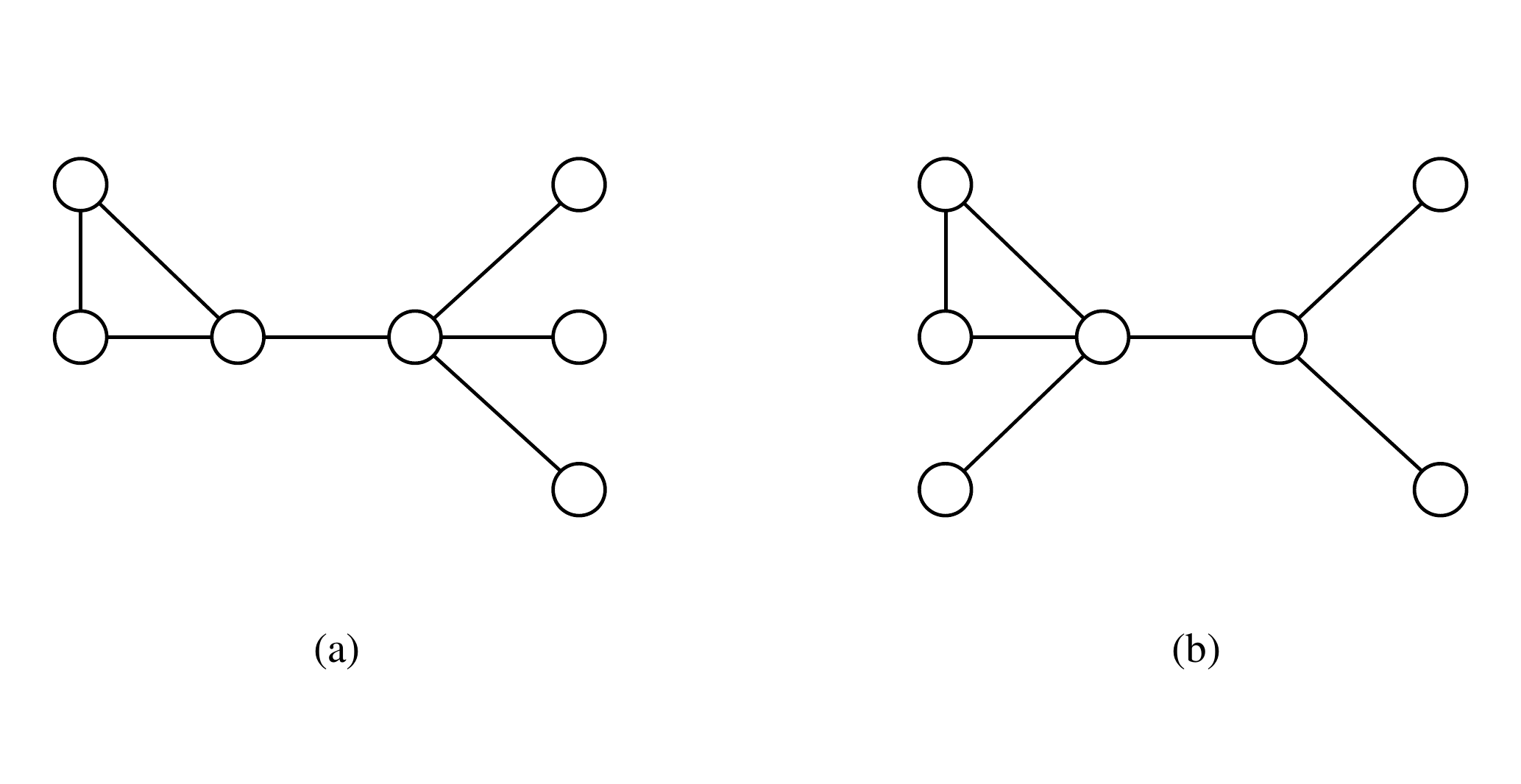}}
\caption{A pair of non-isomorphic graphs that have isospectral Hodge $k$-Laplacians for all $k=0, 1, 2,\dots $, thus they are indistinguishable via eigenvalues of all Hodge $k$-Laplacians. However, they are $1$-FWL distinguishable.}
\label{figure_Hodge_k_isospectral}
\end{center}
\vskip -0.2in
\end{figure}


\subsubsection{Theoretical analysis of Hodge 0 spectra}

As we will analyze Hodge $1$-Laplacians in the following text in detail, Hodge $0$-Laplacians that have been widely studied are not our main concentrations. In this section, we briefly summarize the properties, aiming to compare with Hodge $1$-Laplacians. 

\begin{theorem}
    Hodge $0$-isospectra is incomprable with $1$-FWL and not more powerful than $2$-FWL.
\end{theorem}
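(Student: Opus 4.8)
The plan is to split the statement into two logically independent claims — Hodge $0$-isospectra is incomparable with $1$-FWL, and it is dominated by $2$-FWL — and to handle each separately. Incomparability requires exhibiting two graph pairs that witness separation in opposite directions, whereas the $2$-FWL bound requires showing that the graph-level $2$-FWL color determines the multiset of Laplacian eigenvalues.

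For incomparability I would produce one pair distinguished by the spectrum but not by $1$-FWL, and a second pair distinguished by $1$-FWL but not by the spectrum. For the first I would take the six-cycle versus two disjoint triangles of \cref{figure_6ring_two3ring}: both are $2$-regular on six vertices, so color refinement ($1$-WL) stabilizes to a single color class in each graph and the two color multisets coincide, whereas the Laplacian spectra are $\{0,1,1,3,3,4\}$ for $C_6$ and $\{0,0,3,3,3,3\}$ for $2C_3$, already differing in the multiplicity of the zero eigenvalue (the number of connected components). For the opposite direction I would invoke the pair of \cref{figure_Hodge_k_isospectral}, which is built to be isospectral for every Hodge $k$-Laplacian — in particular for $\mathbf L_0$ — yet is separated by $1$-FWL. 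Together these two witnesses show that neither test can express the other, which is exactly incomparability.

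For the upper bound I would argue that $2$-FWL expresses Hodge $0$-isospectra, reusing the simulation technique of \cref{theoremappendixRWSE<2FWL}. Since $2$-FWL recovers the adjacency entries $\mathbf A_{uv}$ from the initial isomorphism types and the degrees $\mathbf D_{uu}$ within one iteration, it can form the Laplacian entries $(\mathbf L_0)_{uv}=\mathbf D_{uv}-\mathbf A_{uv}$, and by simulating matrix multiplication it obtains pair colors with $c_2^{(t)}(uv,G)\to (\mathbf L_0^{k})_{uv}$ for all $k\le t$. Reading off the diagonal colors $c_2^{(t)}(uu,G)$ and summing yields the power sums $p_k=\Tr(\mathbf L_0^{k})=\sum_u (\mathbf L_0^{k})_{uu}$ for $k=1,\dots,n$. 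By Newton's identities (over $\mathbb{R}$) the first $n$ power sums determine the characteristic polynomial of $\mathbf L_0$, hence the full eigenvalue multiset. Consequently $c_2^{(t)}(G)=c_2^{(t)}(H)$ forces $\mathbf s(\mathbf L_0(G))=\mathbf s(\mathbf L_0(H))$, which is precisely the claim that $2$-FWL is at least as powerful as Hodge $0$-isospectra, recovering the bound attributed to \citet{WLandGraphSpectra}.

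I expect the main obstacle to be the bookkeeping in the upper bound rather than the counterexamples. I must verify that the diagonal entries of $\mathbf L_0^{k}$ are genuinely retrievable: the colors of pairs $(u,u)$ must be separable from off-diagonal pairs (ensured by the initial isomorphism type recording coordinate coincidence), and the graph-level hash must retain the multiset of diagonal colors so that the ``sum'' defining $\Tr(\mathbf L_0^{k})$ is a well-defined function of the color. The spectral step is then purely algebraic and poses no difficulty. I should also flag explicitly that isospectrality is a strictly coarser invariant than the complete $2$-FWL coloring, so only the $\preceq$ direction is being asserted here, not equality.
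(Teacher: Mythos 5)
Your proof is correct, and the incomparability half is exactly the paper's argument: the same two witnesses (the six-cycle versus two disjoint triangles for spectrum-beats-$1$-FWL, and the Hodge-$k$-isospectral pair of \cref{figure_Hodge_k_isospectral} for $1$-FWL-beats-spectrum). Where you genuinely diverge is the upper bound. The paper simply cites \citet{WLandGraphSpectra} for the claim that $\mathbf L_0$-isospectrality is dominated by $2$-FWL, whereas you supply a self-contained argument: $2$-FWL recovers $(\mathbf L_0)_{uv}$ from adjacency and degree, simulates matrix powers so that $c_2^{(t)}(uv,G)\to(\mathbf L_0^{k})_{uv}$ for $k\le t$, extracts the multiset of diagonal colors (legitimately, since the initial isomorphism type separates pairs with coinciding coordinates and colors only refine), and hence the power sums $\Tr(\mathbf L_0^{k})$ for $k=1,\dots,n$, which determine the characteristic polynomial by Newton's identities over $\mathbb{R}$. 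This is the standard route behind the cited result and it makes the theorem self-contained; the caveat you flag — that only $\preceq$ is claimed, since isospectrality is coarser than the full $2$-FWL coloring — is the right one. One small point in your favor: your spectrum $\{0,0,3,3,3,3\}$ for the two disjoint triangles is the correct one, whereas the characteristic polynomial the paper displays for that graph, $(\lambda-8)^9(\lambda-4)^{6}\lambda$, has degree $16$ and is evidently a copy-paste artifact from the Rook's-graph computation; the paper's conclusion is unaffected, but your version is the one that should appear.
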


\begin{proof}
    \citet{WLandGraphSpectra} has already proved that Hodge $0$-isospectra is upper-bounded by $2$-FWL. Here we show that graphs in \cref{figure_6ring_two3ring} which are indistinguishable by $1$-FWL can be distinguished by Hodge $0$-isospectra. The characteristic polynomial for $\mathbf L_1$ of graph (a) is 
    \begin{equation}
        {\rm det}\Big(\lambda \mathbf I-\mathbf L_0(a)\Big)=(\lambda -4)(\lambda-3)^2(\lambda-1)^2\lambda
    \end{equation}
    The characteristic polynomial for $\mathbf L_1$ of graph (b) is 
    \begin{equation}
        {\rm det}\Big(\lambda \mathbf I-\mathbf L_0(b)\Big)=(\lambda-8)^9(\lambda-4)^{6} \lambda
    \end{equation}
    Hence they are distinguishable by Hodge $0$ eigenvalues.
        
    However, the graph pair in \cref{figure_Hodge_k_isospectral} that can be distinguished by $1$-FWL cannot be distinguished by Hodge $0$ eigenvalues, since the characteristic polynomial are both 
    \begin{equation}
        {\rm det}\Big(\lambda \mathbf I-\mathbf L_0(a)\Big)={\rm det}\Big(\lambda \mathbf I-\mathbf L_0(b)\Big)=\lambda(\lambda - 3)(\lambda - 1)^2(\lambda^3 - 9\lambda^2 + 21\lambda - 7)
    \end{equation}

    Therefore, there exist some $1$-FWL distinguishable graph pairs that cannot be distinguished by Hodge-$0$ isospectra and vice versa.
\end{proof}

\subsection{Random walk on 1-simplices}

\subsubsection{Theoretical analysis of 1-RWSE}

We start by proving the expressive power of $1$-RWSE (Full EdgeRWSE and its variations).

\begin{theorem}
    {\rm (First part of Theorem 5.1 in the main text).} Full EdgeRWSE can distinguish some non-isomorphic graphs that are indistinguishable by $2$-FWL.
\end{theorem}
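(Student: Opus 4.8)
The plan is to exhibit a concrete witness pair and reduce the separation claim to a spectral one. I would take the $4\times 4$ Rook's graph and the Shrikhande graph of \cref{figure_Rook_Shrikhande}, both strongly regular with parameters $(16,6,2,2)$. Such cospectral strongly regular graphs with identical parameters are the textbook example of a pair that $2$-FWL (equivalently $3$-WL) cannot separate, so the $2$-FWL-indistinguishability half is inherited directly from that standard fact (and is asserted in the figure caption). All the real work is then to show that full EdgeRWSE \emph{does} separate them.

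Next I would reduce EdgeRWSE-separation to a statement purely about the spectrum of the (normalized) Hodge $1$-Laplacian. The key simplification is that both graphs are $6$-regular and, since $\lambda=2$, every edge lies in exactly the same number of triangles; consequently the adjusted-degree matrices $\mathbf D_1,\mathbf D_2,\mathbf D_3$ appearing in \cref{EquationNormalizedHodge1Lap} are all scalar multiples of the identity, so $\mathbf{\tilde L_1}=c\,\mathbf L_1$ for a common constant $c$. By the defining relation $-\tfrac12\mathbf{\tilde L_1}\mathbf V^T=\mathbf V^T\hat{\mathbf P}$, the eigenvalues of the lifted transition matrix $\hat{\mathbf P}$ are therefore a fixed affine image of the eigenvalues of $\mathbf L_1$. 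Full EdgeRWSE encodes $\hat{\mathbf P}^k$ for $k=1,\dots,K$; in particular the summed return probabilities recover $\operatorname{Tr}(\hat{\mathbf P}^k)=\sum_i\mu_i^k$, and these power sums determine the multiset $\{\mu_i\}$ for $K$ large enough (Newton's identities). Hence it suffices to show that the two graphs have \emph{different} $\mathbf L_1$ spectra: if they do, some $\operatorname{Tr}(\hat{\mathbf P}^k)$ differs, so the multisets of EdgeRWSE features cannot coincide.

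I would then establish the spectral difference through the up/down decomposition. By \cref{eigen_nonzero_up_plus_down} we have $\mathbf s(\mathbf L_1)\doteq\mathbf s(\mathbf L_{1,down})\cup\mathbf s(\mathbf L_{1,up})$, and by \cref{updown_nonzero_equal} the down part satisfies $\mathbf s(\mathbf L_{1,down})\doteq\mathbf s(\mathbf L_0)$. Since the two graphs are cospectral strongly regular graphs, their $\mathbf L_0$ spectra coincide, so the down contribution is identical for both. Any separation must therefore come from $\mathbf s(\mathbf L_{1,up})\doteq\mathbf s(\mathbf L_{2,down})$, i.e.\ from the triangle ($2$-simplex) structure. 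This is exactly where the two graphs differ: the neighborhood of each vertex induces two disjoint triangles $2K_3$ in the Rook's graph but a $6$-cycle $C_6$ in the Shrikhande graph, so the edge-to-triangle incidence operator $\mathbf B_2$, and hence $\mathbf L_{2,down}=\mathbf B_2^T\mathbf B_2$, carries genuinely different structure. Verifying that this produces distinct spectra (rather than an accidental coincidence of nonzero eigenvalues) is the main obstacle, and it is the one step I expect to require an explicit computation: I would compute the characteristic polynomials of $\mathbf L_2$ (equivalently of $\mathbf L_{1,up}$) for the two $16$-vertex complexes and check they differ. The regularity reduction above is what makes this tractable, since it confines all the combinatorics to the triangle-edge incidence structure and lets the unnormalized $\mathbf L_1$ stand in for the normalized transition dynamics.
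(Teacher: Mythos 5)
Your witness pair is the same one the paper uses (\cref{figure_Rook_Shrikhande}), but your route to separating the two graphs is genuinely different. The paper's proof is purely combinatorial and spatial: it exploits edge-transitivity to reduce to a single source edge, observes that every edge lies in exactly two triangles in both graphs, and then uses the fact that in the Rook's graph those two triangles sit inside a common $4$-clique (so the $1$-up walk is trapped among the $6$ edges of that clique, giving limit return probability $\tfrac16$ and third-step return probability $\tfrac18$), whereas in the Shrikhande graph the triangles form one connected component under upper adjacency (limit return probability $\tfrac1{48}$, third-step $\tfrac1{16}$). No eigenvalue computation is needed. Your proof instead reduces the claim to $\mathbf L_1$-isospectrality via power sums and the up/down decomposition; this is essentially the content of the paper's separate Theorem 5.3, whose proof supplies exactly the characteristic polynomials you defer to ($(\lambda-8)^9(\lambda-4)^{30}\lambda^9$ versus $(\lambda-8)^9(\lambda-6)(\lambda-3-\sqrt5)^6(\lambda-4)^{15}(\lambda-2)^9(\lambda-3+\sqrt5)^6\lambda^2$), so the decisive computation does close. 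Your approach buys a cleaner conceptual statement (the separation lives entirely in $\mathbf s(\mathbf L_{1,up})$, since the down spectra coincide by cospectrality of the underlying SRGs); the paper's approach buys an explicit, checkable probabilistic certificate at a finite step and avoids Newton's identities and large-$K$ requirements.

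Two steps in your reduction need repair. First, the claim $\mathbf{\tilde L_1}=c\,\mathbf L_1$ is false even for these regular complexes: with $\mathbf D_2=2\mathbf I$, $\mathbf D_1=24\mathbf I$, $\mathbf D_3=\tfrac13\mathbf I$ one gets $\mathbf{\tilde L_1}=\tfrac1{12}\mathbf L_{1,down}+\tfrac16\mathbf L_{1,up}$, which rescales the two parts differently. The conclusion survives because ${\rm im}(\mathbf L_{1,down})\perp{\rm im}(\mathbf L_{1,up})$, so the nonzero spectrum of $a\mathbf L_{1,down}+b\mathbf L_{1,up}$ is still the disjoint union of the separately rescaled nonzero spectra, and the down parts agree while the up parts differ; but the argument should be phrased through that orthogonality rather than through a global scalar. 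Second, $\hat{\mathbf P}$ is $2m\times 2m$ while $\mathbf{\tilde L_1}$ is $m\times m$, so the relation $-\tfrac12\mathbf{\tilde L_1}\mathbf V^T=\mathbf V^T\hat{\mathbf P}$ only pins down $m$ of the $2m$ eigenvalues of $\hat{\mathbf P}$, and the merged return probabilities are not literally $\Tr(\hat{\mathbf P}^k)$; what the intertwining actually yields is $\hat{\mathbf P}^k_{(ij),(ij)}-\hat{\mathbf P}^k_{(ij),(ji)}=(-\tfrac12)^k(\mathbf{\tilde L_1}^k)_{ee}$, so you should extract $\Tr(\mathbf{\tilde L_1}^k)$ from the signed combination of the two orientation-resolved return probabilities (which the full EdgeRWSE features do contain) rather than from a raw trace of $\hat{\mathbf P}^k$. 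With those two fixes your argument is sound.
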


\begin{proof}
    We will show that a pair of strongly regular graphs parameterized by (16, 6, 2, 2), $4\times 4$ Rook's graph and Shrikhande's graph (shown in \cref{figure_Rook_Shrikhande}), are indistinguishable by $2$-FWL but distinguishable by $1$-RWSE. $2$-FWL fails to distinguish any pair of strongly regular graphs with same parameters, which is a classical result, hence we refer readers to \citep{MPSimplicialN} for a complete proof and skip here. 

    Now we show that the full $1$-RWSE is able to distinguish these two graphs. Only the $1$-up random walk (i.e. walk towards those $1$-up neighbors via shared $2$-faces) needs to be considered, since the $1$-down random walks on two graphs are identical (see \cref{1downEdgeRWSE<2FWL}). For $1$-up random walk, note that all edges in the same graph share the same status, thus we only need to analyze the random walk rooted at one arbitrary edge in two graphs, respectively. All edges in two graphs are contained in two different $2$-simplices (triangles). However, the two $2$-faces in (a) $4\times 4$ Rook's graph belong to the same $3$-simplex or $4$-clique, but there are no $4$-cliques in (b) Shrikhande's graph. Note that all $2$-simplices in graph (a) are only connected to the $2$-simplices that are in the same $3$-simplex; consequently, an $1$-up random walk on graph can only visit 4 different $2$-simplices (which are all faces of the $3$-simplex or $4$-clique the source edge is in), and can only visit 6 different edges in the $4$-clique including the source itself. But the $2$-simplices in graph (b) are all connected, so the $1$-up random walk has the probability of visiting all triangles and edges in the graph. Therefore, as time tends to infinity, the limit return probability of a $1$-up random walk on $4\times 4$ Rook's graph is

    \begin{equation}
        \lim_{t\rightarrow \infty} p_{1,up}^{\rm R}(t)=\frac{1}{6}
    \end{equation}

    The limit return probability of a $1$-up random walk on Shrikhande's graph is

    \begin{equation}
        \lim_{t\rightarrow \infty} p_{1,up}^{\rm S}(t)=\frac{1}{48}
    \end{equation}

    Also, one can verify the difference on two walks via the return probabilities at the third step (or any other following steps, since the return probability on Rook's graph is always larger than that of Shrikhande's graph when $t\geq 3$):

    \begin{equation}
        p_{1,up}^{\rm R}(3)=\frac{1}{8}, p_{1,up}^{\rm S}(3)=\frac{1}{16}
    \end{equation}

    Therefore, we have shown $1$-up random walk can distinguish these two graphs, so can $1$-random walk. Therefore, $1$-RWSE can distinguish this pair indistinguishable from $2$-FWL.

\end{proof}

Note that the $1$-down RWSE is unable to distinguish $4\times 4$ Rook's graph and Shrikhande's graph, since it does not consider any $2$-simplicial faces. Similarly, $0$-RWSE is also unable to distinguish them, which is consistent with our previous results. Actually, both $1$-down RWSE and $0$-RWSE are upper bounded by $2$-FWL, see below.


\begin{theorem}\label{1downEdgeRWSE<2FWL}
    {\rm (Second part of Theorem 5.1 in the main text.) $1$-down RWSE is not more powerful than $2$-FWL.}
\end{theorem}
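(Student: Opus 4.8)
The plan is to show that $2$-FWL can express $1$-down RWSE, i.e. $1\text{-down RWSE}\preceq 2\text{-FWL}$, by proving that for every edge $(u,v)$ the $2$-FWL color $c_2^{(t)}(uv,G)$ eventually determines the $1$-down RWSE feature of that edge (concretely the return probabilities $[\,[\mathbf P_{down}^{k}]_{(u,v),(u,v)}\mid k=1,\dots,t\,]$), mirroring the argument of \cref{theoremappendixRWSE<2FWL} for $0$-RWSE. The conceptual reason this should work is that I identify each edge with the node pair $(u,v)$ (a $2$-tuple with $A_{uv}=1$), and the $1$-down neighbors of an edge — the edges sharing exactly one endpoint — are precisely the tuples reachable by the $2$-FWL coordinate-replacement maps $\psi_1((u,v),w)=(w,v)$ and $\psi_2((u,v),w)=(u,w)$. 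Thus one step of either the directed or the undirected $1$-down walk moves along exactly the neighbor relation that $2$-FWL aggregates over, so the color refinement is structurally able to track the walk.

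First I would record that $2$-FWL captures the local data the walk depends on: after one iteration $c_2^{(1)}(uv,G)$ determines $A_{uv}$ together with both endpoint degrees $d_u,d_v$, exactly as in the degree step of \cref{theoremappendixRWSE<2FWL}. Since the one-step $1$-down transition probability from $(u,v)$ to a down-neighbor sharing node $u$ equals $\tfrac12\cdot\tfrac{1}{d_u-1}$ in the directed variant (and the analogous degree-proportional value in the undirected variant), with the self-return case when an endpoint is a leaf also a function of $d_u,d_v$, every entry of the single-step transition matrix $\mathbf P_{down}$ between down-adjacent edges is an explicit function of endpoint degrees and is therefore expressible from $c_2^{(1)}$.

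The inductive step is the heart of the argument, and the cleanest route is to exploit that each $1$-down step is mediated by the shared node, so $\mathbf P_{down}$ and its powers factor through the incidence matrix $\mathbf B_1$ and node-level quantities: schematically $\mathbf L_{1,down}=\mathbf B_1^{T}\mathbf B_1$, and powers of the normalized $1$-down operator can be written as $\mathbf B_1^{T}$ times a node-indexed matrix (built from degrees and the node-level random walk) times $\mathbf B_1$. Consequently the return value $[\mathbf P_{down}^{k}]_{(u,v),(u,v)}$ is a function of node-level random-walk and degree data at the endpoints $u,v$ — precisely the quantities $[(D^{-1}A)^{j}]_{uv}$, $[(D^{-1}A)^{j}]_{uu}$, $[(D^{-1}A)^{j}]_{vv}$ — all of which $c_2^{(t)}(uv,G)$ already determines by \cref{theoremappendixRWSE<2FWL} (recovering the diagonal endpoint colors $c^{(t)}(uu)$ and $c^{(t)}(vv)$ exactly as in its corollary). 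Carrying out the induction on $t$, I would conclude $c_2^{(t)}(uv,G)\to[\,[\mathbf P_{down}^{k}]_{(u,v),(u,v)}\mid k=1,\dots,t\,]$, so $c_2^{(t)}(uv,G)$ determines the $1$-down RWSE feature of edge $(u,v)$; any downstream GNN itself no more powerful than $2$-FWL, run on these edge features, is then simulable by $2$-FWL, giving $1\text{-down RWSE}\preceq 2\text{-FWL}$.

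I expect the main obstacle to be the bookkeeping in the inductive step: the $1$-down walk is a two-stage process (first choose a shared endpoint, then an incident edge), so a single step carries degree-dependent weights from whichever endpoint mediates it, and I must verify that the intermediate-edge sum $\sum_{e''}[\mathbf P_{down}]_{e,e''}[\mathbf P_{down}^{k-1}]_{e'',e}$ is faithfully reproduced after collapsing through the incidence structure into $2$-FWL's node-sum aggregation — including the self-return contributions when an endpoint is a leaf and the uniform handling of the normalization denominators $d_u-1$ (with the degenerate $d_u=1$ case). Confirming that this same reduction covers both the directed and the undirected variants is where the care lies; once the node-level reduction is pinned down, the expressivity-transfer conclusion follows exactly as in the $0$-RWSE case.
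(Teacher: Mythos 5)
Your proposal rests on the same key observation as the paper's proof: the $1$-down edge walk factors through the node--edge incidence matrix, $\mathbf P_{down}=\mathbf B_1^{T}\mathbf D_1^{-1}\mathbf B_1$ at edge level versus $\mathbf P_0=\mathbf D_1^{-1}\mathbf B_1\mathbf B_1^{T}$ at node level, so powers of the edge walk collapse onto node-level random-walk quantities. Where you genuinely diverge is in the endgame. The paper finishes with the cyclic identity $\Tr(\mathbf P_1^{k})=\Tr(\mathbf P_0^{k})$ and from it asserts that the multiset of edge return probabilities is determined; as written that step only yields the \emph{sum} of the diagonal entries of each power (equivalently the nonzero spectrum), which is weaker than the per-edge multiset it claims. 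You instead push the factorization down to individual pairs: writing $\mathbf P_{down}^{k}=\mathbf B_1^{T}\,\mathbf P_0^{\,k-1}\mathbf D_1^{-1}\mathbf B_1$, the return value $[\mathbf P_{down}^{k}]_{(u,v),(u,v)}$ is an explicit function of $[(D^{-1}A)^{j}]_{uu}$, $[(D^{-1}A)^{j}]_{vv}$, $[(D^{-1}A)^{j}]_{uv}$ and the endpoint degrees, all of which the $2$-FWL color $c_2^{(t)}(uv,G)$ determines by \cref{theoremappendixRWSE<2FWL} and its corollary. This per-pair version is stronger and is what one actually needs to conclude that a $2$-FWL-bounded model equipped with $1$-down RWSE edge features stays $2$-FWL-bounded. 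The bookkeeping you flag (leaf endpoints, the $d_u-1$ normalization, directed versus undirected variants) is real but routine, since every one-step entry of $\mathbf P_{down}$ is a function of $A_{uv}$ and the two endpoint degrees, which $c_2^{(1)}$ already fixes; so your argument goes through and, if anything, closes a gap left by the trace-based shortcut.
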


\begin{proof}
When the graph has no triangle, the edge level random walk matrix is
\begin{equation}
    P_1=B_1^TD_1^{-1}B_1
\end{equation}
while the node level random walk matrix is 
\begin{equation}
    P_0=D_1^{-1}B_1B_1^T
\end{equation}

Therefore, $\forall l,  k$, $tr(P_1^{kl})=tr(P_0^{kl})$, and thus we can determine the multiset of $\{(P_1^l)_{[u,v],[u,v]}|[u,v]\in E\}$.  
\end{proof}


Now we give more discussion on random walks on $1$-simplices. We start by further interpretation of the normalized Hodge $1$-Laplacian and the edge-level random walk that corresponds. The transition matrix defined by $-\frac{1}{2}\mathbf{\tilde L_1} \mathbf V^T=\mathbf V^T \hat{\mathbf P}$ (see Section 5.1 in the main text) can be interpreted as follows: $\hat{\mathbf P}=\frac{1}{2} \mathbf P_{down}+\frac{1}{2}\mathbf P_{up}$, where $\mathbf P_{down}$ and $\mathbf P_{up}$ are the transition matrices determined by a lower-adjacent and upper-adjacent random walk. Specifically, in each step with probability of $0.5$ each, we take a step toward either upper or lower adjacent edges. (1) If the step is taken towards the upper adjacent face ($2$-simplex or oriented triangle), there are two cases: (1a) the edge has an upper adjacent face, then the edge uniformly transits to an upper adjacent edge with different orientation relative to the shared face. (2a) the edge does not have upper adjacent faces, then the walk will keep the same orientation of this edge or change the orientation with equal probability $0.5$ each. (2) If the step is taken toward the lower adjacent face ($0$-simplex or node), the walk transits along or against the edge direction to the lower adjacent nodes with each probability $0.5$. Then from the selected node, the walk further transmits to the target edges connected with the node with probability proportional to the upper degrees or the weights of the target edges.

One can easily verify that the limit distribution of the directed $1$-down random walk (case 1) is the uniform distribution at all edges, independent of the starting position. In comparison, the limit distribution of the undirected $1$-down random walk (case 2) is proportional to the number of $1$-down neighbors of the edges (i.e., the number of edges that share one node with the interested edge), thus providing no information except the number of neighboring edges. However, the initial steps of both two types of edge-level $1$-down random walk are capable of providing rich information on the graph structure.

Here we further explain why simplified 1-down EdgeRWSE variations are appropriate in the cases where the SCs contain few $2$-simplices (triangles). Recall that the full edge-level $1$-random walk will have probability $0.5$ to go through the upper adjacent $2$-faces, but stays at the same edge (with the same probabilities to keep or change direction) if the edge is not a face of the $2$-complex. In this case, the full edge-level random walk described by $\hat{\mathbf P}$ will almost have probability $0.5$ to stay on the same edge (or the edge with an inverse direction if the graph is directed). This results in the walk having larger probability to stay still, or the walk becomes 'lazy' and provides less information about surrounding structures. In comparison, the simplified walk variations neglect the upper adjacent $2$-faces, thus having a greater probability of moving out of the source edge and exploring more about structure information. By reducing $\hat{\mathbf P}$ to $\mathbf P_{down}$, the simplified walk has more probability of moving away from the source, encouraging exploration of more information about the structure.

\subsubsection{Theoretical analysis of Hodge1Lap}\label{subsubsecproofHodge1Lap}

\begin{figure}[t]
\vskip -0.3in
\begin{center}
\centerline{\includegraphics[width=0.95\columnwidth]{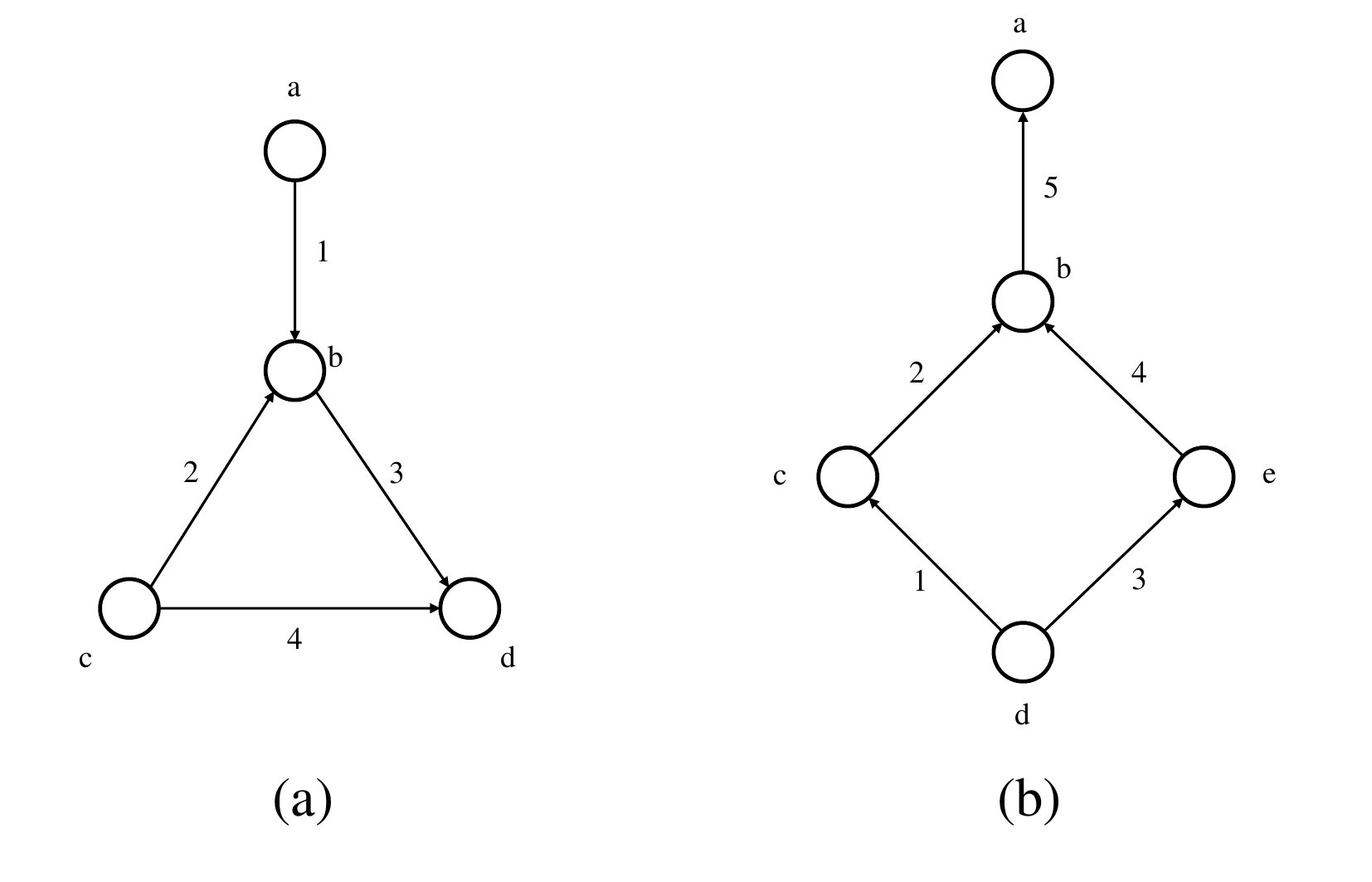}}
\caption{Two simplicial complexes with order $2$ and $1$, respectively.}
\label{figure_example_hodge}
\end{center}
\vskip -0.2in
\end{figure}


We start analyzing the spectral properties of Hodge $1$-Laplacians with proving the theorems in the main text. The following theorem discusses the relationship between spectra of Hodge $0$-Laplacian $\mathbf L_0$ and Hodge $1$-Laplacian $\mathbf L_1$.

\begin{theorem}
    {\rm (Theorem 5.2 in the main text.)} The number of non-zero eigenvalues of Hodge 1-Laplacian $\mathbf L_1$ is not less than the number of non-zero eigenvalues of Hodge 0-Laplacian $\mathbf L_0$.
\end{theorem}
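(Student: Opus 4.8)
The plan is to reduce the comparison entirely to the up/down splitting of the Hodge Laplacians and then invoke the two spectral identities already established earlier in the excerpt: the additive decomposition of nonzero eigenvalues \cref{eigen_nonzero_up_plus_down} and the matching of adjacent up/down spectra \cref{updown_nonzero_equal}. The desired inequality should fall out of simply counting nonzero eigenvalues along this chain, so I do not expect to need any fresh spectral computation.

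Concretely, I would proceed in three moves. First, observe that at the bottom of the complex there is no down part, so $\mathbf L_0=\mathbf B_1\mathbf B_1^*=\mathbf L_{0,up}$, and hence the nonzero spectrum of $\mathbf L_0$ is exactly that of $\mathbf L_{0,up}$. Second, apply \cref{updown_nonzero_equal} with $k=0$ to get $\mathbf s(\mathbf L_{0,up})\doteq\mathbf s(\mathbf L_{1,down})$, which transfers the count to the down part of $\mathbf L_1$. Third, apply the decomposition \cref{eigen_nonzero_up_plus_down} with $k=1$, which says $\mathbf s(\mathbf L_1)\doteq\mathbf s(\mathbf L_{1,up})\cup\mathbf s(\mathbf L_{1,down})$ as multisets of nonzero eigenvalues; since $\mathbf L_{1,up}=\mathbf B_2\mathbf B_2^*$ and $\mathbf L_{1,down}=\mathbf B_1^*\mathbf B_1$ are both positive semidefinite, the number of nonzero eigenvalues of $\mathbf L_1$ is the sum of the numbers contributed by the two pieces. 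Chaining these facts shows that the number of nonzero eigenvalues of $\mathbf L_1$ equals the number for $\mathbf L_{1,up}$ plus the number for $\mathbf L_{1,down}$, hence is at least the number for $\mathbf L_{1,down}$, which in turn equals the number for $\mathbf L_0$; this is precisely the claim.

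The only genuine subtlety, and the step I would justify most carefully, is the additivity of the nonzero spectra invoked in \cref{eigen_nonzero_up_plus_down}: that the up and down pieces contribute disjointly rather than overlapping. This rests on the fundamental identity $\mathbf B_1\mathbf B_2=0$, which forces $\mathbf L_{1,down}\mathbf L_{1,up}=\mathbf B_1^*\mathbf B_1\mathbf B_2\mathbf B_2^*=0$ and, symmetrically, $\mathbf L_{1,up}\mathbf L_{1,down}=0$; thus the two self-adjoint operators annihilate one another, their images are orthogonal (cf. \cref{relation_im_ker_up_down}), and each nonzero eigenvalue of $\mathbf L_1$ is inherited from exactly one summand. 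I would also remark that the inequality is generally strict: $\mathbf L_{1,up}$ contributes extra nonzero eigenvalues exactly when the complex contains $2$-simplices (triangles), which is the geometric content behind the gap between the two counts.
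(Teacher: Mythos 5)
Your proposal is correct and follows essentially the same route as the paper, which deduces the theorem directly from the two spectral identities $\mathbf s(\mathbf L_k)\doteq \mathbf s(\mathbf L_{k,up}) \cup \mathbf s(\mathbf L_{k,down})$ and $\mathbf s(\mathbf L_{k,up})\doteq \mathbf s(\mathbf L_{k+1,down})$ established in the Hodge-theory appendix. Your additional care in justifying the disjoint (additive) contribution of the up and down parts via $\delta_1\delta_0=0$ is exactly the content the paper relies on when proving those identities, so nothing is missing or different in substance.
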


\begin{proof}
    This is a direct conclusion from \cref{eigen_nonzero_up_plus_down} and \cref{updown_nonzero_equal}, which we have proved in \cref{sectionappendixHodgetheory}.
\end{proof}

A direct conclusion is that graph isomorphism based on Hodge 1-Laplacian isospectra (we only discuss the multisets of eigenvalues) is strictly more powerful than Hodge 0-Laplacian. The precondition here is that the two graphs have the same number of nodes and edges, or we can construct counter examples that two non-isomorphic graphs have the same $\mathbf L_1$ but have different numbers of nodes and different $\mathbf L_0$ (their $\mathbf L_0$ may have different dimensions and different multiplicities of zero eigenvalues if two graphs only differ from some extra isolated nodes). Under this condition, \cref{eigen_nonzero_up_plus_down} and \cref{updown_nonzero_equal} state that if we can distinguish two graphs by their eigenvalues of Hodge $0$-Laplacians, we can also do so by eigenvalues of Hodge $1$-Laplacians. For the strictness, the graph pair in \cref{figure_Rook_Shrikhande} can be distinguished by Hodge $1$ isospectra but not by Hodge $0$ isospectra, see below.

\begin{theorem}
    {\rm (Theorem 5.3 in main text.)}$\mathbf L_1$ isospectra is incomparable with 1-FWL and 2-FWL.
\end{theorem}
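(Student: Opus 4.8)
The plan is to unpack what \emph{incomparable} demands and then discharge each half with an explicit witness. Since $\mathbf L_1$ isospectra being incomparable with a test $\mathcal T\in\{1\text{-FWL},2\text{-FWL}\}$ means neither dominates the other, I need four separations in total: for each $\mathcal T$ a pair that $\mathbf L_1$ isospectra tells apart but $\mathcal T$ does not, and a pair that $\mathcal T$ tells apart but $\mathbf L_1$ isospectra does not. All four are supplied by the three pairs already drawn in \cref{figure_Hodge_k_isospectral,figure_6ring_two3ring,figure_Rook_Shrikhande}, so the proof reduces to spectral bookkeeping on these graphs, using the decomposition $\mathbf s(\mathbf L_1)\doteq \mathbf s(\mathbf L_{1,down})\cup \mathbf s(\mathbf L_{1,up})$ together with \cref{eigen_nonzero_up_plus_down,updown_nonzero_equal}.

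First I would settle the two directions in which $\mathbf L_1$ isospectra is the weaker invariant. The pair in \cref{figure_Hodge_k_isospectral} was constructed to have isospectral Hodge $k$-Laplacians for every $k$; in particular its $\mathbf L_1$ spectra coincide, so $\mathbf L_1$ isospectra cannot separate it. Yet this pair is $1$-FWL distinguishable, hence also $2$-FWL distinguishable. This single witness shows simultaneously that neither $1$-FWL nor $2$-FWL is dominated by $\mathbf L_1$ isospectra.

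Next I would produce the separation against $1$-FWL. The six-cycle versus two-triangles pair of \cref{figure_6ring_two3ring} is $1$-FWL indistinguishable but, as computed in the Hodge-$0$ analysis, is separated by $\mathbf s(\mathbf L_0)$. Both graphs have the same number of nodes and edges, and the six-cycle is triangle-free, so there $\mathbf L_{1,up}=0$ and $\mathbf s(\mathbf L_1)\doteq \mathbf s(\mathbf L_{1,down})\doteq \mathbf s(\mathbf L_0)$; a one-line direct computation of $\mathbf s(\mathbf L_1)$ on the two-triangle graph then shows the two $\mathbf L_1$ spectra differ. Hence $\mathbf L_1$ isospectra distinguishes a pair on which $1$-FWL fails, completing incomparability with $1$-FWL.

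The remaining separation, against $2$-FWL, is the heart of the argument and the step I expect to be the main obstacle. Here I use the Rook's versus Shrikhande's pair of \cref{figure_Rook_Shrikhande}: both are strongly regular with parameters $(16,6,2,2)$ and are therefore $2$-FWL indistinguishable, so I must exhibit a genuine difference in $\mathbf s(\mathbf L_1)$. The down-part carries no new information, because strongly regular graphs with equal parameters are cospectral, whence $\mathbf s(\mathbf L_{1,down})\doteq \mathbf s(\mathbf L_0)$ agrees on the two graphs. The separation must therefore live in $\mathbf s(\mathbf L_{1,up})\doteq \mathbf s(\mathbf L_{2,down})$, i.e.\ in the spectrum of $\mathbf B_2^T\mathbf B_2=3\mathbf I+\mathbf T$, where $\mathbf T$ records how the $32$ triangles share edges. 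In Rook's graph the triangles are packed into $K_4$'s, each $4$-clique carrying four mutually edge-sharing triangles, whereas Shrikhande's graph contains no $K_4$; this difference in the triangle-sharing pattern changes the spectrum of $\mathbf T$, hence of $\mathbf L_{1,up}$ and of $\mathbf L_1$. I would close by cashing the abstract decomposition out into numbers, exhibiting a concrete discrepancy such as a differing characteristic-polynomial coefficient or eigenvalue multiplicity of $\mathbf L_{2,down}$ on the two graphs — a finite linear-algebra verification, but the one place where the structural intuition about $K_4$'s must be turned into an actual spectral inequality.
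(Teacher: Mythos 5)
Your proposal is correct and rests on the same two witnesses the paper uses: the Hodge-$k$-isospectral pair of \cref{figure_Hodge_k_isospectral} for the direction where $\mathbf L_1$ isospectra loses to both $1$-FWL and $2$-FWL, and the Rook/Shrikhande pair of \cref{figure_Rook_Shrikhande} for the direction where it wins. (Your six-cycle versus two-triangles witness against $1$-FWL is fine but redundant: $2$-FWL indistinguishability of Rook/Shrikhande already implies $1$-FWL indistinguishability, so that single pair settles both halves.) Where you genuinely diverge is in how the Rook/Shrikhande spectral separation is certified. The paper simply exhibits the two characteristic polynomials of $\mathbf L_1$, namely $(\lambda-8)^9(\lambda-4)^{30}\lambda^9$ versus $(\lambda-8)^9(\lambda-6)(\lambda-3-\sqrt 5)^6(\lambda-4)^{15}(\lambda-2)^9(\lambda-3+\sqrt 5)^6\lambda^2$, and reads off that they differ. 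You instead localize the difference: since the two graphs are cospectral strongly regular graphs, $\mathbf s(\mathbf L_{1,down})$ agrees, so by \cref{eigen_nonzero_up_plus_down} and \cref{updown_nonzero_equal} any discrepancy must live in $\mathbf s(\mathbf L_{2,down})=\mathbf s(\mathbf B_2^T\mathbf B_2)$, i.e., in the triangle--edge-sharing pattern, which the $K_4$ packing of the Rook's graph makes block-diagonal while Shrikhande's triangles are all connected. That reduction is a genuinely nicer explanation of \emph{why} the spectra differ (it is also visible in the kernel dimensions $9$ versus $2$, the first Betti numbers). The one step you must not treat as optional is the closing computation you yourself flag: a difference in triangle-sharing structure does not by itself force a difference in spectrum (non-isomorphic but cospectral configurations exist), so the explicit eigenvalue comparison is load-bearing rather than a formality. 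It does go through, as the paper's stated polynomials confirm.
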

 
\begin{proof}
    This theorem compares the expressive power of $\mathbf L_1$ isospectra with the traditional FWL hierarchy. First, we show that there are some non-isomorphic graph pairs distinguishable by $1$-FWL and $2$-FWL cannot be distinguished by $\mathbf L_1$ isospectra, see the example in \cref{figure_Hodge_k_isospectral}. $1$-FWL can distinguish them for the following reasons. The first iteration of $1$-FWL (equivalently, $1$-WL) can hash the nodes of different degrees into different colors. Then in the second iteration, in graph (b) there is one node with degree $3$ that owns two neighbors with degree $1$. However, there is no such node in graph (a); thus, the two graphs have different multisets of colors after 2 iterations of $1$-WL. Therefore, both $1$-FWL and $2$-FWL can distinguish them. However, we do not distinguish them by the multisets of eigenvalues of $\mathbf L_1$. Both of their characteristic polynomials of $\mathbf L_1$ are:
    \begin{equation}
        {\rm det}\Big(\lambda \mathbf I-\mathbf L_1(a)\Big)={\rm det}\Big(\lambda \mathbf I-\mathbf L_1(b)\Big)=(\lambda-3)^2(\lambda-1)^2(\lambda^3-9\lambda^2+21\lambda-7)
    \end{equation}

    Thus, they have identical multisets of eigenvalues of $\mathbf L_1$.

    On the other hand, we show that $\mathbf L_1$ isospectra can distinguish strongly regular graphs in \cref{figure_Rook_Shrikhande}, while $2$-FWL and $1$-FWL are known to fail on them. The characteristic polynomial for Hodge 1 Laplacian of $4\times 4$ Rook's graph (which we denote as $\mathbf L_1(R)$) is

    \begin{equation}
        {\rm det}\Big(\lambda \mathbf I-\mathbf L_1(R)\Big)=(\lambda-8)^9(\lambda-4)^{30} \lambda ^9
    \end{equation}

    The characteristic polynomial of Shrikhande's graph's Hodge $1$-Laplacian $\mathbf L_1(S)$ is

    \begin{equation}
        {\rm det}\Big(\lambda \mathbf I-\mathbf L_1(S)\Big)=(\lambda-8)^9(\lambda-6)(\lambda-3-\sqrt 5)^6(\lambda-4)^{15}(\lambda-2)^9(\lambda-3+\sqrt 5)^6\lambda^2
    \end{equation}

    Obviously, their multisets of eigenvalues are different, hence Hodge $1$-isospectra can distinguish some non-isomorphic graph pairs that even $2$-FWL fail.
\end{proof}

Now we turn our attention to the physical meanings and insights of the spectra of Hodge $1$-Laplacians. Recall that zero eigenvalues and corresponding eigenvectors are associated with harmonic functions, and Hodge1Lap is able to figure out the cycles in the graph by projecting a unit vector into the subspace spanned by zero-eigenvalue eigenvectors (or equivalently, the kernel space of $\mathbf L_1$). Furthermore, the near-zero eigenvalues and their eigenvectors are associated with near-harmonic functions. The eigenvectors of large eigenvalues are non-homology generators for local structures such as clusters, stars, chains, and so on. 

The following theorems state the properties of the Hodge $1$-Laplacian and its harmonic space.

\begin{theorem}\label{theorem_hodge1_calculation}
    In the Hilbert space, the $\mathbf L_1$ of a directed graph without multiple edges or self-loops is a $m\times m$ matrix, where $m$ is the number of directed edges. The elements of $\mathbf L_1$ satisfy: (1) the $i$-th diagonal element corresponding to the edge $e_i$ is $2+|\delta_1(e_i)|$, where $|\delta_1(e_i)|$ is the number of $2$-simplices $e_i$ is in; (2) element $\mathbf L_1[i,j]=1$ if $e_i$ and $e_j$ are $1$-down adjacent (share one node but not in the same $2$-simplex) and two edges have the directions starting from the shared node; (3) $\mathbf L_1[i,j]=-1$ if $e_i$ and $e_j$ are $1$-down adjacent but have different directions starting from the shared node; (4) $\mathbf L_1[i,j]=0$ if $e_i$ and $e_j$ are $1$-up adjacent (in the same $2$-simplex).
\end{theorem}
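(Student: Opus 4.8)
The plan is to expand $\mathbf L_1 = \mathbf L_{1,down} + \mathbf L_{1,up} = \mathbf B_1^T \mathbf B_1 + \mathbf B_2 \mathbf B_2^T$ and read off each entry as a sum over shared vertices and shared triangles. Writing $e_i, e_j$ for the edges indexing the rows and columns, the $(i,j)$ entry is
\begin{equation}
(\mathbf L_1)_{ij} = \sum_{v \in S_0} \mathbf B_1[v, e_i]\,\mathbf B_1[v, e_j] + \sum_{t \in S_2} \mathbf B_2[e_i, t]\,\mathbf B_2[e_j, t],
\end{equation}
where $\mathbf B_1[v, e]$ is $\pm 1$ precisely when $v$ is an endpoint of $e$ (sign fixed by whether the edge points out of or into $v$) and $\mathbf B_2[e, t]$ is $\pm 1$ precisely when $e$ is a face of the oriented triangle $t$. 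Since the graph has no self-loops and no multiple edges, two distinct edges share at most one vertex and lie in at most one common $2$-simplex, so off the diagonal each of the two sums contributes at most a single nonzero term. The proof then reduces to a case split on how $e_i$ and $e_j$ are related.

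First I would dispatch the easy cases. For the diagonal (claim (1)), the down sum runs over the two distinct endpoints of $e_i$, each contributing $(\pm 1)^2 = 1$ and giving $2$, while the up sum runs over the triangles containing $e_i$, each contributing $1$ and giving $|\delta_1(e_i)|$; hence $(\mathbf L_1)_{ii} = 2 + |\delta_1(e_i)|$. If $e_i$ and $e_j$ are not adjacent at all, both sums are empty and the entry vanishes. If they are $1$-down adjacent, they share a unique vertex $v$ and lie in no common triangle, so only the single term $\mathbf B_1[v, e_i]\,\mathbf B_1[v, e_j]$ survives; a case split on the orientations at $v$ yields $+1$ when the two edges are oriented the same way relative to $v$ (both starting at, or both ending at, $v$), matching claim (2), and $-1$ when they are oppositely oriented, matching claim (3).

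The substantive case, and the one I expect to be the main obstacle, is claim (4). Two edges in a common $2$-simplex necessarily also share exactly one vertex, so here both the down term and the up term are present and nonzero, and the claim is that they cancel. I would establish this by a purely local computation on a single oriented triangle $t = [a,b,c]$: fixing the induced boundary orientation $\partial t = [b,c] - [a,c] + [a,b]$ together with the corresponding edge orientations, one checks all three pairs of edges and finds that in each pair the shared-vertex incidence product and the shared-triangle incidence product are exactly negatives of one another, so their sum is $0$. Conceptually this cancellation is the entrywise shadow of the relation $\mathbf B_1 \mathbf B_2 = 0$ (the fundamental theorem of topology $\delta_k \delta_{k-1} = 0$ recorded in \cref{sectionappendixHodgetheory}) restricted to one triangle. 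The only genuine care required is to keep the orientation conventions of $\mathbf B_1$ and $\mathbf B_2$ mutually consistent; once those are pinned down the sign cancellation is forced, and assembling the four cases yields the claimed entrywise description of $\mathbf L_1$.
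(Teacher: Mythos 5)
Your proposal is correct and follows essentially the same route as the paper, which simply asserts that the theorem "can be directly verified by the definition of $\mathbf L_1=\mathbf B_1^*\mathbf B_1+\mathbf B_2\mathbf B_2^*$" and then works one numerical example; you carry out that entrywise verification in full, including the only nontrivial point (the cancellation of the down and up contributions for $1$-up adjacent edges, which you correctly tie to $\mathbf B_1\mathbf B_2=0$). Your reading of cases (2)--(3) as "same vs.\ opposite orientation relative to the shared vertex" is in fact the accurate statement, matching the paper's worked example where two edges both pointing \emph{into} the shared node also yield $+1$.
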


\begin{proof}
    This can be directly verified by the definition of $\mathbf L_1=\mathbf B_k^*\mathbf B_k+\mathbf B_{k+1}\mathbf B_{k+1}^*$.
\end{proof}

We take \cref{figure_example_hodge} (a) as an illustrative example of calculating $\mathbf L_1$. This is a simplicial complex of order $K=2$. When writing matrices, we follow the order $a,b,c,d$ for $0$-simplices (nodes) and the order $1,2,3,4$ for $1$-simplices (directed edges). The node-to-edge incidence matrix is 
\begin{equation}
    \mathbf B_1=\begin{bmatrix}
                    -1 & 0 & 0 & 0\\
                    +1 & +1 & 0 & -1\\
                    0 & -1 & -1 & 0 \\
                    0 & 0 & +1 & +1\\
                \end{bmatrix}
\end{equation}
Therefore the Hodge-$0$ Laplcian is (recall that $\mathbf B_0$ is not defined and is omitted in calculating $\mathbf L_0$)
\begin{equation}
    \mathbf L_0=\mathbf B_1 \mathbf B_1^*=\begin{bmatrix}
                    +1 & -1 & 0 & 0 \\
                    -1 & +3 & -1 & -1\\
                    0 & -1 & +2 & -1\\
                    0 & -1 & -1 & +2\\
                \end{bmatrix}=\mathbf D-\mathbf A
\end{equation}
where $\mathbf D$ is the diagonal matrix with node degrees as diagonal elements, and $\mathbf A$ is the adjacency matrix (in the normal sense). This is completely identical to the widely used graph Laplacian.

Now we come to compute $\mathbf L_1$. There is one $2$-simplex in the simplicial complex, and we take the direction $c\rightarrow b\rightarrow d$ as the positive orientation of the $2$-simplex, then 
\begin{equation}
    \mathbf B_2=\begin{bmatrix}
                    0\\
                    +1\\
                    -1\\
                    +1\\
                \end{bmatrix}
\end{equation}
and we have the Hodge-$1$ Laplacian
\begin{equation}
    \mathbf L_1=\mathbf B_1^*\mathbf B_1 + \mathbf B_2 \mathbf B_2^*= \begin{bmatrix}
                    +2 & +1 & 0 & -1 \\
                    +1 & +2 & +1 & -1\\
                    0 & +1 & +2 & +1\\
                    -1 & -1 & +1 & +2\\
    \end{bmatrix} + \begin{bmatrix}
                    0 & 0 & 0 & 0 \\
                    0 & +1 & -1 & +1\\
                    0 & -1 & +1 & -1\\
                    0 & +1 & -1 & +1\\
    \end{bmatrix} = \begin{bmatrix}
                    +2 & +1 & 0 & -1 \\
                    +1 & +3 & 0 & 0\\
                    0 & 0 & +3 & 0\\
                    -1 & 0 & 0 & +3\\
    \end{bmatrix}
\end{equation}
which is identical to the description in \cref{theorem_hodge1_calculation}. It can be seen that $\mathbf B_1^*\mathbf B_1$ contributes to (1), (2), (3) in \cref{theorem_hodge1_calculation}; $\mathbf B_2 \mathbf B_2^*$ contributes to (1) and (4) in \cref{theorem_hodge1_calculation}. In some sense, if $e_i$ and $e_j$ are both $1$-down adjacent and $1$-up adjacent, the elements $e_{ij}$ or $e_{ji}$ in $\mathbf B_1^*\mathbf B_1$ and $\mathbf B_2 \mathbf B_2^*$ cancel out each other, resulting $e_{ij}=e_{ji}=0$ if $e_i$ and $e_j$ belong to the same $2$-simplex.

Next, we declare some properties about the kernel space of $\mathbf L_1$ (i.e., the $1$-cohomology group). Note that the following cycles are that of length larger than $4$ since directed triangles are considered as $2$-simplices, and in the sense of the undirected graph induced by the directed graph (i.e., we do not care about the directions of edges in the cycle, since any changes of direction are equivalent multiplying $-1$ on the original basis).

\begin{theorem}\label{theorembasiscycle}
    For a directed graph without multiple edges, self-loops or $2$-simplices that contains $c$ cycles, and there are no cycles that share edges, then a group of linearly independent and orthogonal basis of the kernel space of $\mathbf L_1$ is $\mathbf u_i,i=1,\dots,c$, where elements in $\mathbf u_i$ corresponding to the edges on the $i$-th cycle is $\pm \frac{1}{\sqrt{l(i)}}$ and $0$ otherwise, where $l(i)$ is the length of the $i$-th cycle.
\end{theorem}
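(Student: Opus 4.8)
The plan is to reduce the statement to a classical fact about the incidence matrix $\mathbf B_1$ and the cycle space of the graph, and then exploit the edge-disjointness hypothesis. Since the graph has no $2$-simplices, the upper term $\mathbf B_2\mathbf B_2^T$ vanishes and $\mathbf L_1 = \mathbf B_1^T\mathbf B_1 = \mathbf L_{1,down}$. For any real matrix, $\mathbf B_1^T\mathbf B_1\mathbf x = 0$ implies $\|\mathbf B_1\mathbf x\|^2 = \mathbf x^T\mathbf L_1\mathbf x = 0$, so ${\rm ker}(\mathbf L_1) = {\rm ker}(\mathbf B_1)$. Thus it suffices to produce an orthonormal basis of ${\rm ker}(\mathbf B_1)$, the space of divergence-free edge flows (circulations), whose dimension equals the first Betti number $\beta_1$ as recalled earlier.

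First I would verify that each $\mathbf u_i$ lies in ${\rm ker}(\mathbf B_1)$. Fixing a direction of traversal around the $i$-th cycle $v_0\to v_1\to\cdots\to v_{l(i)-1}\to v_0$, I assign the entry $+1/\sqrt{l(i)}$ to an edge whose orientation agrees with the traversal and $-1/\sqrt{l(i)}$ to one that opposes it. Each vertex of the cycle is incident to exactly one cycle-edge entering and one leaving along the traversal, and by the sign convention of $\mathbf B_1$ their contributions to $(\mathbf B_1\mathbf u_i)_v$ cancel; every other vertex receives no contribution since $\mathbf u_i$ is supported on cycle $i$. Hence $\mathbf B_1\mathbf u_i = 0$, i.e. $\mathbf u_i$ is a circulation of magnitude $1/\sqrt{l(i)}$ around the cycle.

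Next I would establish orthonormality and spanning. Because the cycles are pairwise edge-disjoint, the supports of $\mathbf u_i$ and $\mathbf u_j$ are disjoint for $i\neq j$, giving $\langle\mathbf u_i,\mathbf u_j\rangle = 0$; and since $\mathbf u_i$ has exactly $l(i)$ nonzero entries of magnitude $1/\sqrt{l(i)}$, we get $\|\mathbf u_i\| = 1$. The $\mathbf u_i$ are therefore orthonormal, hence linearly independent, so $\dim{\rm ker}(\mathbf L_1)\ge c$. For the reverse inequality I would invoke the classical fact that the cycle space ${\rm ker}(\mathbf B_1)$ is spanned by the signed indicator vectors of the simple cycles; under the hypothesis these simple cycles are precisely the $c$ edge-disjoint cycles, so the $\mathbf u_i$ already span ${\rm ker}(\mathbf B_1)$ and $\dim{\rm ker}(\mathbf L_1) = \beta_1 = c$. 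Combining independence with spanning shows $\{\mathbf u_i\}_{i=1}^c$ is an orthonormal basis.

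The main obstacle is the spanning/dimension step: one must be sure the edge-disjointness hypothesis genuinely forces $\beta_1 = c$, with no additional independent circulation. The delicate case is a vertex shared by two cycles (a cut vertex), where a priori flow might leak between the cycles. This is resolved by noting that a circulation around a single cycle already has zero divergence at every one of its vertices, so the balance conditions at a shared vertex decouple into one condition per cycle; equivalently, a direct argument shows any circulation must vanish on bridge edges and be constant around each cycle, forcing it into the span of the $\mathbf u_i$. Either route closes the gap, and the sign bookkeeping in verifying $\mathbf B_1\mathbf u_i = 0$ is the only other place requiring care.
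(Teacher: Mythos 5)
Your proof is correct, but it takes a genuinely different route from the paper's. The paper works directly with the explicit entries of $\mathbf L_1$ (as characterized in its \cref{theorem_hodge1_calculation}): it checks row by row that $\mathbf L_1\mathbf u_i=0$, splitting into the cases where the row's edge is off the cycle and unconnected, off the cycle but lower-adjacent to it, or on the cycle, and verifying the $\pm 1$ cancellations in each case. You instead observe that with no $2$-simplices $\mathbf L_1=\mathbf B_1^T\mathbf B_1$, so ${\rm ker}(\mathbf L_1)={\rm ker}(\mathbf B_1)$, and then verify the much simpler statement that each $\mathbf u_i$ is a divergence-free circulation, checking one cancellation per vertex of the cycle rather than one per row of an $m\times m$ matrix. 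This is cleaner and less error-prone in the sign bookkeeping. You also buy something the paper's proof does not explicitly deliver: the paper stops after showing the $\mathbf u_i$ are orthogonal and lie in the kernel ("we only need to prove that $\mathbf L_1\mathbf u_i=0$"), implicitly leaning on the earlier assertion that $\dim{\rm ker}(\mathbf L_1)=\beta_1$ equals the number of cycles; you make the spanning step explicit by arguing that edge-disjointness forces $\beta_1=c$ and that any circulation vanishes on bridges and is constant around each cycle. That dimension argument is exactly the piece needed to justify calling $\{\mathbf u_i\}$ a \emph{basis} rather than merely an orthonormal system in the kernel, so your version is, if anything, more complete than the paper's.
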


\begin{proof}
    We have already stated that the direction of edges only affect the sign of corresponding element in eigenvectors, WLOG, we suppose that all edges in the $i$-th cycle follow the anticlockwise direction. Then all nonzero elements on the same basis $\mathbf u_i$ have identical signs. We are going to prove this.
    
    Obviously, $\mathbf u_i,i=1,\dots,c$ are linearly independent and orthogonal, since the cycles do not share edges, the nonzero elements in $\mathbf u_i$ do not overlap consequently. Therefore, we only need to prove that $\mathbf L_1 \mathbf u_i=0$. (1) For the $j$-th row where edge $e(j)$ is not in the $i$-th cycle, there are two cases: (1a) $e(j)$ is not connected with any edges in the $i$-th cycle, then obviously the inner product of $j$-th row of $\mathbf L_1$ and $\mathbf u_i$ is zero, since all nonzero elements in the $j$-th row of $\mathbf L_1$ corresponds to zero elements in $\mathbf u_i$; (1b) $e(j)$ is connected with some edges in the cycle, then the only situation is that $e(j)$ connects with two edges $e(j_1),e(j_2)$, where the shared node of these three edges is the source node of $e(j_1)$ and is the target node of $e(j_2)$, and the $j$-th row of $\mathbf L_1[j]$ will have $-1$ in element $j_1$, $+1$ in element $j_2$. Therefore, the inner product $\mathbf L_1[j]^T\cdot \mathbf u_i=0+2\times 0+(-1)\times\frac{1}{\sqrt{l(i)}}+1\times \frac{1}{\sqrt{l(i)}}=0$. (2) If $e(j)$ is in the cycle $i$, then similar to (1b), the non-zero index in both $\mathbf L_1[j]^T$ and $\mathbf u_i$ is the index of two neighboring nodes of $e(j)$, hence $\mathbf L_1[j]^T\cdot \mathbf u_i=0+2\times \frac{1}{\sqrt{l(i)}}+2\times (-1)\times\frac{1}{\sqrt{l(i)}}=0$. Putting all the pieces together, we have proved $\mathbf L_1 \mathbf u_i=0$.
\end{proof}

\cref{figure_example_hodge} (b) represents an example that satisfies the above conditions in the theorem. For illustration, we calculate its $\mathbf L_1$
\begin{equation}
    \mathbf L_1=\begin{bmatrix}
        +2 & -1 & -1 & 0 & 0\\
        -1 & +2 & 0 & +1 & -1\\
        -1 & 0 & +2 & +1 & 0\\
        0 & +1 & +1 & +2 & -1\\
        0 & -1 & 0 & -1 & +2\\
    \end{bmatrix}
\end{equation}
It is straightforward to verify that the eigenvector corresponding to the zero eigenvalue is
\begin{equation}
    \mathbf u_0=\pm \frac{1}{2}[1, 1, 1, -1, 0]^\top
\end{equation}
The elements corresponding to edges in a cycle (edge $1,2,3,4$) are non-zero, while other elements are strictly zero. Therefore, Hodge1Lap is capable of detecting cycles through the eigenvectors in ${\rm ker}(\mathbf L_1)$.

\begin{corollary}
    Suppose that the above conditions hold, denote $\mathbf U=[\mathbf u_1,\dots,\mathbf u_c]$. Then for a vector $\mathbf e$ of length $m$ and all elements $1$, then projection $\Big|\mathbf U\mathbf U^T\Big|\mathbf e$ has $j$-th element $1$ if $e(j)$ is in a cycle and $0$ otherwise. $\Big|\Big|$ indicates taking the absolute values element-wise.
\end{corollary}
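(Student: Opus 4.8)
The plan is to leverage the explicit basis from \cref{theorembasiscycle} and exploit the disjoint-support structure of the cycle eigenvectors. First I would write $\mathbf U\mathbf U^T=\sum_{i=1}^c \mathbf u_i\mathbf u_i^T$ and observe that, because no two cycles share an edge, the supports of the $\mathbf u_i$ are pairwise disjoint subsets of the edge set. Consequently the only nonzero entries of $\mathbf U\mathbf U^T$ are those indexed by $(j,k)$ for which the edges $e(j)$ and $e(k)$ both lie on a common cycle $i$; any pair straddling two distinct cycles, or involving an edge in no cycle, contributes $0$ from every term. For a pair lying on cycle $i$, the entry equals $s_j s_k/l(i)$, where $s_j,s_k\in\{+1,-1\}$ are the orientation signs prescribed by \cref{theorembasiscycle} and $l(i)$ is the cycle length.

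The second step is to take the element-wise absolute value. Since within each cycle block every entry has magnitude exactly $1/l(i)$, the matrix $\big|\mathbf U\mathbf U^T\big|$ has entry $1/l(i)$ whenever $e(j)$ and $e(k)$ lie on the same cycle $i$, and $0$ otherwise; the absolute value precisely erases the orientation-dependent sign product $s_j s_k$. This is the crucial point: without taking absolute values, the row sums would depend on the sign pattern of the edges around the cycle and need not equal $1$.

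Finally I would compute the $j$-th entry of $\big|\mathbf U\mathbf U^T\big|\mathbf e$ as the $j$-th row sum of $\big|\mathbf U\mathbf U^T\big|$, since $\mathbf e$ is the all-ones vector. If $e(j)$ lies on cycle $i$, then exactly the $l(i)$ edges of that cycle (including $e(j)$ itself) contribute $1/l(i)$ each, giving $l(i)\cdot\tfrac{1}{l(i)}=1$. If $e(j)$ lies on no cycle, the entire $j$-th row of $\big|\mathbf U\mathbf U^T\big|$ vanishes and the entry is $0$. This yields exactly the claimed cycle-indicator vector.

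The main obstacle is conceptual rather than computational: one must recognize that the element-wise absolute value — rather than a naive reuse of the projection $\mathbf U\mathbf U^T$ — is what converts the orientation-dependent projector into a clean combinatorial count over cycle edges. The remaining care is in justifying that the no-shared-edge hypothesis inherited from \cref{theorembasiscycle} forces all cross-cycle entries of $\mathbf U\mathbf U^T$ to vanish, so that $\big|\mathbf U\mathbf U^T\big|$ decomposes into disjoint diagonal blocks, one per cycle, each a constant $1/l(i)$ on an $l(i)\times l(i)$ block; this is immediate once the disjointness of supports is noted.
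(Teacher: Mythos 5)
Your proposal is correct and follows essentially the same route as the paper's own proof: both read off the entries of $\mathbf U\mathbf U^T$ from the explicit disjoint-support basis of \cref{theorembasiscycle}, observe that taking element-wise absolute values yields constant $1/l(i)$ blocks on each cycle, and conclude by summing the $j$-th row against the all-ones vector. Your write-up merely makes explicit the sign bookkeeping $s_j s_k$ that the paper leaves implicit.
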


\begin{proof}
    This is a direct conclusion of \cref{theorembasiscycle}. Note that $|\mathbf U\mathbf U^T|$ has element $\Big|\mathbf U\mathbf U^T\Big|[i,j]=\frac{1}{l(c_{ij})}$ if both edges $i$ and $j$ are in the cycle $c_{ij}$, and $0$ otherwise. Hence the projection of $\mathbf e$ in the kernel space has element $1$ if the corresponding edge is in a cycle and $0$ otherwise, which means that this projection can efficiently mark out cycles. Note that the projection matrix is actually independent of the choice of basis; hence the above conclusion is universal. For the basis matrix $\mathbf V$ where the columns are independent but not orthogonal, the projection matrix should be calculated as $\mathbf V (\mathbf V^T \mathbf V)^{-1}\mathbf V^T$.
\end{proof}

\begin{corollary}
    Suppose that the above conditions hold, except that two cycles $c_1, c_2$ share one edge $e(j)$. Then in every eigenvector $\mathbf u$ in the kernel space of $\mathbf L_1$, $\mathbf u_j=\gamma_1-\gamma_2$, where $\gamma_1, \gamma_2$ are the edge flow and are positive following the anticlockwise direction, 
    \begin{equation}
        |\mathbf u_i(i\neq j)|=\left\{\begin{aligned}\gamma_1, &\ \  e(i)\in c_1 \\ \gamma_2, &\ \  e(i)\in c_2 \\ 0, &\ \  e(i)\notin c_1 ,e(i)\notin c_2\end{aligned}\right.
    \end{equation}
    Further, $(\mathbf U\mathbf U^T \mathbf e)_j=0$.
\end{corollary}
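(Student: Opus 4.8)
The plan is to reduce every claim to the structure of the harmonic space ${\rm ker}(\mathbf L_1)$ and then treat the two overlapping cycles as a flow-conservation problem. First I would record that, since the graph carries no $2$-simplices, $\mathbf B_2=0$ and hence $\mathbf L_1=\mathbf L_{1,down}=\mathbf B_1^T\mathbf B_1$, so ${\rm ker}(\mathbf L_1)={\rm ker}(\mathbf B_1)$. An element $\mathbf u\in{\rm ker}(\mathbf B_1)$ is exactly a divergence-free edge flow (a circulation): at every node the signed sum of the incident entries of $\mathbf u$ vanishes. This is the same identification underlying \cref{theorembasiscycle}, and it lets me analyze an arbitrary harmonic vector purely as a conserved flow.

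Next I would use the theta-graph geometry. Writing $c_1=\{e(j)\}\cup P_1$ and $c_2=\{e(j)\}\cup P_2$, the two endpoints $s,t$ of $e(j)$ have degree $3$ inside $c_1\cup c_2$, while every other vertex of $P_1,P_2$ has degree $2$. At a degree-$2$ vertex the conservation law forces the two incident entries of $\mathbf u$ to be equal in magnitude, so $|\mathbf u|$ is constant along $P_1$ (value $\gamma_1$) and along $P_2$ (value $\gamma_2$); this gives the displayed magnitude formula. Edges outside $c_1\cup c_2$ are bridges, and a leaf-stripping induction (conservation at a degree-$1$ vertex kills its unique edge, then repeat) forces $\mathbf u$ to vanish there. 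Imposing conservation at $s$ on the three edges $e(j)$, the $P_1$-edge and the $P_2$-edge, under the convention that $c_1$ and $c_2$ are both traversed anticlockwise and therefore run oppositely along $e(j)$, yields $\mathbf u_j=\gamma_1-\gamma_2$. This settles the first two assertions for every $\mathbf u\in{\rm ker}(\mathbf L_1)$; if the ambient graph contains further cycles, they are edge-disjoint from $c_1,c_2$ by hypothesis, so they are handled separately by \cref{theorembasiscycle} and do not disturb the local conservation at $s,t$.

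For the final identity $(\mathbf U\mathbf U^T\mathbf e)_j=0$ I would note that $\mathbf p:=\mathbf U\mathbf U^T\mathbf e$ is the orthogonal projection of $\mathbf e$ onto ${\rm ker}(\mathbf L_1)$ and hence is itself a circulation, so by the previous step $\mathbf p_j=\gamma_1-\gamma_2$ for the flows $\gamma_1,\gamma_2$ it assigns to $c_1,c_2$. Taking the two cycle vectors $\mathbf u^{(1)},\mathbf u^{(2)}$ (each $\pm1$ along its cycle, with opposite signs on $e(j)$) as a basis and imposing the normal equations $\langle\mathbf e-\mathbf p,\mathbf u^{(r)}\rangle=0$ for $r=1,2$ reduces the claim to a $2\times2$ linear system whose Gram matrix has diagonal $l_1,l_2$ (the cycle lengths) and off-diagonal $-1$ (the single shared edge); any remaining edge-disjoint cycles contribute basis vectors orthogonal both to coordinate $j$ and to $\mathbf u^{(1)},\mathbf u^{(2)}$, so they drop out of $\mathbf p_j$. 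The main obstacle I anticipate is exactly this last identity: it is equivalent to showing the solution satisfies $\gamma_1=\gamma_2$, i.e. that the flows the two cycles push across $e(j)$ cancel. This is the delicate point, since the right-hand sides $\langle\mathbf e,\mathbf u^{(r)}\rangle$ are orientation-dependent signed edge counts; making the cancellation rigorous requires fixing a symmetric orientation and an appropriately balanced choice of the probe $\mathbf e$, and verifying that no orientation-dependent residue survives at coordinate $j$. Once the sign bookkeeping at $e(j)$ is pinned down so that the two opposite contributions match, $\gamma_1-\gamma_2$ collapses to $0$ and the corollary follows.
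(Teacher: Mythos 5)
Your handling of the first two claims is correct and is essentially a more careful version of what the paper does: the paper simply invokes the cycle basis of \cref{theorembasiscycle} and the observation that the two anticlockwise flows oppose each other on the shared edge, whereas you rederive the same facts from flow conservation on the theta graph ($\ker\mathbf L_1=\ker\mathbf B_1$ in the absence of $2$-simplices, constant magnitude along the degree-$2$ paths, cancellation at the two branch vertices). That part stands.

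The genuine gap is the final identity $(\mathbf U\mathbf U^T\mathbf e)_j=0$. You have correctly located the sore spot, but the ``sign bookkeeping'' you defer to cannot be completed: the identity is false for generic reference orientations. Solving the $2\times2$ normal system you set up, with $b_r=\langle\mathbf e,\mathbf z^{(r)}\rangle$ and Gram matrix $\left(\begin{smallmatrix}l_1&-1\\-1&l_2\end{smallmatrix}\right)$, gives
\begin{equation*}
  \gamma_1-\gamma_2=\frac{(l_2-1)\,b_1-(l_1-1)\,b_2}{l_1l_2-1},
\end{equation*}
and the $b_r$ are orientation-dependent signed edge counts that need not satisfy $(l_2-1)b_1=(l_1-1)b_2$. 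Concretely, take two $5$-cycles sharing one edge $(s,t)$ and orient every edge of both paths (and the shared edge) from $s$ toward $t$; then $b_1=3$, $b_2=-3$, the projection of the all-ones vector is $\tfrac12\mathbf z^{(1)}-\tfrac12\mathbf z^{(2)}$, and its entry on the shared edge is $-1$, not $0$. So the statement can only hold under an additional, unstated symmetry or orientation convention, and no choice of ``balanced probe'' rescues it for asymmetric cycle lengths. For what it is worth, the paper's own one-line justification (``the projection of the two cycles at the $j$-th element will cancel out'') suffers from the same defect: it tacitly treats the two non-orthogonal cycle vectors as contributing equal flows across $e(j)$, which is exactly the unproven assertion $\gamma_1=\gamma_2$. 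Your instinct that this is the delicate point is right; the correct conclusion is that the step fails rather than that it needs more careful bookkeeping.
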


\begin{proof}
    This is also straightforward to verify, since the basis in \cref{theorembasiscycle} still holds for shared edges among cycles. However, the two edge flows in the cycle definitely have opposite directions in the same edge, as we assign anticlockwise as the positive direction for the flows. Consequently, the flow in $e(j)$ is the difference of two edge flows. Moreover, for the projection without operation of taking element-wise absolute value in the projection matrix, the projection of two cycles in the $j$-th element of $\mathbf e$ will cancel out.
\end{proof}

There are also some additional interesting conclusions, for example, if $e(j)$ is in cycle $c$ and in a $2$-simplex, then the edge flow in $e(j)$ will be $\frac{2}{3}\gamma$, where $\gamma$ is the original edge flow in the cycle, and the edge flow in the other two edges of the $2$-simplicial will be $\frac{1}{3}\gamma$. Moreover, the eigenvalue $1$ of $\mathbf L_1$ will mark out the following substructure: node $a,c$ are both only connected to $b$, while $b$ is also connected to a fourth node $d$. All these interesting results can be verified through simple algebra, and we will not list all of them due to the limited space.

Finally, despite the satisfying theoretical properties of Hodge1Lap if we use the projection method (${\rm Hodge1Lap_{proj}}$ in the main text), we experimentally find that a naive summation over the absolute values of eigenvectors (referred as ${\rm Hodge1Lap_{abs}}$) works fine for a number of real-world tasks, although this method is sign-invariant but not basis-invariant. 

We also provide more discussion on the cycle detecting ability of Hodge1Lap. As explained above, in ${\rm Hodge1Lap_{proj}}$, projecting the constant vector onto the kernel space of $\mathbf L_1$ will result in the element $j$ equal to 1 if the edge $j$ is in a cycle, and 0 otherwise. However, it is unable to distinguish different cycles or to indicate which cycle an edge belongs to. In comparison, ${\rm Hodge1Lap_{abs}}$ can distinguish and detect different cycles, which we explain as follows. In the main text, this implementation is described as $\sum_i \phi(|\mathbf v_i|)$, where $||$ indicates taking element-wise absolute values, and $\mathbf v_i$ refers to the interested eigenvectors (e.g. orthogonal basis of kernel space of $\mathbf L_1$, in case that we are discussing detecting cycles). Note that here $\mathbf v_i$ is different from $\mathbf u_i$ described in \cref{theorembasiscycle}; instead, they are arbitrary random eigenvectors, and we aim to use randomness to distinguish different cycles. For simplicity, we first do not consider overlapping or 2-simplexes; then every eigenvector $\mathbf v_i$ corresponding to zero eigenvalues has the following properties: (i) the elements corresponding to edges that are not in any cycles are zero; (ii) the elements corresponding to edges from the same cycles have identical absolute values, while those of edges from different cycles (almost surely) have different absolute values. These properties can be easily verified, since any $\mathbf v_i$ can be represented as a linear combination of $\mathbf u_i$ described in \cref{theorembasiscycle}. Therefore, $\sum_i MLP(|\mathbf v_i|)$ obtains different values for edges from different cycles, which is why we distinguish and detect them. In practical implementations, we further apply a random projection technique to make Hodge1Lap more robust to the choice of basis $\mathbf v_i$: $\sum_j MLP(\sum_i \alpha_{ij}|\mathbf v_i|)$, where $\alpha_{ij}\sim \mathcal N(0,1)$ are random variables, and $j=1,\dots,J$ where $J$ is the maximum number of projections. Through this design, the model can explicitly learn different cycles and implicitly maintain the basis invariance.




\subsection{Random walk on higher order and inter-order simplices}\label{subsecproofhighorder}

\subsubsection{Properties of random walk on higher order and inter-order simplices} 

Analogously to $0$-simplices and $1$-simplices, we can use random walk on any $k$-simplices to build expressive $k$-RWSE in order to facilitate graph and simplicial learning. Meanwhile, Hodge $k$-Laplacians are closely connected with random walk of $k$ order, whose spectra, including their eigenvalues and eigenvectors, are applicable in methods such as PE or spectral convolution learning.

Here we prove some crucial properties of Hodge Laplacians $\mathbf L_k$.

\begin{lemma}\label{equationpowerLk}
\begin{equation}
    \mathbf L_k^r=(\mathbf L_{k,up}+\mathbf L_{k,down})^r=\mathbf L_{k,up}^r+ \mathbf L_{k,down}^r
\end{equation}
\end{lemma}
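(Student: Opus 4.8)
The plan is to reduce the identity to a single algebraic fact: the two summands $\mathbf L_{k,up}$ and $\mathbf L_{k,down}$ \emph{annihilate one another} under multiplication in either order. Once this is established, the expansion of the $r$-th power collapses to its two ``pure'' terms, giving the claim. I would work with the coboundary operators $\delta_{k-1},\delta_k$ and their adjoints, recalling that $\mathbf L_{k,down}=\delta_{k-1}\delta_{k-1}^*$ and $\mathbf L_{k,up}=\delta_k^*\delta_k$; equivalently one can use the matrix representations $\mathbf L_{k,down}=\mathbf B_k^*\mathbf B_k$ and $\mathbf L_{k,up}=\mathbf B_{k+1}\mathbf B_{k+1}^*$.

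First I would invoke the fundamental identity of topology, $\delta_k\delta_{k-1}=0$ (stated in \cref{sectionappendixHodgetheory}), together with its adjoint $\delta_{k-1}^*\delta_k^*=0$. From these the two mixed products vanish:
\begin{equation}
    \mathbf L_{k,down}\mathbf L_{k,up}=\delta_{k-1}\bigl(\delta_{k-1}^*\delta_k^*\bigr)\delta_k=0,\qquad \mathbf L_{k,up}\mathbf L_{k,down}=\delta_k^*\bigl(\delta_k\delta_{k-1}\bigr)\delta_{k-1}^*=0.
\end{equation}
These are exactly the containments recorded in \cref{relation_im_ker_up_down}, namely ${\rm im}(\mathbf L_{k,down})\subset{\rm ker}(\mathbf L_{k,up})$ and ${\rm im}(\mathbf L_{k,up})\subset{\rm ker}(\mathbf L_{k,down})$, so I may simply cite that equation rather than recompute it.

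The core step is then to expand $(\mathbf L_{k,up}+\mathbf L_{k,down})^r$. Because $\mathbf L_{k,up}$ and $\mathbf L_{k,down}$ need not commute, I would avoid the binomial theorem and instead write the power as the sum over all $2^r$ words of length $r$ in the two-letter alphabet $\{\mathbf L_{k,up},\mathbf L_{k,down}\}$. Any word that uses both letters must contain, at some position, an adjacent pair equal to $\mathbf L_{k,up}\mathbf L_{k,down}$ or $\mathbf L_{k,down}\mathbf L_{k,up}$; by the previous display each such pair is zero, hence the whole word is zero. Only the two constant words survive, yielding $\mathbf L_k^r=\mathbf L_{k,up}^r+\mathbf L_{k,down}^r$, as claimed.

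The main obstacle is not depth but book-keeping: one must resist applying the binomial formula and instead argue via words, since only the products (not the operators themselves) vanish. A one-line induction on $r$ is an equally clean alternative—assuming $\mathbf L_k^{r-1}=\mathbf L_{k,up}^{r-1}+\mathbf L_{k,down}^{r-1}$, multiply by $(\mathbf L_{k,up}+\mathbf L_{k,down})$ and discard the two cross terms that are zero—and I would likely present that version for brevity. The base case $r=1$ is the definition $\mathbf L_k=\mathbf L_{k,up}+\mathbf L_{k,down}$.
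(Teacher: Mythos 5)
Your proposal is correct and rests on the same key fact as the paper's proof, namely that the cross products vanish because $\delta_k\delta_{k-1}=0$ and $\delta_{k-1}^*\delta_k^*=0$; indeed, the induction variant you sketch at the end is essentially verbatim the paper's argument, which expands $\mathbf L_k^{r+1}=(\mathbf L_{k,up}^r+\mathbf L_{k,down}^r)(\mathbf L_{k,up}+\mathbf L_{k,down})$ and discards the two mixed terms. Your word-expansion phrasing is a cosmetic variant of the same idea, so no substantive difference.
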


\begin{proof}
    We prove this by induction. For $r=1$, the equation obviously holds.

    Suppose $\mathbf L_k^r=\mathbf L_{k,up}^r+ \mathbf L_{k,down}^r$, then for its $r+1$-th power, we have

    \begin{align}
        \mathbf L_k^{r+1}&=(\mathbf L_{k,up}^r+ \mathbf L_{k,down}^r)(\mathbf L_{k,up}+\mathbf L_{k,down})\\&=\mathbf L_{k,up}^{r+1}+\mathbf L_{k,down}^{r+1}+\mathbf L_{k,up}^r \cdot \mathbf L_{k,down}+\mathbf L_{k,down}^r\cdot \mathbf L_{k,up}\\&=\mathbf L_{k,up}^{r+1}+\mathbf L_{k,down}^{r+1}+(\delta_k^*\delta_k)^r\delta_{k-1}\delta_{k-1}^*+(\delta_{k-1}\delta_{k-1}^*)^r\delta_k^*\delta_k\\&=\mathbf L_{k,up}^{r+1}+\mathbf L_{k,down}^{r+1}
    \end{align}

    The last equation holds because $\delta_k\delta_{k-1}=0$ and $\delta_{k-1}^*\delta_k^*=0$ always hold for all $k\geq 1$.
\end{proof}

Using the above property, we can prove that the power of the inter-order adjacent matrix (defined in Section 6.2 in the main text) satisfies the following property.

\begin{theorem}
    {\rm (Equation 7 in main text).}
    \begin{small}
    \begin{equation}
    \mathcal A_K^r=\begin{bmatrix}p_r(\mathbf L_0) & q_{r-1}(\mathbf L_{0,up})\delta_0^* \\ q_{r-1}(\mathbf L_{1,down})\delta_0 & p_r(\mathbf L_1) & q_{r-1}(\mathbf L_{1,up})\delta_1^* \\ & ... & ... & ...\\ & & ... & ... & ...\\  & & &  q_{r-1}(\mathbf L_{K,down})\delta_{K-1} & p_r'(\mathbf L_K, \mathbf L_{K, down}) \end{bmatrix}
    \end{equation}
    \end{small}
    where $p_r(\cdot)$ and $q_r(\cdot)$ are polynomials with maximum order $r$, except the last row is incomplete without $\mathbf L_{K,up}$. Note that we replace $\mathbf B_{k+1}^*$ in main text with coboundary operators $\delta_k$ for universality.
\end{theorem}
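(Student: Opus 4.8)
The plan is to induct on $r$, tracking how the block structure of $\mathcal{A}_K^r$ behaves under one further right-multiplication by $\mathcal{A}_K$. Write $M^{(r)}_{ij}$ for the $(i,j)$ block of $\mathcal{A}_K^r$, with $i,j\in\{0,\dots,K\}$. The base case $r=1$ is immediate: the diagonal blocks are $\mathbf{L}_k=p_1(\mathbf{L}_k)$, the super- and sub-diagonal blocks are $\delta_k^*=q_0(\mathbf{L}_{k,up})\delta_k^*$ and $\delta_{k-1}=q_0(\mathbf{L}_{k,down})\delta_{k-1}$ with $q_0\equiv 1$, and every block with $|i-j|\ge 2$ vanishes. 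The inductive hypothesis is that $\mathcal{A}_K^r$ is block-tridiagonal with diagonal blocks $p_r(\mathbf{L}_k)$ (except the terminal one, discussed below), super-diagonal blocks $q_{r-1}(\mathbf{L}_{k,up})\delta_k^*$ and sub-diagonal blocks $q_{r-1}(\mathbf{L}_{k+1,down})\delta_k$, for fixed polynomials $p_r,q_{r-1}$ of degrees $r,r-1$. The whole argument rests on two facts established earlier: the fundamental relation $\delta_k\delta_{k-1}=0$ together with its adjoint $\delta_{k-1}^*\delta_k^*=0$, and \cref{equationpowerLk}, $\mathbf{L}_k^s=\mathbf{L}_{k,up}^s+\mathbf{L}_{k,down}^s$.

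For the inductive step I would compute $M^{(r+1)}_{ij}=\sum_l M^{(r)}_{il}(\mathcal{A}_K)_{lj}$ block by block. The fundamental relation yields the collapse identities $\mathbf{L}_{k,down}\delta_k^*=\delta_{k-1}\delta_{k-1}^*\delta_k^*=0$ and $\mathbf{L}_{k,up}\delta_{k-1}=\delta_k^*\delta_k\delta_{k-1}=0$, hence $\mathbf{L}_k^s\delta_k^*=\mathbf{L}_{k,up}^s\delta_k^*$ and $\mathbf{L}_k^s\delta_{k-1}=\mathbf{L}_{k,down}^s\delta_{k-1}$ for $s\ge 1$. For the super-diagonal block I would expand
\[
M^{(r+1)}_{k,k+1}=p_r(\mathbf{L}_k)\delta_k^*+q_{r-1}(\mathbf{L}_{k,up})\delta_k^*\mathbf{L}_{k+1},
\]
simplify $\delta_k^*\mathbf{L}_{k+1}=\delta_k^*\delta_k\delta_k^*=\mathbf{L}_{k,up}\delta_k^*$ using $\delta_k^*\delta_{k+1}^*=0$, and rewrite $p_r(\mathbf{L}_k)\delta_k^*=p_r(\mathbf{L}_{k,up})\delta_k^*$ via the collapse identity; the two terms merge into $q_r(\mathbf{L}_{k,up})\delta_k^*$ with $q_r(x)=p_r(x)+x\,q_{r-1}(x)$, of degree $r$. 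The sub-diagonal block is symmetric under the adjoint relations, and any width-two block $M^{(r+1)}_{i,i\pm2}$ reduces to a single product such as $\delta_i^*\delta_{i+1}^*=0$, so the tridiagonal shape is preserved.

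The diagonal blocks are where \cref{equationpowerLk} does the real work. For an interior index $k$,
\[
M^{(r+1)}_{kk}=q_{r-1}(\mathbf{L}_{k,down})\mathbf{L}_{k,down}+p_r(\mathbf{L}_k)\mathbf{L}_k+q_{r-1}(\mathbf{L}_{k,up})\mathbf{L}_{k,up},
\]
and because the first and third summands carry the identical coefficients of $q_{r-1}$, their sum $\sum_t b_{t-1}\bigl(\mathbf{L}_{k,up}^t+\mathbf{L}_{k,down}^t\bigr)=\sum_t b_{t-1}\mathbf{L}_k^t$ collapses to a polynomial in $\mathbf{L}_k$; adding $p_r(\mathbf{L}_k)\mathbf{L}_k$ gives $p_{r+1}(\mathbf{L}_k)$ of degree $r+1$. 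This recombination is the conceptual heart of the proof and is exactly what forces a single polynomial $p_r(\mathbf{L}_k)$ on each interior diagonal.

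I expect the main obstacle — and the reason the statement singles out the last row — to be the terminal block $M^{(r+1)}_{KK}$. There is no block above it to furnish the $q_{r-1}(\mathbf{L}_{K,up})\mathbf{L}_{K,up}$ summand, so the down-contribution $q_{r-1}(\mathbf{L}_{K,down})\mathbf{L}_{K,down}$ has no partner and the entry only reduces to a joint polynomial $p_{r+1}'(\mathbf{L}_K,\mathbf{L}_{K,down})$ rather than a polynomial in $\mathbf{L}_K$ alone. This is well posed because $\mathbf{L}_{K,up}\mathbf{L}_{K,down}=\delta_K^*\delta_K\delta_{K-1}\delta_{K-1}^*=0$ (and likewise the reverse), so $\mathbf{L}_K$ and $\mathbf{L}_{K,down}$ commute; the same vanishing lets any word in these two operators acting on $\delta_{K-1}$ collapse, so the terminal sub-diagonal block still reduces to the standard form $q_r(\mathbf{L}_{K,down})\delta_{K-1}$, which closes the induction.
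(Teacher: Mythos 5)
Your proof is correct and follows essentially the same route as the paper's: induction on $r$ via $\mathcal A_K^{r+1}=\mathcal A_K^r\mathcal A_K$, using $\delta_k\delta_{k-1}=0$ together with \cref{equationpowerLk} to recombine the up/down contributions on the diagonal, and arriving at the same recursions $p_{r+1}(x)=(p_r(x)+q_{r-1}(x))x$ and $q_r(x)=p_r(x)+x\,q_{r-1}(x)$. Your explicit treatment of the terminal block $M^{(r+1)}_{KK}$ (and the observation that $\mathbf L_{K,up}\mathbf L_{K,down}=0$ makes $p'_r$ well posed) is a bit more careful than the paper's, which only remarks that the last row is incomplete, but it is a presentational refinement rather than a different argument.
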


\begin{proof}
    We still prove by induction. For $r=1$, the equation obviously holds with $p_1(x)=x,q_0=1$. Suppose the equation holds for $r$, then the $r+1$ power satisfies the following.

    (1) The block at first row and first column concerning $\mathbf L_0$, note that $\mathbf L_0=\mathbf L_{0,up}=\delta_0^*\delta_0$, we have
    \begin{equation}
        \mathcal A_{K}^{r+1}[1,1]=p_r(\mathbf L_0)\mathbf L_0+q_{r-1}(\mathbf L_{0,up})\delta_0^*\delta_0=\Big(p_{r}(\mathbf L_0)+q_{r-1}(\mathbf L_0)\Big)\mathbf L_0=p_{r+1}(\mathbf L_0)
    \end{equation}

    (2) The $k$th diagonal block ($1<k<K$):
    \begin{align}
        \mathcal A_{K}^{r+1}[k,k]&=q_{r-1}(\mathbf L_{k,down})\delta_{k-1}\delta_{k-1}^*+p_r(\mathbf L_{k})\mathbf L_{k}+q_{r-1}(\mathbf L_{k,up})\delta_{k}^*\delta_k\\&=\Big(p_{r}(\mathbf L_{k})+q_{r-1}(\mathbf L_{k})\Big)\mathbf L_{k}\\&=p_{r+1}(\mathbf L_{k})
    \end{align}

    where we use $q_{r-1}(\mathbf L_{k,up})+q_{r-1}(\mathbf L_{k,down})=q_{r-1}(\mathbf L_{k})$ according to \cref{equationpowerLk}. Further, we have $p_{r+1}(x)=(p_r(x)+q_{r-1}(x))\cdot x$.

    (3) The block at $k$-th row and $k+1$-th column:
    \begin{align}
        \mathcal A_{K}^{r+1}[k,k+1]&=0+p_r(\mathbf L_{k})\delta_k^*+q_{r-1}(\mathbf L_{k,up})\delta_k^*(\delta_k\delta_k^*+\delta_{k+1}^*\delta_{k+1})+0\\&=p_r(\delta_{k-1}\delta_{k-1}^*)\delta_k^*+p_r(\delta_k^*\delta_k)\delta_k^*+q_{r-1}(\mathbf L_{k,up})\mathbf L_{k,up}\\&=\Big(p_r(\mathbf L_{k,up})+q_{r-1}(\mathbf L_{k,up})\mathbf L_{k,up}\Big)\delta_k^*\\&=q_r(\mathbf L_{k,up})\delta_k^*
    \end{align}

    Hence we have $q_r(x)=p_r(x)+q_{r-1}(x)\cdot x$.

    Further, the block at $k+1$-th and $k$-th column is the adjoint of that:
    \begin{equation}
        \mathcal A_{K}^{r+1}[k+1,k]=\Big(q_r(\mathbf L_{k,up})\delta_k^*\Big)^*=\delta_k\cdot q_r(\delta_k^*\delta_k)=q_r(\mathbf L_{k+1,down})\delta_k
    \end{equation}

    (4) The block at $k$-th row and $k+2$-th column:
    \begin{equation}
        \mathcal A_{K}^{r+1}[k,k+1]=0+0+q_{r-1}(\mathbf L_{k,up})\delta_k^*\delta_{k+1}^*=0
    \end{equation}
    
    Therefore, the block in $k+2$-th row and $k$-th column is also $0$.

    (5) The other blocks are obviously zero.

    Combining all these pieces together, we prove the theorem.
    
\end{proof}

The above equation states that simplices with difference of order larger than one cannot directly exchange information even after infinite rounds, but they can affect each other through the coefficients in $p_r$ and $q_{r-1}$ in the blocks on the offset $\pm 1$-diagonal blocks.

It's noticable that a number of previous works such as \citep{MPSimplicialN} can be reformatted and unified by $\mathcal A_K$. Additionally, we can make use of $\mathcal A_K^r$ to build random walk based positional encoding for all simplices in $K$-dimensional simplicial complex that contains more information than random walks within the same order simplices.

In addition, analogously to \citep{CWNetworks}, we can introduce any form of discrete topological structures, e.g., cellular complex, as expanded complex cells. Our methods can also be naturally extended to discrete domains without orientations, e.g. hypergraphs. In particular, we can define random walk on these discrete topological structures, which can greatly facilitate graph learning by incorporating structures of higher order other than simplicial complexes. Our methods can be easily generalized to these structures, for example, we provide the results of random walk on cellular complexes in \cref{zincablation}, which treats cycles as $2$-cellular complexes and greatly improve the performance of the base model. More implementation details can be found in \cref{sectionappendixexperiments}. However, the theoretical analysis including expressive power in distinguishing non-isomorphic graphs of these new methods are nontrivial due to the flexibility in definition of complex cells, which is a future direction worth exploring.

\subsubsection{Hodge Laplacians spectra and graph isomorphism}

In the spectral domain, a necessary but insufficient condition for two graphs to be isomorphic is that their Hodge $k$-Laplacians are isospectral (having the same eigenvalues) for all $k\geq 0$. As we have discussed before, the Hodge $0$-isopectra is incomparable to $1$-FWL and is not more powerful than $2$-FWL. Hodge $1$-isospectra is incomparable with $1$-FWL and $2$-FWL. Note that Hodge $k+1$-isospectra is not necessarily more powerful than Hodge $k$-isospectra, except that for $k=0$ the conclusion holds since $\mathbf L_{0,down}=0$. However, we can always get a more powerful algorithm by increasing the highest order of Hodge Laplacians while maintaining all the Hodge Laplacians of lower orders.


In addition to making use of eigenvalues, we can build more powerful GNNs based on the spectra of Hodge $k$-Laplacians. We address that we can universally approximate permutation equivariant and basis invariant functions (moreover, $k$-form) defined on $k$-simplicial complexes if we use Expressive-BasisNets \citep{SignandBasis}. Universality is guaranteed by the following decomposition theorem. 

\begin{theorem}\label{theoremuniversality}
    {\rm (Theorem 4 in \citep{SignandBasis}.)}Theorem 4 (Decomposition Theorem). Let $\mathcal X_1,\dots, \mathcal X_k$ be topological spaces and let $G_i$ be a topological group that continuously acts on $\mathcal X_i$ for each $i$.  We assume that the mild topological conditions on $\mathcal X_i$ and $G_i$ hold. Assume that there is a topological embedding $\psi_i: \mathcal X_i/Gi \rightarrow \mathbb R^{a_i}$ of each quotient space into a Euclidean space $\mathbb R^{a_i}$ for some dimension $a_i$. Then, for any continuous function $f: \mathcal X = \mathcal X_1 \times \dots \times \mathcal X_k \rightarrow \mathbb R^{d_{out}}$ that is invariant to the action of $G = G1 \times \dots \times G_k$, there exist continuous functions $\phi_i: \mathcal X_i\rightarrow \mathbb R^{a_i}$ and a continuous function $\rho : \mathcal Z \subseteq  \mathbb R^a\rightarrow \mathbb R^{d_{out}}$, where $a =\sum_i a_i$ such that $f(v_1, \dots, v_k) = \rho(\phi_1(v_1), \dots, \phi_k(v_k))$.
\end{theorem}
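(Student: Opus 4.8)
The plan is to exploit the $G$-invariance of $f$ to push it down to a continuous function on the orbit space $\mathcal X / G$, and then transport that function into Euclidean space through the given embeddings of the quotients. First I would observe that invariance means $f$ is constant on each $G$-orbit, so it factors uniquely as $f = \bar f \circ \pi$, where $\pi : \mathcal X \to \mathcal X / G$ is the quotient map and $\bar f : \mathcal X / G \to \mathbb R^{d_{out}}$. Because $\pi$ is a quotient map, the universal property guarantees that $\bar f$ is continuous precisely because $f = \bar f \circ \pi$ is; this is where continuity of the output factor is secured.

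Next I would identify the orbit space of the product with the product of orbit spaces, $\mathcal X / G \cong (\mathcal X_1 / G_1) \times \cdots \times (\mathcal X_k / G_k)$, so that $\bar f$ becomes a continuous function of $k$ separate quotient arguments. With this identification in hand, set $\phi_i = \psi_i \circ \pi_i : \mathcal X_i \to \mathbb R^{a_i}$, where $\pi_i$ is the $i$-th quotient map; each $\phi_i$ is continuous as a composition of continuous maps. The product map $\psi = \psi_1 \times \cdots \times \psi_k$ is a topological embedding of $\prod_i (\mathcal X_i / G_i)$ onto its image $\mathcal Z \subseteq \mathbb R^a$, so its inverse $\psi^{-1} : \mathcal Z \to \prod_i (\mathcal X_i / G_i)$ is continuous. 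I would then define $\rho = \bar f \circ \psi^{-1} : \mathcal Z \to \mathbb R^{d_{out}}$, which is continuous, and verify directly that
\[
\rho(\phi_1(v_1), \dots, \phi_k(v_k)) = \bar f(\pi_1(v_1), \dots, \pi_k(v_k)) = f(v_1, \dots, v_k),
\]
as required. If a function defined on all of $\mathbb R^a$ is wanted rather than on $\mathcal Z$ alone, I would invoke the Tietze extension theorem to extend $\rho$ continuously, using that $\mathcal Z$ is closed (which follows once the embeddings have closed image, among the mild hypotheses).

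I expect the main obstacle to be the identification $\mathcal X / G \cong \prod_i (\mathcal X_i / G_i)$ together with the continuity of $\psi^{-1}$. A product of quotient maps need not be a quotient map in full generality, so this step is exactly where the unspecified \emph{mild topological conditions} on the $\mathcal X_i$ and $G_i$ must enter — for instance local compactness of the factors, or that the orbit maps are open, which forces the product of the $\pi_i$ to again be a quotient map. Once the factors are suitably well-behaved the homeomorphism is routine, and the continuity of $\psi^{-1}$ follows from the assumption that each $\psi_i$ is a topological embedding. The remainder is bookkeeping with the universal property of quotients and composition of continuous maps; the genuinely delicate hypotheses all concentrate in making the orbit space of the product split as a product of orbit spaces and embed into $\mathbb R^a$.
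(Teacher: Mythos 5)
Your argument is correct and follows essentially the same quotient-then-embed route as the source: the paper under review does not reprove this statement but defers entirely to \citep{SignandBasis}, whose proof likewise factors $f$ through $\mathcal X/G \cong \prod_i(\mathcal X_i/G_i)$ and composes with the assumed embeddings. One minor remark: orbit maps of topological group actions are automatically open (since $\pi_i^{-1}(\pi_i(U))=\bigcup_{g\in G_i} gU$), so the product-of-quotients identification you worry about needs no additional hypotheses; the ``mild topological conditions'' are instead needed to guarantee that the embeddings $\psi_i$ of the quotient spaces exist, which the statement here simply assumes.
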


The proof of \cref{theoremuniversality} is completed in \citep{SignandBasis}. For mild topological conditions, we only need $G_i$ to be a topological group that continuously acts on $\mathcal X_i$ for each $i$, and to know that there exists a topological embedding of each quotient space into some Euclidean space. As a matter of fact, the continuous group action is a very mild assumption which holds for any finite or compact matrix group. For a finite simplicial complex, the eigenvalues and eigenvectors of Hodge-Laplacians are finite; thus, the compactness naturally holds. Furthermore, using Hodge-Laplacians normalization in \citep{SpectraOfCombinatorialLaplace}, the maximum eigenvalue of $\mathbf L_k$ satisfies $\lambda_{max}\leq k+2$. Finally, letting $\mathcal X_i$ be the cochain $\mathcal C^i$ in the above theorem, we see that the permutation equivariant and basis-invariant function as well as the $k$-form defined on the simplicial complex can be universally approximated, for instance, by Expressive-BasisNet.

Although we can recover the spectra of $\mathbf L_{k},k=0,\dots, K$ (or spectra convolution on simplicial complexes) via Expressive-BasisNet, it is actually impractically expensive. On the other hand, we can use spectral information, such as the eigenvalues and eigenvectors of $\mathbf L_k$ to facilitate learning via simple networks.


\section{Random walk message passing}\label{sectionappendixRWMP}

In our main text, we mainly discuss how to facilitate graph and simplicial learning through designing PE and SE based on random walk on simplicial complexes. However, it is remarkable that random walk on simplicial complexes can give different insights and inspirations in designing powerful GNNs and simplicial networks in addition to PE and SE. In this section, we propose a novel random walk message passing (RWMP) mechanism, which introduces node distance metrics and simulates a weighted random walk at node level by dropping edges according to distances between neighboring nodes. As revealed in the experiments shown in the main text, RWMP is able to improve the performance of the base models and achieves SOTA performance in the Zinc dataset~\cite{Zinc}.

\subsection{Connections between random walk, subgraph sampling and edge dropping}

There have been a great number of relevant works on improving GNNs with subgraph sampling and edge dropping. Subgraph GNNs are a huge family of GNN variations that encode a set of subgraphs instead of the original graph. Some subgraph GNNs are more powerful than MPNN or $1$-WL such as \citep{EquivariantSAN}, while \citet{UnderstandingSymmetry} upper bound node-based subgraph GNNs by $3$-WL. Despite the improvement in expressivity and performances, a number of subgraph GNNs suffer from extensive computation complexity due to the exponential growth of the subgraph amount. Therefore, various subgraph sampling methods have been proposed to improve the scalability of subgraph GNNs. For example, $K$-hop GNN \cite{Khop} extracts a $K$-hop subgraph for each root node, and \citet{GraphSAINTGSampling, 1DConvRandomWalk} propose to sample subgraphs via a random walk started from a root node. Due to its internal association with the diffusion process, random walk is a powerful tool to design subgraph sampling methods. Meanwhile, these subgraph GNNs based on sampling in a random walk fashion can be interpreted as implicitly encoding structure information obtained by random walk via the following procedure: sampling subgraphs (sampling according to probability transition matrix), running GNN on the subgraphs (encoding structure captured by a random walk), and aggregating information among subgraphs (taking expectation).

Dropping edge is another family of methods that introduce randomness into graph learning. DropEdge~\citep{DropEdge} is a widely adopted technique that randomly drops edges during training to improve the generalizability of GNNs. Unlike DropEdge, DropGNN \cite{DropGNN} randomly drops edges in both the training and inference stages. DropGNN is able to partly improve expressive power, but it has to gather several rounds of inference results to obtain the final representation. In the context of representation learning for linear features, the representations obtained by randomly dropping edges layer-wise are unbiased estimators of full message passing if we consider normalization. In comparison, the representations obtained by subgraph sampling are biased estimators of full message passing. The layer-wise drop edge can be viewed as an intermediate product between full message passing and subgraph sampling GNNs, which (1) obtains a representation in every forward pass like original message passing and does not have to run several forward passes on subgraphs in parallel; (2) has randomness like subgraph sampling methods and has the probability to encode subgraph structures. We leave further discussion and more theoretical analysis for future work.

\subsection{Random walk message passing: weighted random walk based on node-level distance metric}

As we discussed above, both the subgraph sampling and edge-dropping methods can be associated with the simple node-level random walk on unweighted graphs. Through introducing randomness, they can better encode structure information and improve expressive power of GNNs. However, these methods use uniform sampling and uniform dropping strategies associated with unweighted random walk, and hence they are only able to learn pure structure information but no feature information. 

To address this limitation, we propose the Random Walk Message Passing mechanism (RWMP), which can jointly learn feature and structure information by introducing distance metrics of nodes (or node features). Briefly speaking, RWMP drops edges in a layerwise fashion, and the probability of dropping edge $e=(u,v)$ is a function of the distance between nodes $u$ and $v$: $P_{\rm drop}(e)=f(d(u,v))$, where $f$ is the designed function, and $d(u,v)$ is the (feature) distance between $u$ and $v$ in the defined metric. For example, if we use cosine similarity between node features as the distance metric, a possible drop probability is: 

\begin{equation}\label{probabilityRWMPused}
    P_{\rm drop}(e)=\frac{1}{2}(1-\frac{||\mathbf u\cdot \mathbf v||}{||\mathbf u||\cdot ||\mathbf v||})
\end{equation}

where $\mathbf u$ and $\mathbf v$ denote the node feature vector of $u$ and $v$. The above metric states that the probability of message passing between two nodes is linear with their cosine similarity in node features. The more they are different, the less likely they are to exchange information with each other. Intuitively, this can partly address the problem of oversmoothing, since it can reserve heterogeneity by reducing message passing times within dissimilar nodes. RWMP also encourages message passing within similar nodes, which is beneficial for discovering local clusters (it is equivalent to sample the subgraphs containing local structures of similar nodes with larger probabilities).

RWMP is a novel framework that can jointly learn structure and feature information by simulating a weighted random walk based on a distance metric. Appropriate designs of distance metrics are able to greatly improve the performance of RWMP, which is applicable to any message-passing based GNN variations. RWMP can be regarded as an intermediate scheme between deterministic message passing variations like GAT \citep{GAT} and subgraph sampling-based methods. However, on the one hand, RWMP is different from GAT in the way that RWMP explicitly introduces randomness via dropping edges instead of taking expectation through the attention weights, and the stochastic process forces RWMP to explore more about local structures while relieving oversmoothing. On the other hand, RWMP greatly reduces the computation costs on subgraph GNNs - the latter often fail to run on large graph benchmarks, while preserving the ability to explore more on certain structures.

Combining RWMP with edge-level PE/SE, a GINE \citep{PretrainGINE} model is able to achieve highly competitive performance in the Zinc dataset. More experimental details are listed in \cref{sectionappendixexperiments}. 





\section{Experiments}\label{sectionappendixexperiments}

\subsection{Datasets description}

In \cref{sectionexperiment} in the main text and \cref{sectionfullexperimentresults}, we conduct extensive experiments on various datasets, confirming the effectiveness of our methods. Among these datasets, Zinc, MNIST, and CIFAR10 are from Benchmarking GNN \citep{Zinc}, ogbg-molhiv and ogbg-molpcba are from Open Graph Benchmark \citep{OGB}, while PCQM-Contact, Peptides-func and Peptides-struct are from Long-range Graph Benfchmark \citep{LRGB}.

\paragraph{Zinc.} Zinc-12k is a subset of Zinc-250k, which consists of 12000 molecular graphs from the ZINC database of commercially
available chemical compounds. The task is to perform graph-level molecular property regression (on constrained solubility logP). These molecular graphs are between 9 and 37 nodes large. We follow the common predefined 10K/1K/1K train/validation/test split.

\paragraph{MNIST and CIFAR10.} These two datasets both contain directed graphs, and are derived from like-named image classification datasets. Both of them are 10-class classification tasks, and we follow the standard dataset splits as the original image classification datasets, i.e., 55K/5K/10K for MNIST and 45K/5K/10K for CIFAR10 of train/validation/test graphs, respectively.

\paragraph{ogbg-molhiv and ogbg-molpcba.} They are both molecular property prediction datasets, using a common node (atom) and edge (bond) featurization to represent chemophysical properties. In detail, the prediction task of ogbg-molhiv is a binary classification of the fitness of a molecule to inhibit HIV replication, which is measured by AUROC. The task of ogbg-molpcba is a 128-task binary classification, evaluated by average precision.

\paragraph{PCQM-Contact.} It is a dataset derived from PCQM4Mv2 and the corresponding 3D molecular structures. The task is a binary link prediction which is evaluated by the Mean Reciprocal Rank (MRR).

\paragraph{Peptides-func and Peptides-struct.} These two datasets both consist of atomic graphs of peptides. The task for Peptides-func is a multi-label graph classification into 10 nonexclusive peptide functional classes measured by average precision. The task for Peptides-struct is graph regression of 11 3D-structural properties of the peptides measured by MAE.

\begin{table*}[t]
\caption{Experiments on synthetic datasets (ACC $\uparrow$). The backbone model is GINE, which is not more expressive than $1$-WL.}
\label{synthetic}
\vskip 0.15in
\begin{center}
\begin{small}
\begin{tabular}{l|cc}
\toprule
PE/SE & EXP & SR25\\
\midrule
None & 50 & 6.67\\
Hodge1Lap(eigenvalues) & 100 & 100\\
EdgeRWSE(full) & 100 & 100\\
\bottomrule
\end{tabular}
\end{small}
\end{center}
\vskip -0.1in
\end{table*}

\begin{table*}[t]
\caption{Experiments on two datasets from benchmarking GNN \citep{Zinc}. Highlighted are the \textcolor{red}{first}, \textcolor{blue}{second}, \textbf{third} test results.}
\label{BenchmarkingGNN}
\vskip 0.15in
\begin{center}
\begin{small}
\begin{tabular}{l|cc}
\toprule
model & MNIST (Accuracy $\uparrow$)  & CIFAR10 (Accuracy $\uparrow$) \\
\midrule
GCN \citep{GCN} & $90.705\pm 0.218$ & $55.710\pm 0.381$\\
GIN \citep{HowPowerfulGNN} & $96.485\pm 0.252$ & $55.255\pm 1.527$\\
GAT \citep{GAT} & $95.535\pm 0.205$ & $64.223\pm 0.455$ \\
GatedGCN \citep{ResidualGatedGCN} & $97.340\pm 0.143$ & $67.312\pm 0.311$ \\
PNA \citep{PNA} & $97.94\pm 0.12$ & $70.35\pm0.63$ \\
DGN \citep{DGN} & - & \textcolor{red}{$72.838\pm 0.417$} \\
CRaWl \citep{1DConvRandomWalk} & $97.944\pm 0.050$ & $69.013\pm0.259$\\
GIN-AK+ \citep{GNNAK} & - & $72.19\pm0.13$\\
EGT \citep{EGT} & \textcolor{blue}{$98.173\pm 0.087$} & $68.702\pm 0.409$ \\
GPS \citep{GPS} & $98.051\pm 0.126$ & \textbf{72.298 $\pm$ 0.356}\\
\midrule
GatedGCN+EdgeRWSE & \textbf{98.069 $\pm$ 0.115} & $70.260\pm 0.341$\\
GPS+EdgeRWSE & \textcolor{red}{$98.245\pm 0.070$} & \textcolor{blue}{$72.417\pm 0.221$}\\
\bottomrule
\end{tabular}
\end{small}
\end{center}
\vskip -0.1in
\end{table*}

\begin{table*}[t]
\caption{Experiments on three datasets from long-range graph benchmarks (LRGB) \citep{LRGB}. Highlighted are the \textcolor{red}{first}, \textcolor{blue}{second}, \textbf{third} test results.}
\label{LRGB}
\vskip 0.15in
\begin{center}
\begin{small}
\resizebox{1.\columnwidth}{!}{
\begin{tabular}{l|ccc}
\toprule
model & Peptides-func (AP $\uparrow$)  & Peptides-struct (MAE $\downarrow$) 
& PCQM-Contact (MRR $\uparrow$) \\
\midrule
GCN & $0.5930\pm 0.0023$ & $0.3496\pm 0.0013$ & $0.3234\pm 0.0006$ \\
GINE & $0.5498\pm 0.0079$ & $0.3547\pm 0.0045$ & $0.3180\pm 0.0027$ \\
GatedGCN & $0.5864\pm 0.0077$ & $0.3420\pm 0.0013$ & $0.3218\pm 0.0011$ \\
Transformer+LapPE & $0.6326\pm 0.0126$ & $0.2529\pm 0.0016$ & $0.3174\pm 0.0020$ \\
SAN~\citep{RethinkingGTLap}+LapPE & $0.6384\pm 0.0121$ & $0.2683\pm 0.0043$ & \textbf{0.3350 $\pm$ 0.0003} \\
SAN+RWSE & $0.6439\pm 0.0075$ & $0.2545\pm 0.0012$ & \textbf{0.3350 $\pm$ 0.0003} \\
GPS & \textbf{0.6535 $\pm$ 0.0041} & \textcolor{red}{$0.2500\pm 0.0005$} & $0.3337\pm 0.0006$\\
\midrule
GatedGCN+EdgeRWSE & $0.6002 \pm 0.0048$ & $0.2679\pm 0.0015$ & $0.3342\pm 0.0008$\\
GatedGCN+Hodge1Lap & $0.5926\pm 0.0059$ & $0.2632\pm 0.0008$ & $0.3336\pm 0.0004$ \\
GPS+EdgeRWSE & \textcolor{red}{$0.6625\pm 0.0042$} & \textcolor{blue}{$0.2501\pm 0.0012$} & \textcolor{red}{$0.3408\pm 0.0003$} \\
GPS+Hodge1Lap & \textcolor{blue}{$0.6584\pm 0.0033$} & \textbf{0.2505 $\pm$ 0.0014} & \textcolor{blue}{$0.3407\pm 0.0004$} \\
\bottomrule
\end{tabular}
}
\end{small}
\end{center}
\vskip -0.1in
\end{table*}

\subsection{Full experiment results} \label{sectionfullexperimentresults}



While SOTA and highly competitive results on Zinc, ogbg-molhiv and ogbg-molpcba are shown in the main text, we further discuss the performance of our methods on other datasets here, including synthetic datasets and real-world datasets. For real-world datasets including BenchmarkingGNN~\citep{Zinc} and Long Range Graph Benchmark (LRGB)~\citep{LRGB}, we follow the experimental settings of \citep{GPS} and do not perform a hyperparameter search, since our main goal is to verify that our Hodge1Lap and EdgeRWSE can benefit the arbitrary base model.

\paragraph{Experiments on synthetic datasets.} To verify the theoretical expressive power of Hodge1Lap and EdgeRWSE, we carried out experiments on two classical synthetic datasets: EXP and SR25. EXP \citep{EXP} contains 600 pairs of non-isomorphic graphs that 1-WL and 2-WL fail to distinguish. SR25 \citep{SR25} contains 15 non-isomorphic strongly regular graphs (i.e., 105 non-isomorphic pairs) that 3-WL fails to distinguish. An accuracy of $50\%$ on EXP and $6.67\%$ on SR25 suggests the model fails to distinguish any non-isomorphic graphs in the dataset. The results are shown in \cref{synthetic}. The $1$-WL equivalent GINE back-end model can fully distinguish EXP and SR25 when augmented by Hodge1Lap and Full-EdgeRWSE, indicating that both methods can distinguish non-isomorphic graph pairs that $3$-WL (and $2$-FWL) fails, which is consistent with our theoretical analysis.

\paragraph{Experiments on MNIST and CIFAR10.} Since the graphs are all directed, only the directed version of EdgeRWSE is applicable since a random walk can only go along the edge, while Hodge1Lap is also applicable if we follow the strict directions of edges and simplices. Here, we evaluate the effectiveness of EdgeRWSE acting on GatedGCN and GPS, see \cref{BenchmarkingGNN}. With EdgeRWSE, the GPS model (which consists of a GatedGCN and a Transformer in each layer) achieves the best on MNIST and second best on CIFAR10. It is remarkable that a simple GatedGCN is greatly enhanced by EdgeRWSE, achieving highly competitive performance on both datasets. This verifies the effectiveness of EdgeRWSE in models involving edge features.

\paragraph{Experiments on LRGB.} The results are shown in \cref{LRGB}; the baseline results are adopted from \citep{GPS}. On both Peptides-func and PCQM-Contact, GPS model with EdgeRWSE (we use undirected version) and Hodge1Lap (we use projection method) achieve the best and second best performance, respectively. On Peptides-struct, the improvement of edge-level PE/SE is not significant for GPS, but is obvious for the GatedGCN base model. Admittedly, as pointed out by \citep{ReasessingLRGB}, the baseline models need some reassessing including hyper-parameter searching, and there are more SOTA results. However, we emphasize that our results are obtained without hyperparameter searching - our main goal is to verify the enhancement brought by EdgeRWSE and Hodge1Lap to the baseline models. SOTA methods can always be facilitated with our PE/SE to achieve better performance.

\begin{table*}[t]
\caption{Trajectory classification accuracy on synthetic flow dataset with simplicial networks.}
\label{flow}
\vskip 0.15in
\begin{center}
\begin{small}
\begin{tabular}{l|cc}
\toprule
model & Train accuracy & Test accuracy\\
\midrule
MPSN~\citep{MPSimplicialN} $L_0$-inv & $88.2\pm 5.1$ & $85.3\pm 5.8$ \\
MPSN - Id & $88.0\pm 3.1$ & $82.6\pm 3.0$ \\
MPSN - Tanh & $97.9\pm 0.7$ & $95.2\pm 1.8$ \\
\midrule
MPSN $L_0$-inv + Hodge1Lap & $98.4\pm 0.7$ & $99.8\pm 0.4$ \\
MPSN - Id + Hodge1Lap & $99.5\pm 0.2$ & $99.3\pm 0.3$ \\
MPSN - Tanh + Hodge1Lap & \textbf{100.0 $\pm$ 0.0} & \textbf{99.9 $\pm$ 0.1} \\
\bottomrule
\end{tabular}
\end{small}
\end{center}
\vskip -0.1in
\end{table*}

\paragraph{Trajectory prediction.} Since our methods originate from simplicial complexes, we also verify the effectiveness of our Hodge1Lap on simplicial data. We adopt the trajectory prediction task (edge flow classification) in \citep{MPSimplicialN}. The edge flows are represented as signals on oriented simplicial complexes. We use the same synthetic dataset of trajectories on the simplicial complex in \citep{MPSimplicialN}, where each triangle is treated as a 2-simplex. There are two holes in the complex, and all trajectories pass one of the holes, thus giving rise to two different classes to be distinguished. Due to the presence of the two holes, the trajectories of the two classes approximately correspond to orthogonal directions in the space of harmonic eigenfunctions of the $\mathbf L_1$ Hodge-Laplacian~\citep{NormalizedHodge1RW}. The dataset contains 1000 train trajectories and 200 test trajectories. Following \citep{MPSimplicialN}, to make the task more challenging for non-orientation-invariant models, all training complexes use the same orientation for the edges, while the test trajectories use random orientations. We adopt orientation-invariant message-passing simplicial networks (MPSN)~\citep{MPSimplicialN} with orientation equivariant layers as the base models.

Note that since we aim to distinguish two orthogonal directions in the harmonic kernel space of $\mathbf L_1$ Hodge-Laplacian, we do not use the projection implementation of Hodge1Lap; instead, we use the simple implementation ${\rm Hodge1Lap_{sim}}$ (described in the main text) which embeds two eigenvectors respectively with a shared MLP. Theoretically, we can distinguish the two classes of flows with the help of eigenvectors since they correspond to two distinct cycles (i.e., two orthogonal directions in the kernel space). As shown in \cref{flow}, all variants of MPSN achieve better performance (almost perfect test accuracy) when they are augmented with our Hodge1Lap. This is consistent with our theory, which experimentally verified the capability of Hodge1Lap to distinguish cycles.

\subsection{Experiment details}

\paragraph{Dataset split and repetitiveness.} In all experiments, the datasets follow the common train/validation/test split as we stated in datasets description. The results of the baseline models are reported from \citep{GPS}. For our results, we report the test results according to the best validation results. All experiments are run under 5 different random seeds, with mean and variance reported. 

\paragraph{Hyperparameters.} To verify that our methods are capable of improving the performance of the base models, all hyperparameters including training configuration and model hyperparameters are set the same as in \citep{GPS}. For the edge PE/SE, we keep the embedding dimensions the same as the node PE/SE in GPS models. Hence, our experimental results strongly confirmed that our methods are beneficial for both naive and complex base models.

\subsection{Implementation details} 

In this subsection, we provide more implementation details of our methods and more discussion on the corresponding variants. Our code is based on GPS~\citep{GPS}, which enable us to integrate our methods into a comprehensive graph learning framework.

\paragraph{EdgeRWSE.} In real-world datasets, we use both directed $1$-down RWSE and undirected $1$-down RWSE in our experiments, except in MNIST and CIFAR10 where only a directed walk is appropriate on the directed graphs. We also integrate variance of each row in the transition matrix corresponding to the target edge in each step by an MLP. Generally, directed and undirected versions reveal similar performance, and we report the performance of undirected EdgeRWSE if not specific. 

For Full EdgeRWSE, we verify its theoretical expressive power through experiments on synthetic datasets; see \cref{synthetic}. A GINE enhanced with Full EdgeRWSE achieves $100\%$ performance on the synthetic datasets EXP and SR25. To fully distinguish EXP and SR25, the model needs to be more expressive than $1$-WL and $2$-FWL, respectively. This shows that Full-EdgeRWSE is (partly) more powerful than $2$-FWL ($3$-WL), which is consistent with our theory. In comparison, 1-down EdgeRWSE that does not consider 2-simplices fails ($0\%$) to distinguish strongly regular graphs in SR25. However, we experimentally find that the performance of Full-EdgeRWSE on zinc dataset is similar to 1-down EdgeRWSE with a GINE base model. This is because 2-simplices (or triangles) are actually rare in molecules, so Full-EdgeRWSE makes no significant difference than 1-down EdgeRWSE. 

We also implement the interorder random walk among $0,1,2$-simplices. The random walk probability matrix is the normalized $\mathcal A_2$ in our main text. We compute $\mathcal A_2,\dots, \mathcal A_2^T$ ($T=20$ for Zinc) and embed the diagonal elements for both node and edge features. The performance of GINE augmented with Inter-RWSE is significantly better than pure GINE but is slightly weaker than simultaneously applying NodeRWSE and EdgeRWSE (which are separately computed without inter-order communications). We attribute it to: (i) lack of tuning hyper-parameters such as $T$, and (ii) the return probabilities are much less in inter-order random walk, as there are totally $n+m$ possible targets, instead of $n$ for 0-random walk and $m$ for 1-random walk. This makes it harder for the model to learn meaningful structure information and may require larger $T$.

\paragraph{CellularRWSE} The concept of CellularRWSE is explained in \cref{subsecproofhighorder}, where we treat the edges as $1$-cells and extract all the rings (cycles) as $2$-cells. In this CellularRWSE, a random walk is able to transit from one source edge to (i) another target edge sharing one node, or (ii) another target edge that is in the same $2$-cell as the source edge. The performance is slightly better than EdgeRWSE (see \cref{zincablation}), which may be due to the ability to distinguish higher-order structures (rings and cycles).

\paragraph{Hodge1Lap.} We already include a detailed discussion in the main text and in \cref{subsubsecproofHodge1Lap} on two different implementations: projection-based methods and sign-invariant methods. As we have shown, the projection method ${\rm Hodge1Lap_{proj}}$ is both sign and basis invariant, and we consider the subspace spanned by eigenvectors of zero eigenvalues. The sign-invariant method ${\rm Hodge1Lap_{abs}}$ simply takes absolute values of eigenvectors element-wise, and uses a MLP to encode eigenvectors and eigenvalues. We additionally apply a MLP to learn eigenvalue-eigenvector pairs of Hodge-1 Laplacians in our Hodge1Lap implementation. In practice, we adopt the combination of these two methods; if not specific, we report the results of the combined variant.

\paragraph{RWMP.} We use the cosine similarities of the node features as our distance metric. The drop probability follows \cref{probabilityRWMPused}. In each layer, we use a normal GINE layer without RWMP and a GINE layer with RWMP and add their features as the final representation of the layer. The GINE~\citep{PretrainGINE} model with Hodge1Lap and RWMP achieves highly competitive performance on Zinc, outperforming GPS~\citep{GPS} and Specformer~\citep{Specformer}.

\paragraph{Other implementation details.} 

Some other implementation details and discussions are summarized below.

\begin{itemize}
    \item For SSWL+, we add edge PE/SE features embedded by a unique linear projection to the original edge features in every layer.
    \item For GRIT, the PE is embedded by a linear layer and then directly added to the edge features. For virtual edges, the PE values are padded as zeros.
    \item Our method is can be applied to higher-order GNNs, including simplicial networks~\citep{MPSimplicialN, CWNetworks} and $2$-$\delta$-FWL equivalent classes~\citep{WLgoSparse}, while others (SPD, RD) cannot. Our method is also more sparse.
    \item Our method can be applied to any base model without increasing much computation cost and can be applied together with other levels of PE/SE.
    \item There are more than one way to integrate PE/SE into the models, while we choose the most simple ones: we treat PE/SE as initialization of edge features by concatenating or adding them to the original edge features; they can also be used as in graphormer (as attetion biases), or any convolution/attention computation in simplicial networks.
\end{itemize}

\subsection{Complexity and applicability analysis} 

Theoretically, the complexity to generate Hodge1Lap and EdgeRWSE are both $O(m^2)$, where $m$ is the number of edges. For small or sparse graphs, they share similar complexity with node-level PE such as LapPE and RWSE. For example, the average generation times on Zinc computed by RTX3090 are: RWSE ($23$s), EdgeRWSE ($32$s), Hodge1Lap ($28$s). Even for extremely large graphs with millions of nodes, we can handle by sampling subgraphs as we are only interested in the local structure information for some PE/SE. For example, for a $K$-step EdgeRWSE, we can sample a $K$-hop subgraph rooted at the target edge and continue to process, without having to deal with large matrix multiplication. It is noticeable that we only have to \textbf{preprocess once} for every dataset, so the precompute time is negligible compared to experiment time. 

For the computation complexity in the forward pass of models, since all our PE/SE are embedded by light-weight MLPs, the extra computation cost is completely ignorable compared with the whole model. In summary, our methods are able to significantly enhance model performance with extremely small additional computation cost.

Regarding applicability, on the base model side, our Hodge1Lap and EdgeRWSE are universally applicable to any base model as long as edge features are considered. Our methods are orthogonal to other PE/SE and can be used together. SOTA methods can always be facilitated with our methods to achieve better performance. On the data type side, as we have emphasized, our methods are applicable to (both directed and undirected) graphs, simplicial complexes, and other discrete topological data structures, e.g. cellular complexes.


\end{document}